\documentclass{article}
\usepackage{iclr2026_conference,times}
\iclrpreprint

\usepackage{hyperref}
\usepackage{url}

\usepackage{microtype}
\usepackage{graphicx}
\usepackage{subcaption}
\usepackage{booktabs}

\usepackage{amsmath}
\usepackage{amssymb}
\usepackage{mathtools}
\usepackage{amsthm}
\usepackage{wrapfig}
\usepackage{algorithm}
\usepackage{algorithmic}
\usepackage{booktabs, tabularx}
\usepackage{comment}
\usepackage{siunitx}
\usepackage{multirow}
\usepackage{ulem}
\usepackage[utf8]{inputenc}
\usepackage{titletoc}
\usepackage[T1]{fontenc}

\usepackage[capitalize,noabbrev]{cleveref}

\theoremstyle{plain}
\newtheorem{theorem}{Theorem}[section]

\newtheorem{lemma}[theorem]{Lemma}
\newtheorem{corollary}[theorem]{Corollary}
\theoremstyle{definition}

\newtheorem{assumption}[theorem]{Assumption}
\theoremstyle{remark}

\usepackage[textsize=tiny]{todonotes}

\usepackage[dvipsnames]{xcolor}

\title{Out-of-Distribution Generalisation
with Sequence Models in Offline Multi-Agent Reinforcement Learning}

\author{
\parbox{0.95\textwidth}{
\centering
\small
\vspace{0.5cm}
Oussama Hidaoui$^{1,2*}$ \;
Omer Ebead$^{1,2*}$ \;
Ulrich Mbou Sob$^{1*}$ \;
Siddarth Singh$^{1*}$ \;
Claude Formanek$^{1*}$ \;
Felix Chalumeau$^{1}$ \;
Omayma Mahjoub$^{1}$ \;
Sasha Abramowitz$^{1}$ \;
Ruan John de Kock$^{1}$ \;
Wiem Khlifi$^{1}$ \;
Louay Ben Nessir$^{1}$ \;
Simon Verster Du Toit$^{1}$ \;
Daniel Rajaonarivonivelomanantsoa$^{1,3}$ \;
Asim Awad Osman$^{1,2}$ \;
Arnol Manuel Fokam$^{1}$ \;
Refiloe Shabe$^{1}$ \;
Arnu Pretorius$^{1}$ \; \\[0.2cm]
$^{1}$ InstaDeep \quad
$^{2}$ AIMS \quad
$^{3}$ Stellenbosch University \\[-2pt]
$^{*}$ Equal contribution \quad
Corresponding author: \texttt{u.mbousob@instadeep.com}
}
}

\begin{document}

\maketitle
\begin{abstract}
Generalising to unseen tasks remains a fundamental challenge in offline multi-agent reinforcement learning (MARL). In this work, we present a principled analysis of zero-shot task generalisation in the offline setting and conduct an extensive empirical investigation into the scaling behaviour governing task diversity, dataset size, and network capacity. To facilitate this study, we extend offline sequence modelling architectures to handle multi-task observation and action spaces alongside variable agent counts across tasks. Our primary finding is that scaling task diversity—rather than sheer dataset size—is the dominant factor in achieving robust zero-shot transfer. Through large-scale experiments across four challenging environments (Connector, RWARE, SMAX, and LBF), we demonstrate that our multi-task approach achieves a mean improvement of \textbf{3.2x} on held-out test tasks compared to single-task models and consistently outperforms strong behaviour cloning baselines. These results suggest that the development of generalisable MARL agents should prioritise the diversity of the training distribution with varying numbers of agents, providing a roadmap for scaling offline MARL effectively.
\end{abstract}
\begin{figure}[th]
    \centering
    \includegraphics[width=0.6\linewidth]{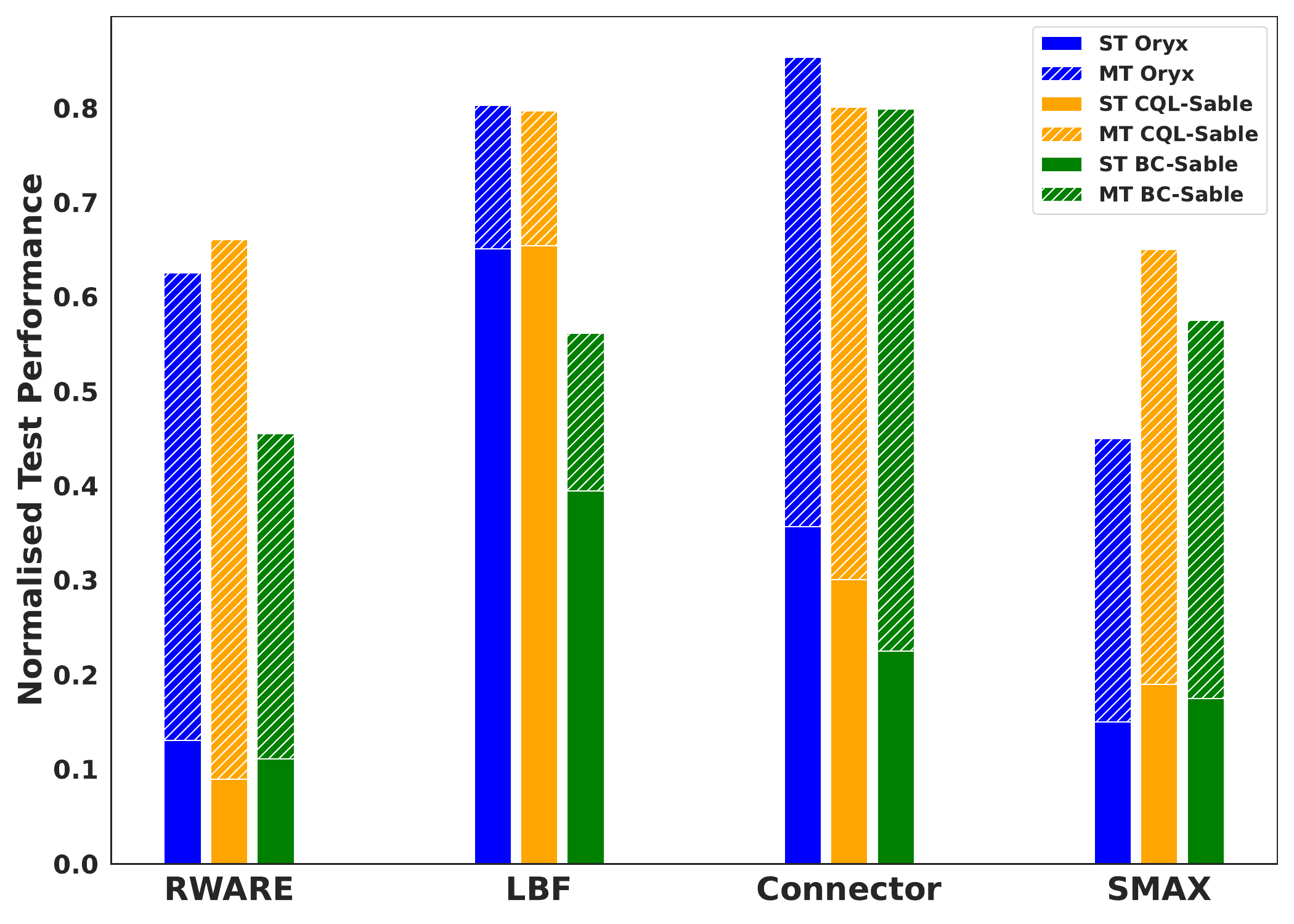} 
    \caption{\textit{Test task performance difference between single-task and multi-task sequence models.} Three multi-agent sequence models---CQL-Sable, BC-Sable and Oryx~\citep{formanek2025oryx}---were trained using either a single task (ST) or a set of multiple training tasks (MT). Average zero-shot performance was measured across a held-out set of test tasks. The upper bar represents the performance gap between ST and MT sequence models on unseen test tasks. \textbf{Averaged across all three algorithms, we observe a test performance increase of approximately 5.4x on \texttt{RWARE}, 1.3x on \texttt{LBF}, 2.9x on \texttt{Connector} and 3.2x on \texttt{SMAX}.}}
    \label{fig:abstract}
\end{figure}

\section{Introduction}

Building agents that generalise to tasks beyond those present in their training data is a central challenge in reinforcement learning (RL), and a prerequisite for deploying agents in the real world~\citep {kirk2023survey}. In many domains, collecting fresh data online by interacting with a live system is costly or risky, so practitioners turn to offline RL from logged trajectories~\citep{levine2020offline}. While single-agent work has studied the train--test generalisation gap~\citep{medirattageneralization}, the multi-agent case remains under-explored. Despite recent progress in offline multi-agent reinforcement learning (MARL)~\citep{yang2021believe,shao2023counterfactual,meng2023offline,li2025dof,formanek2025oryx}, prior work have largely been restricted to training and evaluating on the same task, without examining generalisation to unseen tasks. 


In this work, we study the generalisation of single-task models, and then introduce a challenging multi-task benchmark for offline MARL, which builds on widely adopted MARL environments including \texttt{Level Based Foraging} (\texttt{LBF}), \texttt{Multi-Robot Warehouse} (\texttt{RWARE})~\citep{papoudakis2021benchmarking}, \texttt{Connector}~\citep{bonnetjumanji} and \texttt{SMAX}~\citep{jaxmarl}. Using this benchmark, we evaluate three state-of-the-art offline multi-agent sequence models, namely Oryx~\citep{formanek2025oryx}, and two offline versions of Sable~\citep{mahjoub2025sable} (CQL-Sable and BC-Sable). Across all four environments, we show that these models exhibit poor generalisation when trained only on a dataset from a single task. However, when trained \textit{simultaneously} on a dataset consisting of a diverse set of multiple tasks, their ability to zero-shot transfer to unseen tasks significantly improves. Furthermore, we verify that similar results cannot be obtained by simply increasing the size of the dataset for a fixed number of tasks, but rather that the key driver is increasing dataset diversity by adding more tasks, which consistently leads to improved test performance. Finally, we find that for a fixed data budget, increasing the model's capacity has a positive impact on generalisation for challenging tasks.


Our findings show that offline MARL sequence models trained on diverse multi-task datasets exhibit promising generalisation to unseen tasks compared to single-task alternatives. In contrast to the findings of~\citet{medirattageneralization}, we observe that offline sequence models consistently outperform behaviour cloning, which remains a surprisingly strong baseline. Finally, our work provides the first promising evidence of performance scaling~\citep{hilton2023scalinglawssingleagentreinforcement} with increasing model capacity for offline MARL on challenging unseen tasks. 

In summary, our main contributions are as follows:
\begin{itemize}
    \item We develop a challenging multi-task offline MARL evaluation suite, which includes \num{28} large training sets and \num{34} test sets across four diverse environments, including \texttt{LBF}, \texttt{Connector}, \texttt{RWARE} and \texttt{SMAX}.
    \item We conduct an in-depth empirical analysis of out-of-distribution generalisation as a function of task diversity, data and model size, showing that zero-shot generalisation of sequence models scales significantly (3.2x on average) as the number of tasks in the training data increases, that sheer dataset size is not the main driver of test performance, and that for difficult tasks, model scaling positively affects generalisation.
    \item We provide a theoretical analysis of generalisation in offline multi-task MARL, establishing \textit{task coverage} to be a key driver of out-of-distribution performance gains, formally grounding our empirical findings on the roles of task diversity and model capacity.
    \item \textbf{All of our (anonymised) code is available for download.\footnote{\url{https://sites.google.com/view/multi-task-marl}}  We will make all of our code and datasets publicly available upon publication.}
\end{itemize}

\begin{figure*}
    \centering
    \includegraphics[width=0.95\linewidth]{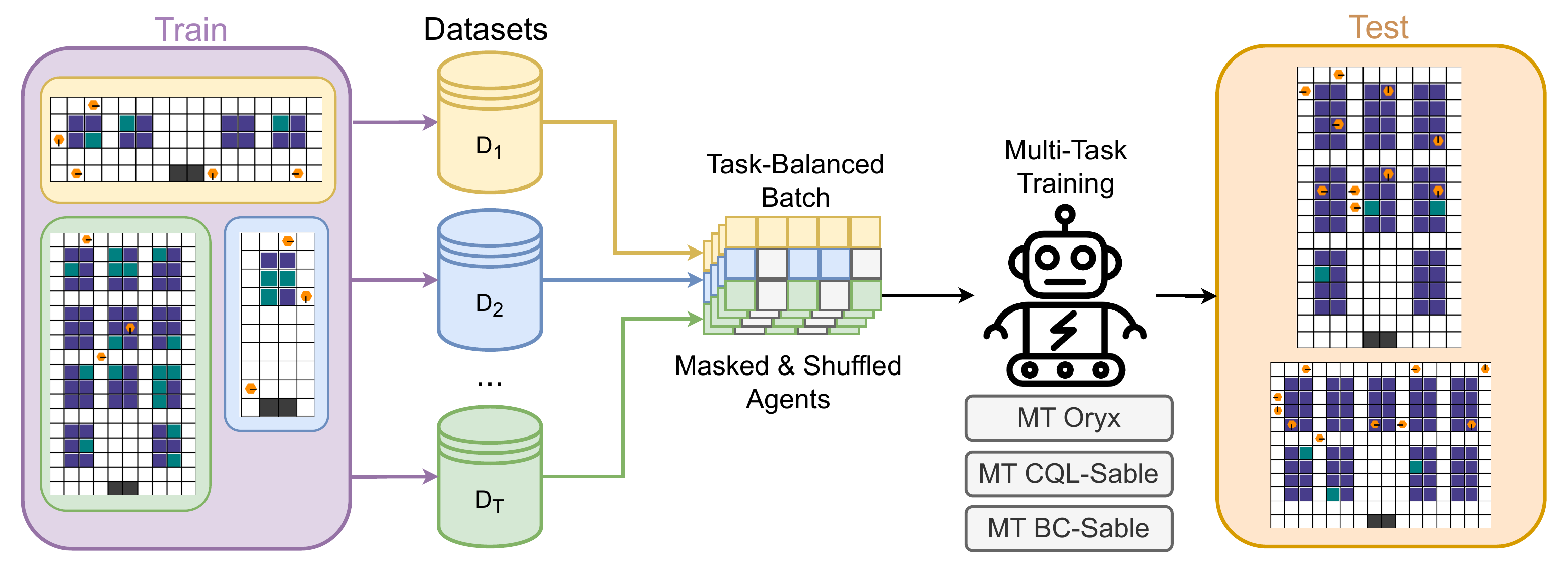}
    \caption{\textit{Our offline multi-task multi-agent training and testing setup}. In this setup, there is a set of training tasks, each with a static dataset of pre-collected trajectories that together form a diverse multi-task dataset. This dataset is then used for training, without any additional online interactions with either the training tasks or the testing tasks. At evaluation time, the trained model is evaluated on each of the held-out test tasks, and the average test performance is calculated. In this illustration, we used RWARE tasks in the train and test sets.}
    \label{fig:visual-problem-forrmulation}
\end{figure*}

\section{Multi-Task Sequence Modelling for Offline MARL} \label{sec:method}

\subsection{Preliminaries}
\textbf{Problem formulation.} 
We formalise a cooperative MARL \textit{task} as a Decentralised Partially Observable Markov Decision Process (Dec-POMDP) \citep{kaelbling1998planning}, defined by the tuple 
$\mathcal{M}_{\dagger} = \langle \mathcal{N}, \mathcal{S}, \boldsymbol{\mathcal{A}}, P, R, \{\Omega^i\}_{i \in \mathcal{N}}, \{E_i\}_{i \in \mathcal{N}}, \gamma \rangle$, where $\dagger$ denotes the particular task selected from an environment. For example, in a simulated robotic warehouse environment, a task corresponds to a specific warehouse layout and the number of robotic workers collecting and depositing requested shelf items. 
At each timestep $t$ within a task, the environment is in state $s_t \in \mathcal{S}$. 
Each agent $i \in \mathcal{N}$ selects an action $a^i_t \in \mathcal{A}^i$ based on its local action-observation history 
$\tau^i_t = (o^i_0, a^i_0, \dots, o^i_t)$. 
The agents’ actions form a joint action $\boldsymbol{a}_t \in \boldsymbol{\mathcal{A}} = \prod_{i \in \mathcal{N}} \mathcal{A}^i$, 
which, when executed, yields a shared reward $r_t = R(s_t, \boldsymbol{a}_t)$, transitions the environment to 
$s_{t+1} \sim P(\cdot | s_t, \boldsymbol{a}_t)$, and provides each agent $i$ with a new observation 
$o^i_{t+1} \sim E_i(\cdot | s_{t+1}, \boldsymbol{a}_t)$. 
The agent then updates its history as $\tau^i_{t+1} = (\tau^i_t, a^i_t, o^i_{t+1})$. 
The task-specific objective is to learn a joint policy $\pi(\boldsymbol{a} | \boldsymbol{\tau})$ that maximises the expected discounted return over a horizon of timesteps $H$:
$
J_{\dagger
}(\boldsymbol{\pi}) = \mathbb{E}_{\pi} \!\left[ \sum_{t=0}^{H} \gamma^{t} r_t \right].$

To create our train-test evaluation setup, we consider offline datasets $\mathcal{D}_\text{train} = \{\mathcal{D}_\dagger : \dagger \in \mathcal{T}_\text{train}\}$ collected from a set of training tasks $\mathcal{T}_\text{train}$.  
Our objective is to learn a single joint policy $\boldsymbol{\pi}_\text{train}$, using only the fixed multi-task training data (i.e.\ without any additional online interaction), to maximise the expected zero-shot performance on a set of \textit{unseen} test tasks $\mathcal{T}_\text{test}$, given as  
\begin{equation*}
J(\boldsymbol{\pi}) = \mathbb{E}_{\dagger \sim \mathcal{T}_{\text{test}}} \!\left[ J_\dagger(\boldsymbol{\pi}) | \boldsymbol{\pi} = \boldsymbol{\pi}_\text{train} \right].    
\end{equation*}
By optimising the above objective, we are minimising the generalisation gap between training and test tasks.
A simplified visual representation of the problem setting is depicted in \autoref{fig:visual-problem-forrmulation}.

\textbf{Multi-Agent Sequence Models.}
Centralised control, where a single policy outputs the joint action, is theoretically optimal but scales poorly due to an exponential growth of the action space~\citep{de2025exponentially}.
However, autoregressive factorisation is an efficient way to parametrise the joint policy, by expressing the joint distribution over $n$ agents as a product of conditional distributions:
\begin{equation*}
\pi(\boldsymbol{a} | \boldsymbol{\tau}) = 
\prod_{k=1}^{n} \pi^{i_k}\!\left(a^{i_k} \mid \boldsymbol{\tau}, a^{i_1}, \dots, a^{i_{k-1}}\right).
\end{equation*}
Here $i_{k}$ denotes an agent index from an ordered set $\{i_1, \dots, i_n\} \in S_n$, where $S_n$ is the set of permutations of $\{1, ..., n\}$.
This factorisation decomposes joint decision-making into a sequence of conditional actions, 
enabling scalable coordination, efficient parallel training and, in certain cases, providing desirable convergence properties~\citep{zhong2024heterogeneous}.
Sequence models provide a natural parameterisation of such policies, closely mirroring the autoregressive next token prediction process in text and image generation, and have been demonstrated to work well on a large range of MARL settings~\citep{wen2022multi,mahjoub2025sable,daniel2024multi,formanek2025oryx}. 

\subsection{Multi-Task Sequence Models for Offline MARL}
\label{ssec:mt-sec-models}

Building on existing multi-agent sequence models for offline MARL \citep{formanek2025oryx}, we propose a few simple yet essential modifications that enable training on multiple tasks with varying numbers of agents simultaneously, while allowing seamless zero-shot transfer. By design, our multi-task sequence models do not receive explicit task IDs or have task-specific output heads, since this would limit their zero-shot transferability to new tasks. Instead, our models have to infer task information from observations, agent counts, and environment dynamics. 

\textbf{Dynamic agent padding, shuffling and masking.}
In order to dynamically handle variable numbers of agents across tasks, we zero-pad the inputs for absent agents and mask their contributions in the loss. Moreover, we randomise the ordering of both active and inactive agents at each training update, which encourages the model to share representations and transfer knowledge across agents. 

\textbf{Multi-task training loss.}
Given a set of training tasks $\mathcal{T}_\text{train}=\{\dagger_1,\ldots,\dagger_M\}$, with offline buffers $\{\mathcal{D}_{\dagger}\}_{\tau\in\mathcal{T}_\text{train}}$, we train a multi-task sequence model by minimising the average per-task loss
\begin{equation*}
\min_{\theta}\;\; \frac{1}{M}\sum_{\dagger\in\mathcal{T}_\text{train}}
\Big[\mathcal{L}(\theta;\mathcal{D}_\dagger)\Big].
\label{eq:mt-erm}
\end{equation*}
The loss $\mathcal{L}$ changes depending on the algorithm used, which in our case includes autoregressive versions of behaviour cloning (BC)~\citep{pomerleau1988alvinn,bain1995framework}, Conservative Q-learning (CQL)~\citep{kumar2020conservative} and Implicit Constraint Q-learning (ICQ)~\citep{yang2021believe,formanek2025oryx}.

\textbf{Task-balanced batching.}
For each training update, we build a single unified mini-batch by evenly sampling across different tasks. Given a batch size $B$, we compute $q=\lfloor~{B}~/~{|\mathcal{T}_\text{train}|}~\rfloor$ and $r~=~B-q|\mathcal{T}_\text{train}|$. Each task $\dagger \in \mathcal{T}_\text{train}$, contributes $q$ samples; the remaining $r$ samples are assigned by round-robin across tasks up to the value $r$.
This yields stochastic gradients that are unbiased over a uniform mixture of tasks (each task equally weighted), rather than a size-weighted mixture. The resulting task-balanced batching also mitigates ``head-task'' dominance seen with dataset-proportional sampling, a known issue in domain generalisation from long-tailed datasets \citep{cui2019class}.

\textbf{Value function learning via classification.}
To mitigate gradient interference from varying reward scales across tasks, we replace scalar TD regression with a classification objective. Specifically, we use HL-Gauss \citep{imani2018improving,farebrother2024stop}, which projects each scalar TD target onto a discrete support by smoothing with a Gaussian distribution, and trains the value function with categorical cross-entropy over the resulting histogram. This choice, consistent with prior multi-task training architectures \citep{kumar2022offline}, improves stability and reduces loss-scale sensitivity compared to mean squared error.

We provide ablation studies validating these specific design choices in \autoref{sec:design-choices-ablations}.

\section{Empirical Analysis}
\label{sec:empirical-analysis}

\subsection{Experimental design}

\textbf{Tasks.} We considered four challenging MARL environments, \texttt{LBF}, \texttt{RWARE}~\citep{papoudakis2021benchmarking}, \texttt{Connector}~\citep{bonnetjumanji} and \texttt{SMAX}~\citep{jaxmarl}. These are all widely used MARL benchmarks, with \texttt{RWARE} also proposed as a suitable multi-task benchmark in previous work~\citep{lukas2022rwaremt} and \texttt{Connector} being of particular interest due to its agent scaling properties (see \citet{formanek2025oryx}). For each environment, we selected several different level configurations to serve as distinct tasks. These tasks were then partitioned into train and test sets (see \autoref{sec:task-splits}), taking care to ensure that the test tasks were different in meaningful ways to the training tasks, as shown in \autoref{fig:train-test scatter}.

\begin{figure*}
    \centering
    \begin{subfigure}[b]{0.24\linewidth}
        \includegraphics[width=\linewidth]{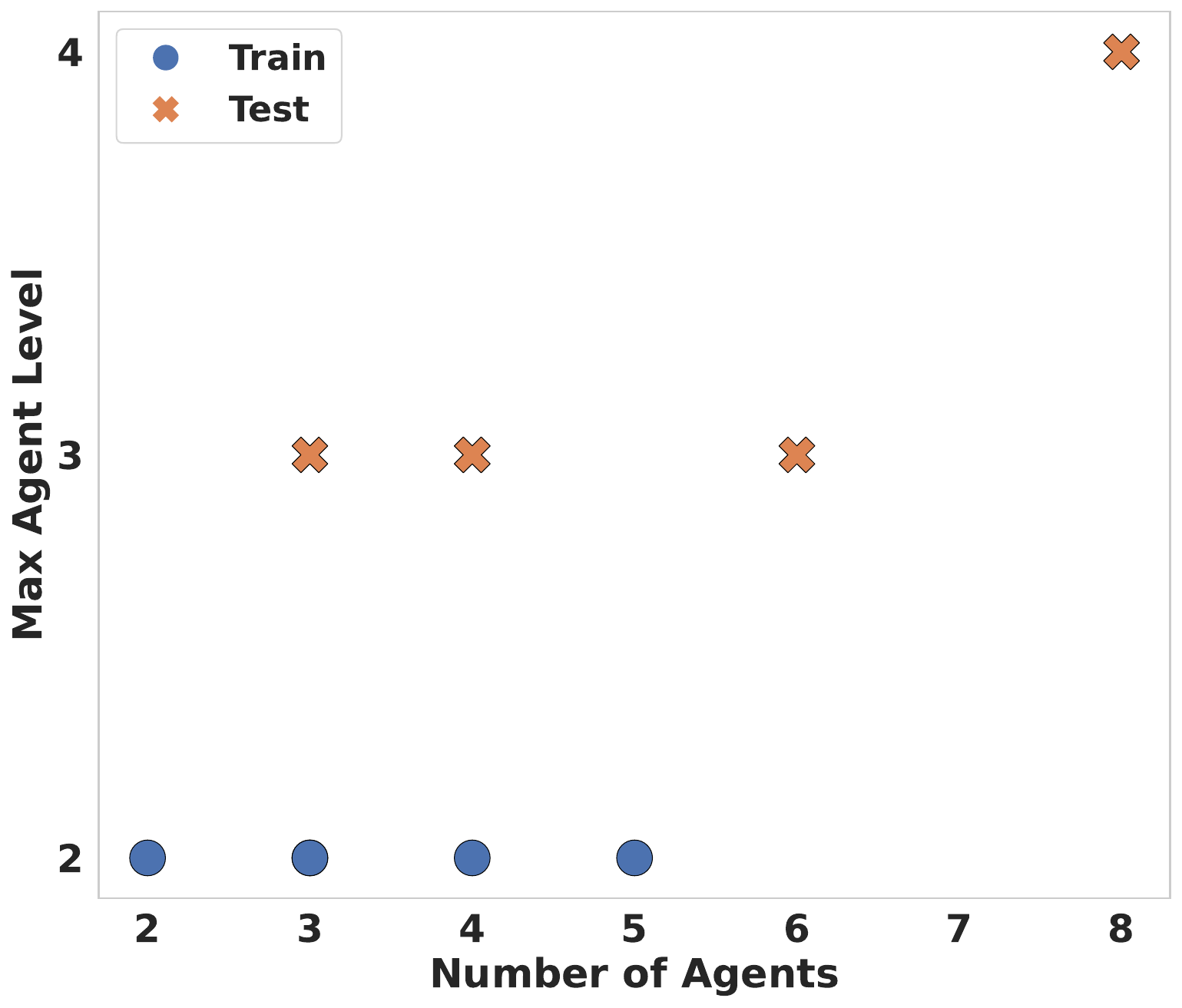}
        \caption{\texttt{LBF}}
    \end{subfigure}
    \hfill
    \begin{subfigure}[b]{0.24\linewidth}
        \includegraphics[width=\linewidth]{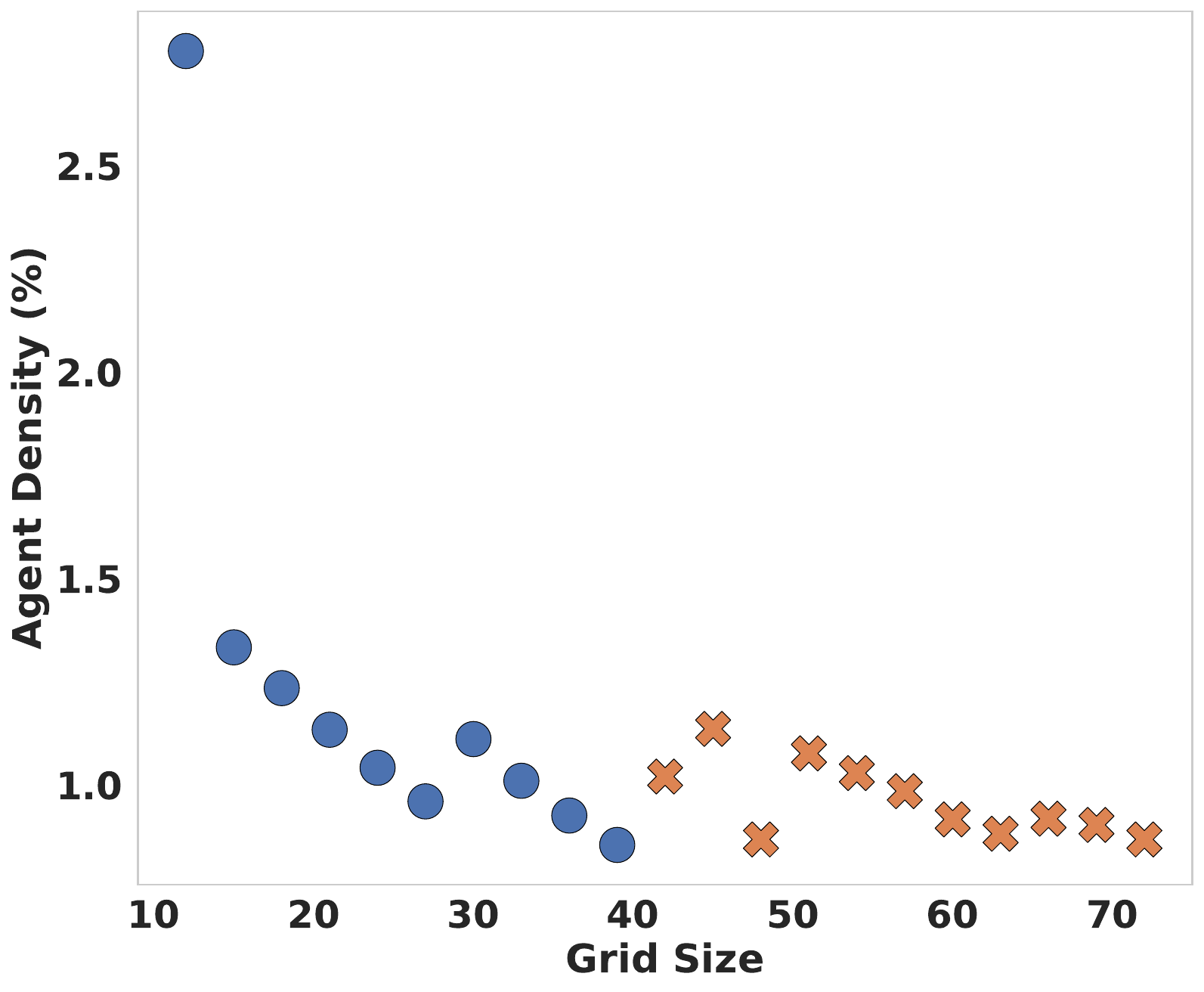}
        \caption{\texttt{Connector}}
    \end{subfigure}
    \hfill
    \begin{subfigure}[b]{0.24\linewidth}
        \includegraphics[width=\linewidth]{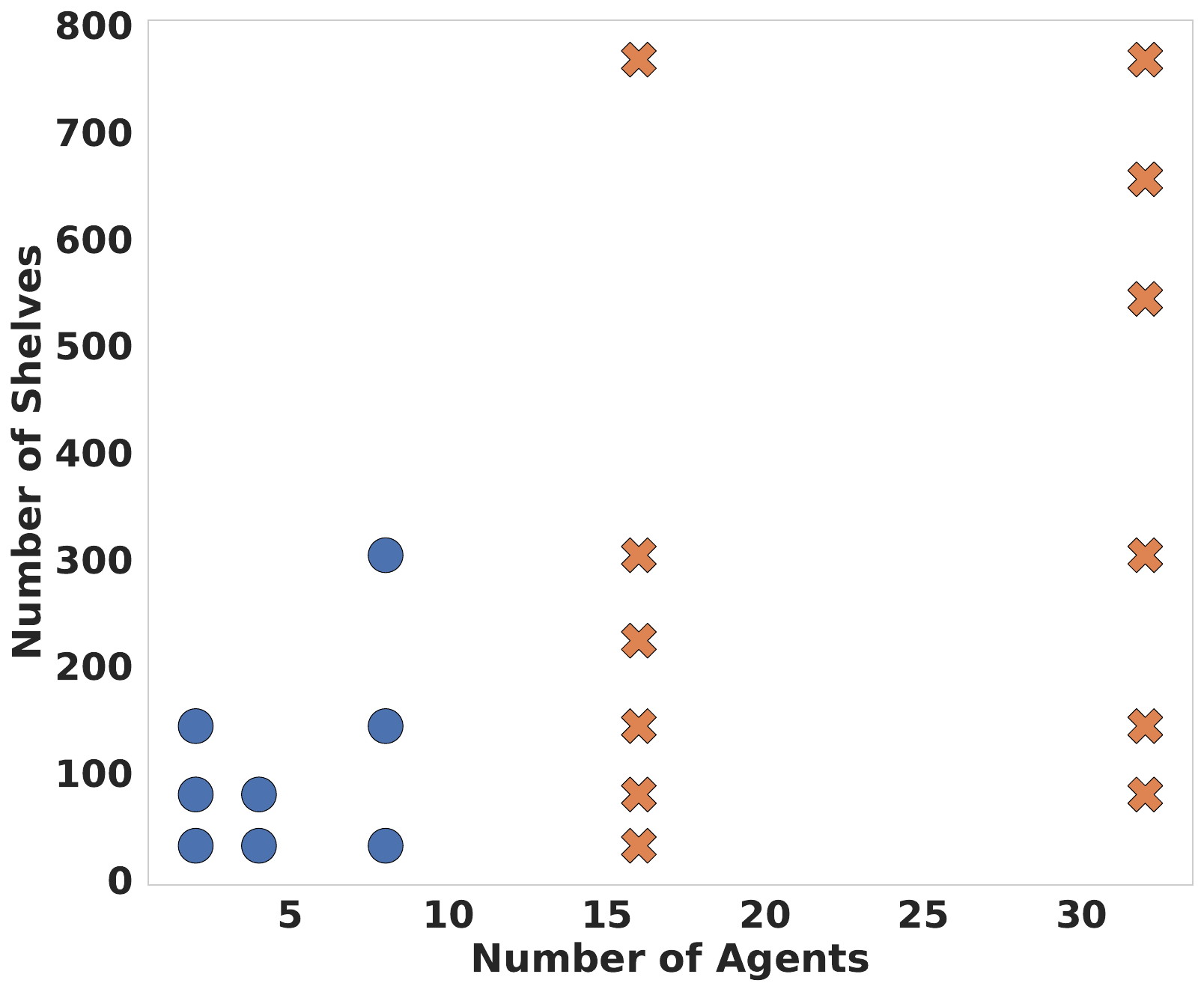}
        \caption{\texttt{RWARE}}
    \end{subfigure}
    \hfill 
    \begin{subfigure}[b]{0.24\linewidth}
        \includegraphics[width=\linewidth]{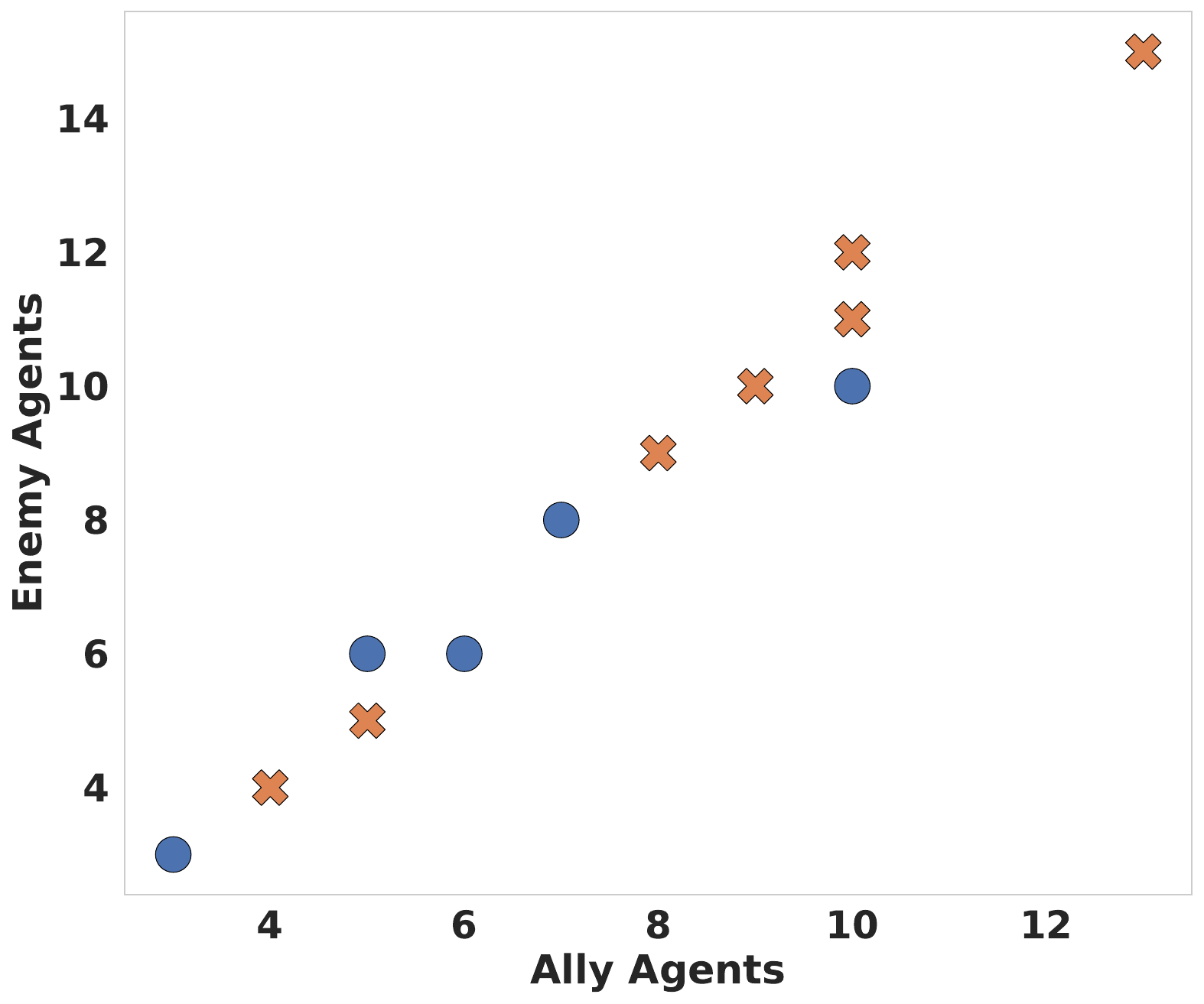}
        \caption{\texttt{SMAX}}
    \end{subfigure}
     \caption{\textit{Distributional shift between train and test tasks.} Each point represents a task, with key task properties plotted to illustrate distributional differences. In \texttt{LBF}, we plot the number of agents against the maximum agent level; in \texttt{Connector}, grid size against agent density; in \texttt{RWARE}, the number of agents against the number of shelves; and in \texttt{SMAX}, the number of ally agents against the number of enemy agents. While these dimensions are important for distinguishing tasks, additional parameters vary across tasks but are not shown here (e.g., the layout of shelves in \texttt{RWARE} tasks).}
     \label{fig:train-test scatter}
\end{figure*}


\textbf{Datasets.}
For each task, we construct an offline dataset $\mathcal{D}_\dagger$ by recording a set of rollouts at fixed intervals from an online training run of \textsc{Sable}~\citep{mahjoub2025sable}, a state-of-the-art MARL sequence model. This yields a mixed dataset with the same number of rollouts per task but not necessarily the same number of transitions, since episode lengths differ across tasks, hence the necessity for task-balanced batching. Observations and actions are standardised per environment. For sequence modeling, we sample fixed-length trajectory chunks (context length reported with other hyperparameters in \autoref{sec:hyperparameters}). Rewards are left unclipped during training and for comparability across tasks, we report normalised returns, where each task's episode return is normalised by the final episode return achieved by the online system on that task. Further details on the datasets can be found in \autoref{sec:appendix_datasets}.

\textbf{Algorithms.} 
The first algorithm we consider is an adapted version of Oryx~\citep{formanek2025oryx}, which we modify for multi-task training. As described in \autoref{sec:method}, these modifications include (i) dynamic padding, masking, and agent shuffling, (ii) task-balanced batching, and (iii) value learning using HL-Gauss~\citep{farebrother2024stop}. We refer to this variant as \textbf{Multi-Task (MT) Oryx}.

The second algorithm, \textbf{MT BC-Sable}, is an offline variant of Sable that uses behaviour cloning to train an autoregressive policy, together with dynamic padding and masking of agents and task-balanced batching. The third algorithm, \textbf{MT CQL-Sable}, is another offline variant of Sable that employs an autoregressive version of the CQL loss~\citep{kumar2020conservative}, while incorporating the same multi-task enhancements as MT Oryx.

All three algorithms share the same Sable network backbone; thus, the primary distinction between them lies in the choice of loss function~$\mathcal{L}$. We select CQL due to its demonstrated generalisation and scaling capabilities in the single-agent setting~\citep{kumar2022offline,chebotar2023q}, and behaviour cloning for its strong and competitive generalisation performance reported in prior work~\citep{medirattageneralization}. Hyperparameter details for all three algorithms are provided in \autoref{sec:hyperparameters}, and the computational requirements are described in \autoref{sec:compute_requirements}.

\textbf{Evaluation protocol.} 
In our experiments, we evaluate the expected zero-shot performance of trained models on held-out test tasks. To this end, we compute the absolute episode return~\citep{gorsane2022towards} by evaluating the best-performing checkpoint during training over \num{320} independent episodes and averaging the resulting returns for each test task. 

To enable comparisons across tasks and environments with potentially different reward scales, we normalise the absolute episode return by dividing it by the maximum expected episode return achieved for the corresponding task during online data collection. Each experimental configuration is repeated over three random seeds, and we report the mean and standard deviation across runs.

\begin{figure*}[t]
  \centering

  \begin{subfigure}[t]{0.24\linewidth}
    \centering
    \includegraphics[width=\linewidth]{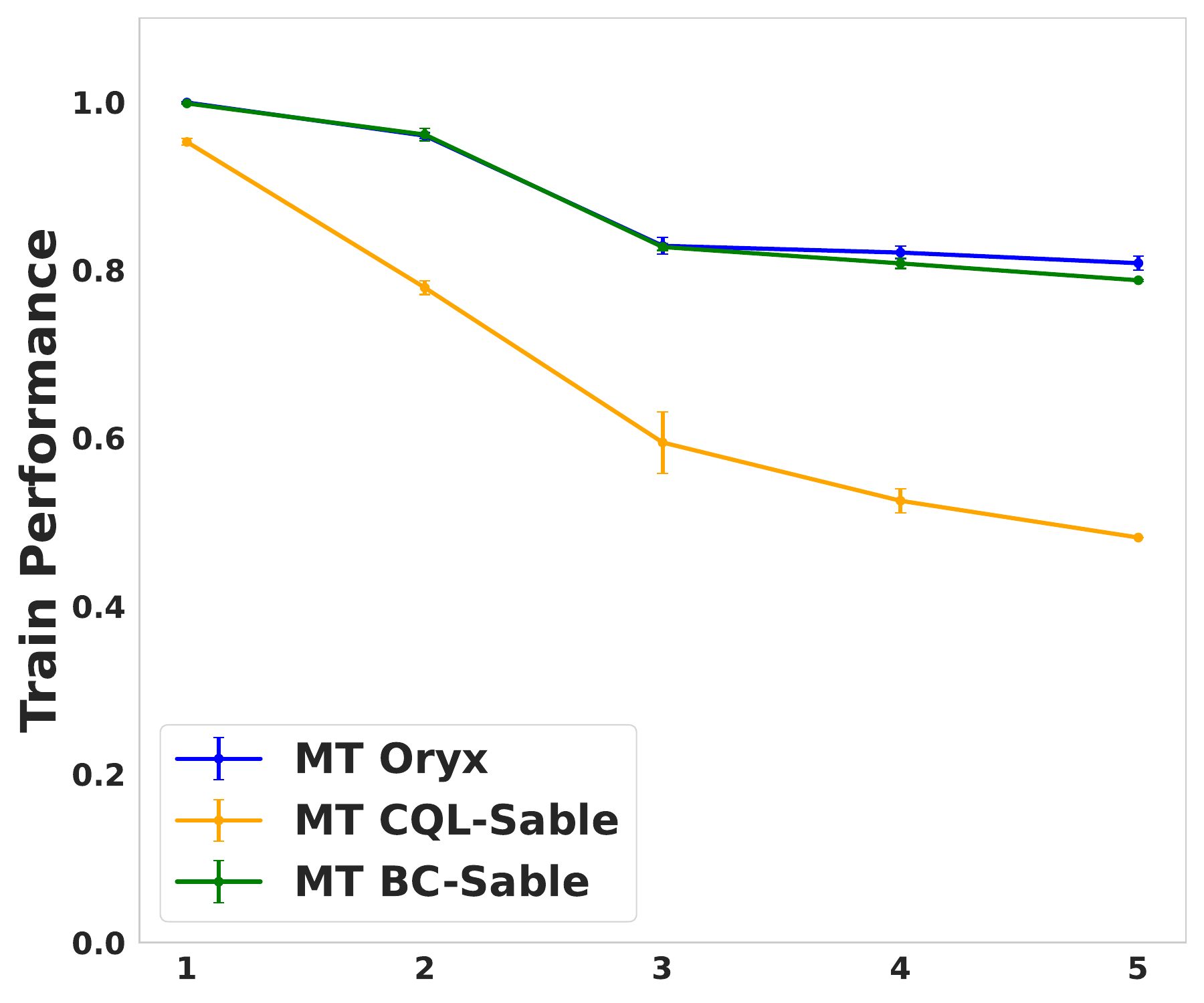}
  \end{subfigure}\hfill
  \begin{subfigure}[t]{0.24\linewidth}
    \centering
    \includegraphics[width=\linewidth]{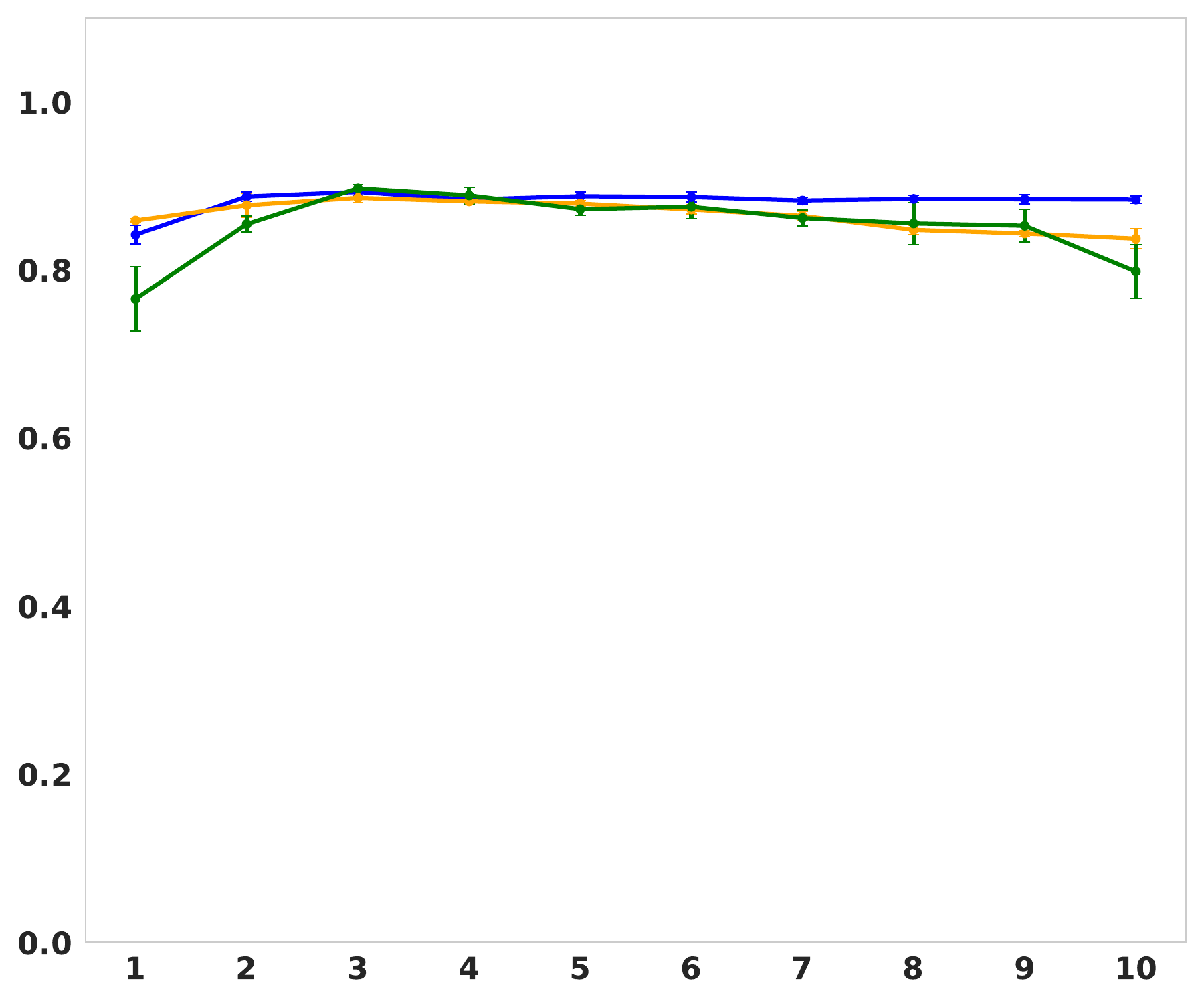}
  \end{subfigure}\hfill
  \begin{subfigure}[t]{0.24\linewidth}
    \centering
    \includegraphics[width=\linewidth]{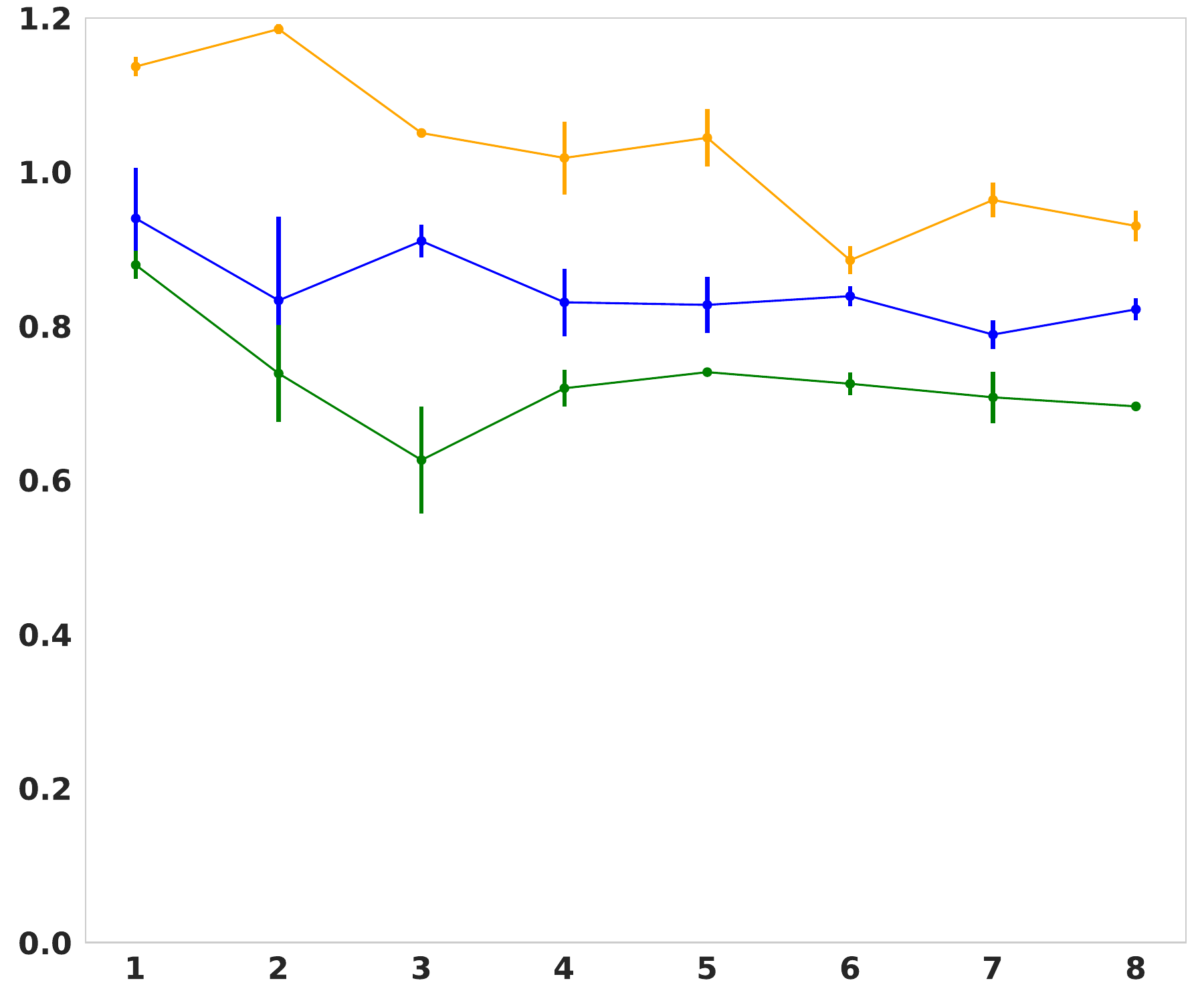}
  \end{subfigure}\hfill
  \begin{subfigure}[t]{0.24\linewidth}
    \centering
    \includegraphics[width=\linewidth]{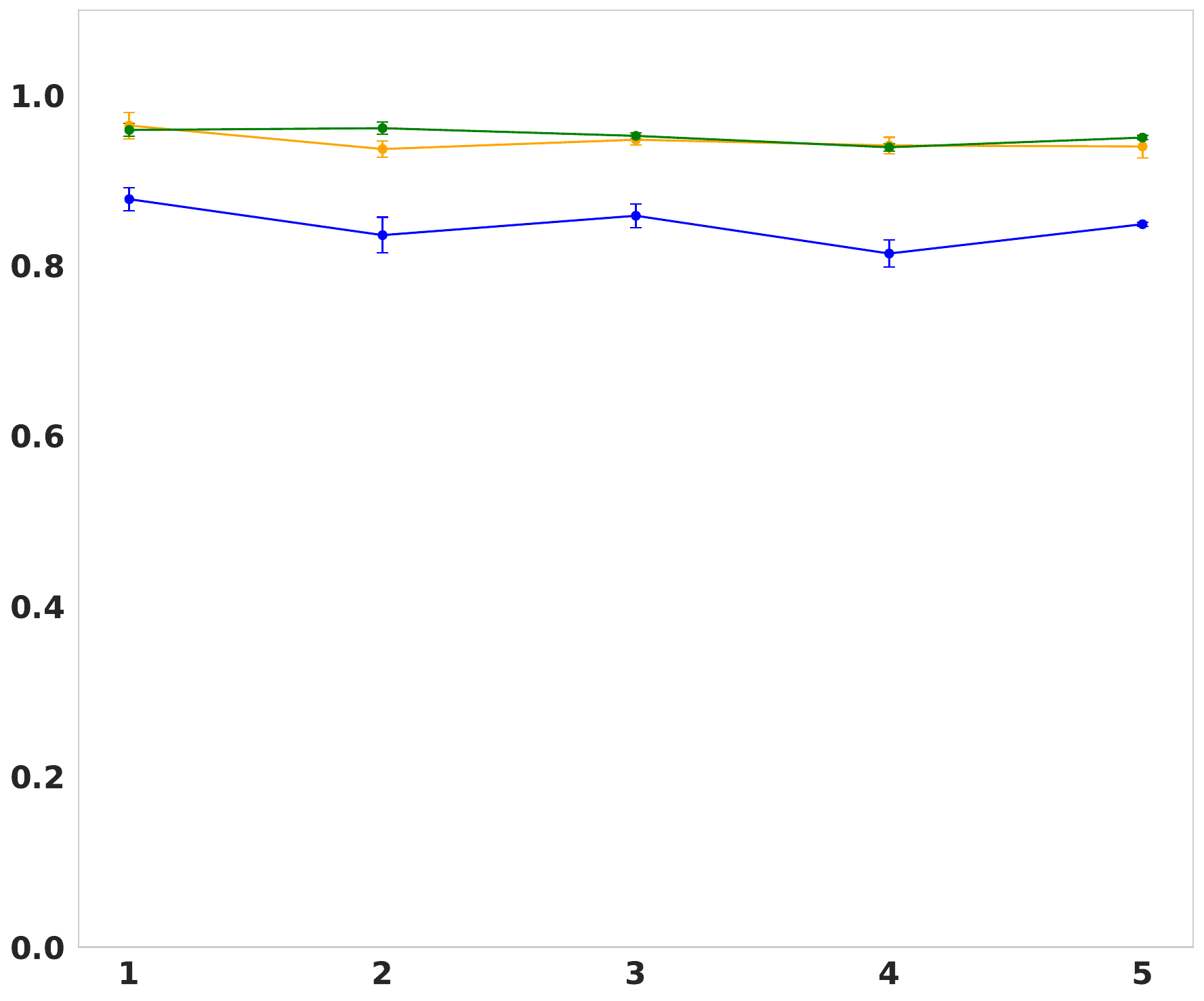}
  \end{subfigure}


  \begin{subfigure}[t]{0.24\linewidth}
    \centering
    \includegraphics[width=\linewidth]{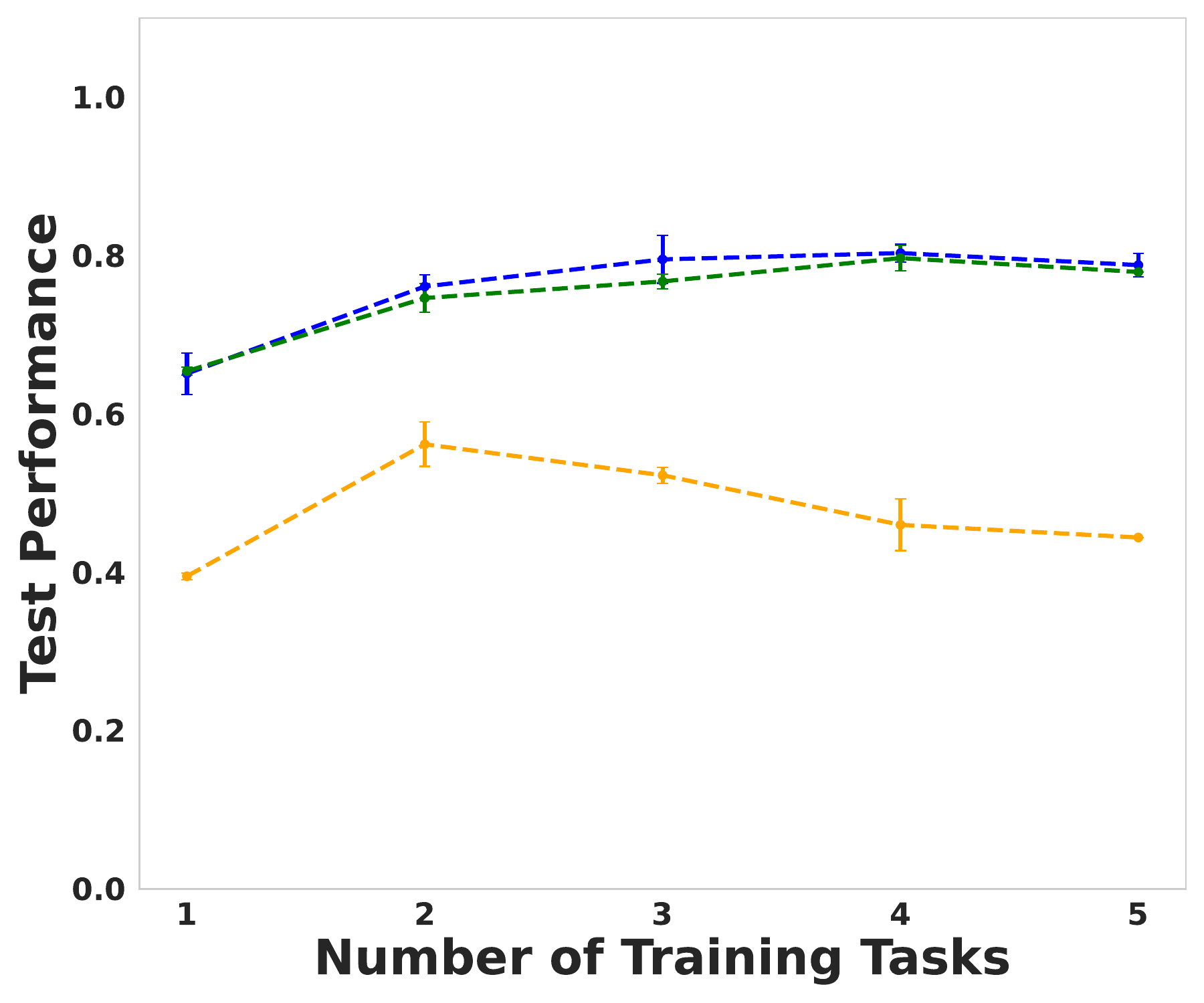}
    \caption{\texttt{LBF}}
  \end{subfigure}\hfill
  \begin{subfigure}[t]{0.24\linewidth}
    \centering
    \includegraphics[width=\linewidth]{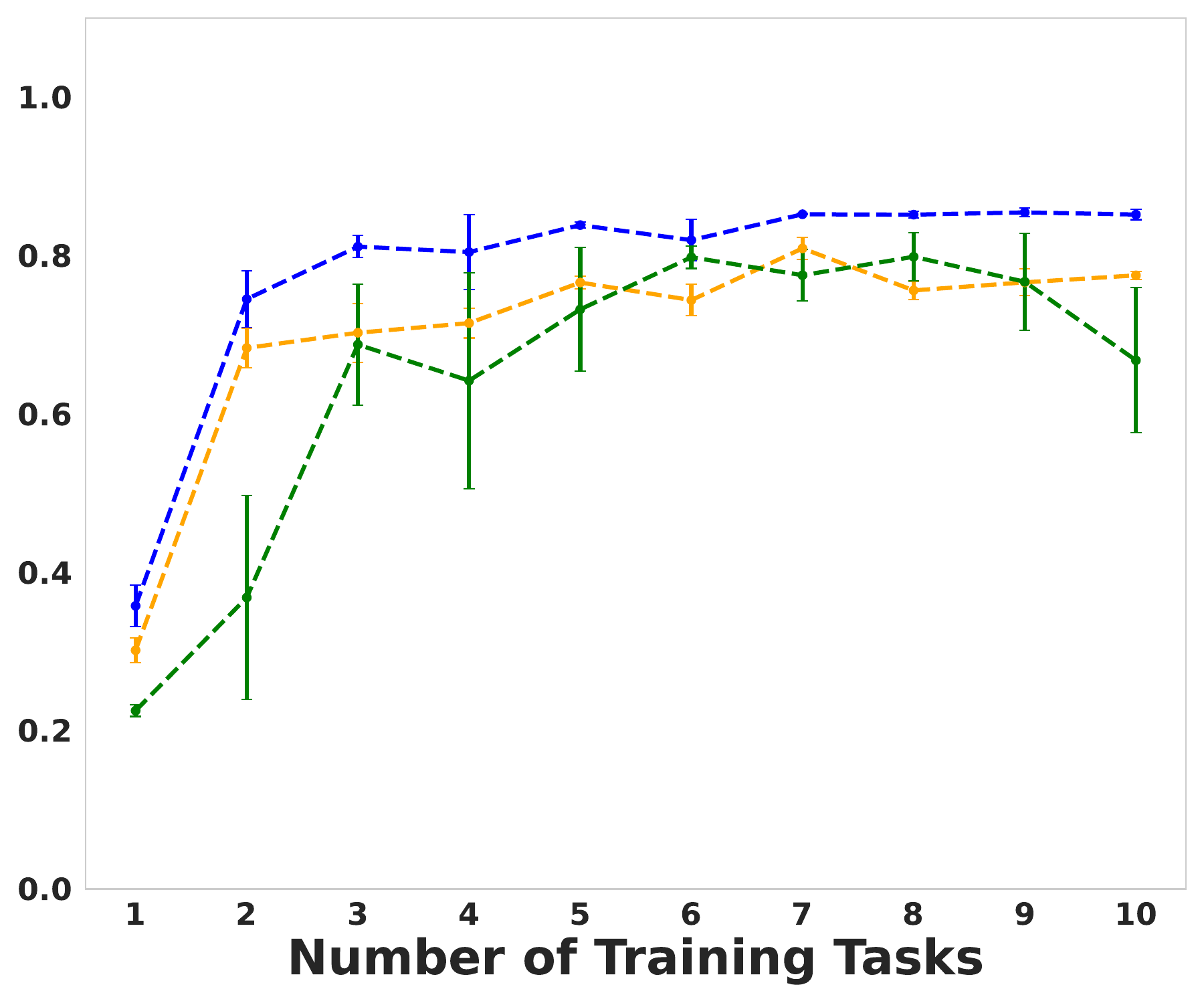}
    \caption{\texttt{Connector}}
  \end{subfigure}\hfill
  \begin{subfigure}[t]{0.24\linewidth}
    \centering
    \includegraphics[width=\linewidth]{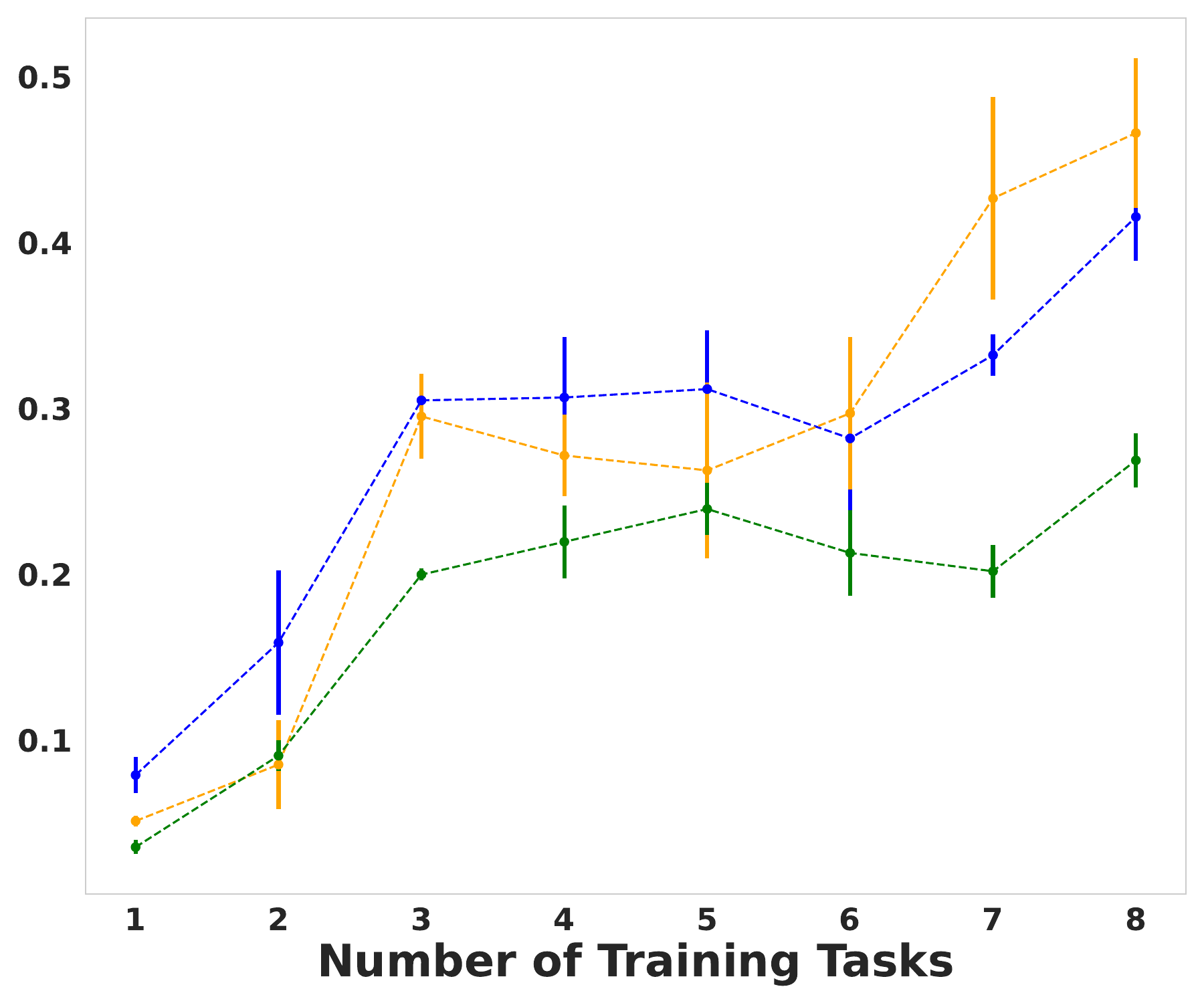}
    \caption{\texttt{RWARE}}
  \end{subfigure}\hfill
  \begin{subfigure}[t]{0.24\linewidth}
    \centering
    \includegraphics[width=\linewidth]{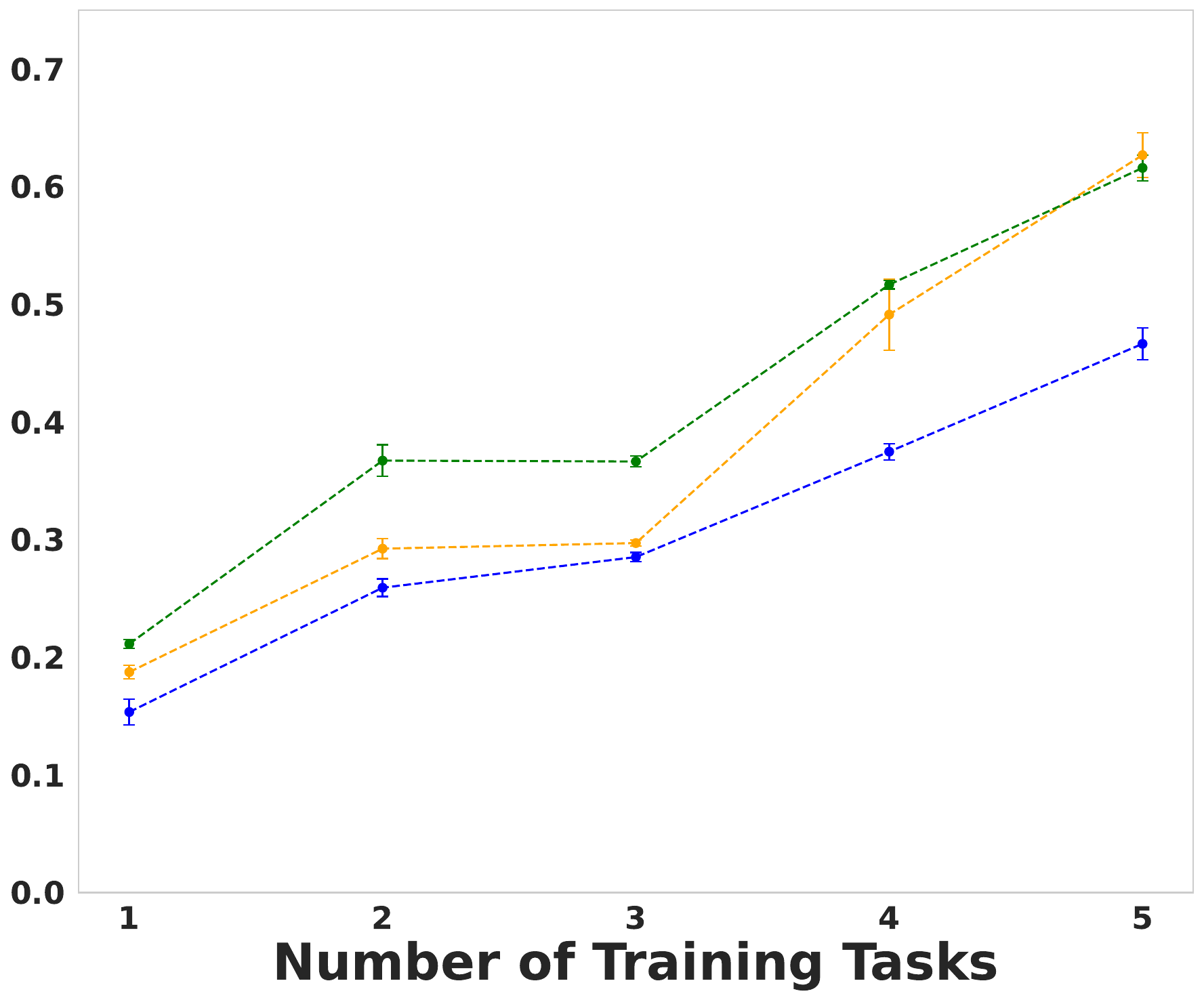}
    \caption{\texttt{SMAX}}
  \end{subfigure}

  \caption{
  \textit{The effect of increasing task diversity on performance.}
  \textbf{Top:} training tasks.
  \textbf{Bottom:} held-out test tasks.
  When training on a single task, performance is high on the training task but generalisation to held-out tasks is poor.
  As the number of training tasks increases, test-task performance steadily improves across all environments.
  }
  \label{fig:side-by-side}
\end{figure*}

\subsection{Multi-Task training improves generalisation}
\textbf{Experiment.} 
We vary the number of tasks in the training set while keeping the test set fixed. Multi-task sequence models are trained on different subsets of the available training data, and their performance is evaluated on both the corresponding training tasks and the held-out test tasks. 

For \texttt{LBF}, we consider a total of \num{5} training tasks; for \texttt{Connector}, \num{10}; for \texttt{RWARE}, \num{8}; and for \texttt{SMAX}, \num{5}. For each environment, we incrementally increase the number of training tasks from \num{1} to the maximum available. Performance as a function of training task count, evaluated on both training and test tasks, is shown in \autoref{fig:side-by-side}.
 
\textbf{Discussion.} We observe that performance on the training tasks remains relatively high across all environments, even as the number of tasks increases. This indicates that the model can successfully learn across multiple tasks simultaneously. Qualitatively, we further observe that the model successfully learns very different multi-agent strategies across distinct tasks, such as switching between exploration and congestion avoidance (see \autoref{sec:viz_mt_policy}). However, in \texttt{RWARE} we note a progressive decline in training performance as the number of training tasks grows. We attribute this to the higher complexity of \texttt{RWARE} tasks and the need to scale model capacity with task diversity to maintain performance. Interestingly, even as train task performance degrades, test task performance improves nearly monotonically as the number of training tasks increases, highlighting the importance of diverse multi-task data for generalisation. On \texttt{LBF}, we observe that MT CQL-Sable’s performance decreases. We hypothesise that this is due to the high proportion of expert trajectories in the \texttt{LBF} dataset, as the data collection policy quickly converges to the optimal behaviour. Prior work has shown that CQL is particularly sensitive to overly narrow or high-quality datasets, and benefits from mixed quality datasets~\citep{schweighofer2022dataset}. To further examine this, we include an ablation on trajectories' quality in \autoref{sec:dataset-quality-ablation}. Additionally, we find that the offline MARL models generalise better than the behaviour cloning model, especially in settings with mixed-quality multi-task data (see \autoref{ap:marl-vs-bc}).

Across all algorithms and environments, performance tends to plateau after a certain number of training tasks. We attribute this saturation to the limits of the current model capacity, pointing to the necessity of scaling up the model size to obtain maximum performance on highly diverse multi-task datasets (see \autoref{sec:datasize_scaling}). To summarise the overall effect of multi-task training with a fixed model size, we measure and report the maximum performance gain on test tasks in \autoref{fig:abstract}. Averaged across all three algorithms, test performance improves by 5.4x on \texttt{RWARE}, 1.3x on \texttt{LBF}, 2.9x on \texttt{Connector}, and 3.2x on \texttt{SMAX}. These results validate the effectiveness of multi-task training as a means of unlocking substantial performance gains on unseen test tasks.

\subsection{Can we further improve generalisation by increasing the size of the datasets and models?}\label{sec:datasize_scaling} 

A natural question that arises is, what is the optimal dataset size and model size for generalisation? Can we improve the generalisation capabilities by simply increasing the size of the dataset for a given set of training tasks? Similarly, can we improve generalisation by increasing the size of the model? To test this, we design two experiments.

\textbf{Experiment (a).} To determine whether increasing the size of the datasets (in terms of number of transitions rather than number of tasks) helps performance, we conducted a sweep over dataset sizes for several multi-task datasets on \texttt{RWARE}.  The results of the sweep are presented in \autoref{fig:dataset_size}. Similar to the results by \citet{medirattageneralization}, we find that there is little evidence that scaling up the number of transitions helps generalisation nearly as much as adding more tasks.

\textbf{Experiment (b).} To study the effect of model size, we train various models with different numbers of parameters, ranging from \num{116}k to \num{13}M, using the \texttt{RWARE} dataset. For simplicity, we mainly vary the embedding dimension of the model's encoder-decoder network from \num{64} (\num{116}k parameters) to \num{768} (\num{13}M parameters). We report the average episode return, normalised by the online performance, on both the training and test tasks in \autoref{fig:network_scaling}. We show in \autoref{sec:connnector_model_scaling} similar results for \texttt{LBF}, \texttt{Connector} and \texttt{SMAX}.

\begin{figure*}[t]
    \centering
    \begin{subfigure}[t]{0.47\linewidth}
        \includegraphics[width=\linewidth]{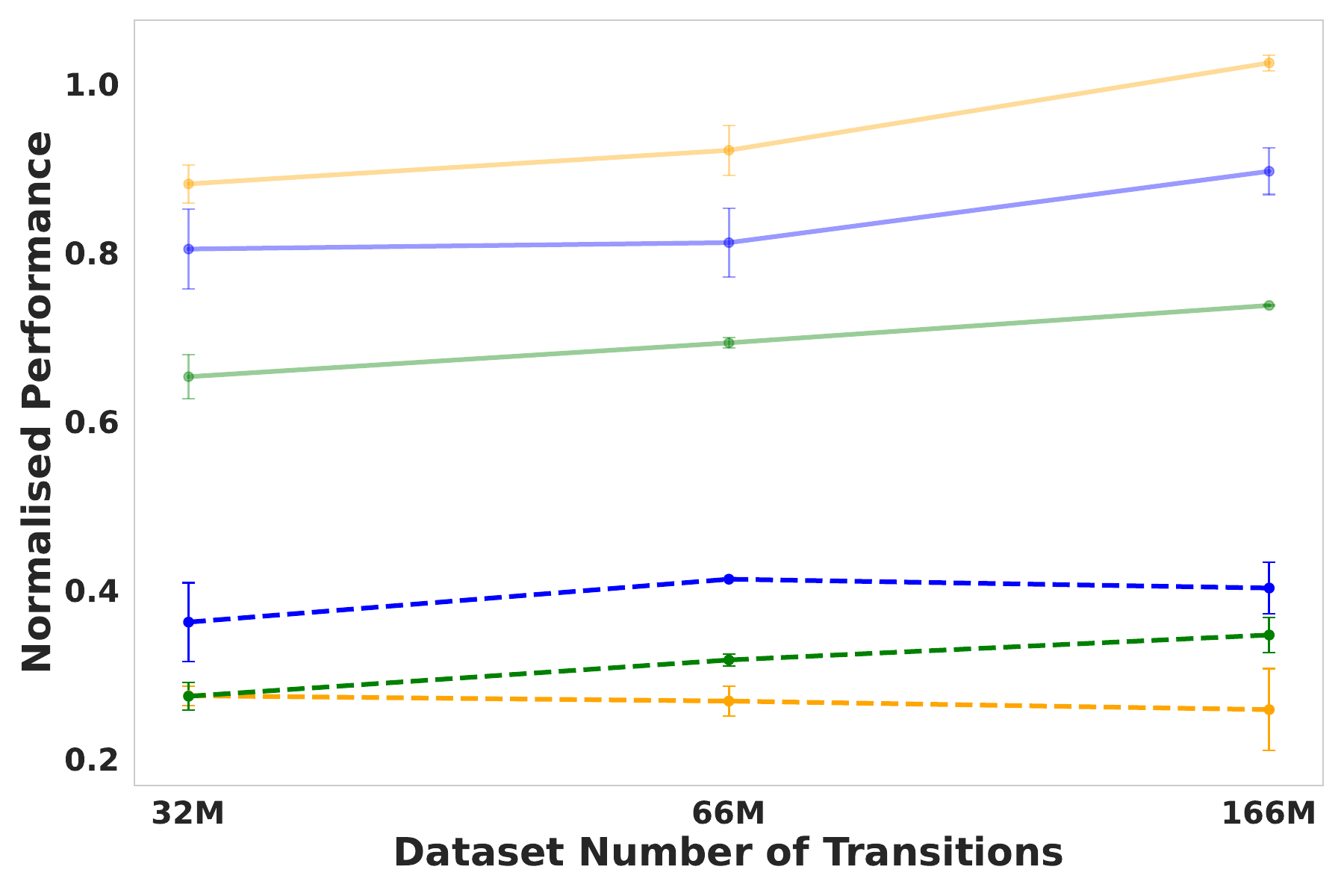}
        \caption{Dataset size scaling.}
        \label{fig:dataset_size}
    \end{subfigure}
    \hspace{5mm}
    \begin{subfigure}[t]{0.47\linewidth}
        \includegraphics[width=\linewidth]{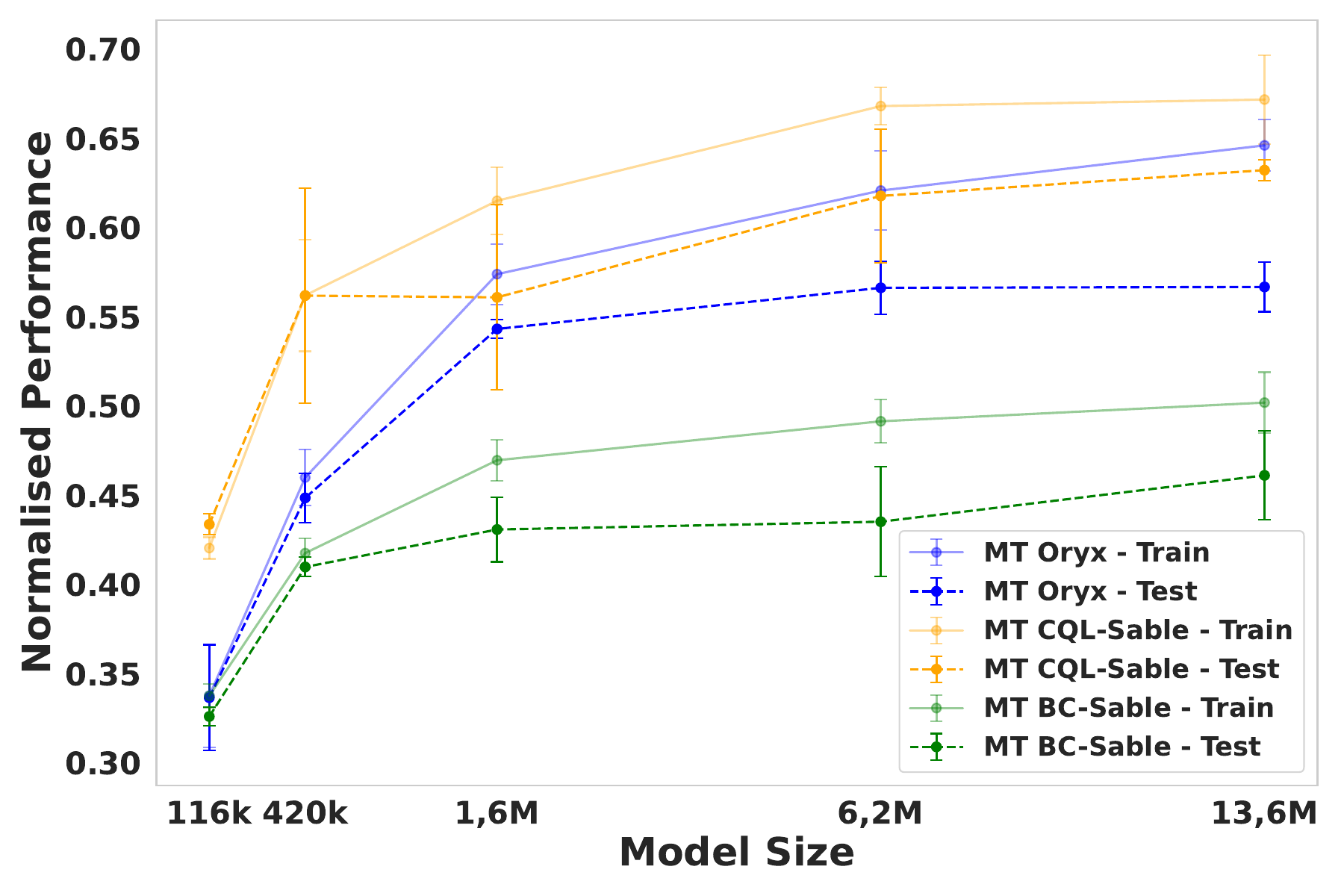}
        \caption{Model size scaling.}
        \label{fig:network_scaling}
    \end{subfigure}
    \caption{\textit{The impact of scaling up dataset \textbf{(left)} and model  size \textbf{(right).}} Fixing the number of \texttt{RWARE} tasks while increasing the number of transitions in the dataset leads to improved training performance, though test performance plateaus. Conversely, fixing the dataset size and varying the model size reveals a clear scaling trend, both training and test performance improve as the model grows.}
     \label{fig:scaling_results}
\end{figure*}


\textbf{Discussion.}
The results in \autoref{fig:dataset_size} indicate that simply increasing the number of transitions in the training dataset improves train task performance but does not lead to better generalisation on held-out test tasks, highlighting the importance of task diversity in multi-task datasets, since from \autoref{fig:side-by-side} we can conclude that adding additional tasks has a greater benefit. In contrast, scaling model capacity (\autoref{fig:network_scaling})---from an embedding dimension of \num{64} (\num{116}k parameters) to \num{512} (\num{6.2}M parameters)---consistently improved both training and test performance. This finding is particularly encouraging: it suggests that large, diverse multi-task datasets may be the missing ingredient needed to make ever-larger and more general offline MARL models viable. Notably, this result contrasts with the single-task setting reported by \citep{formanek2025oryx}, where the optimal embedding dimension was just \num{64}, underscoring the unique potential of multi-task data for enabling scale.

\section{Why Does Multi-Task Training Help MARL Sequence Models to Generalise?}
\label{sec:theory}

We provide a theoretical analysis of the empirical scaling behaviour presented in \autoref{sec:empirical-analysis}. Our main result (\autoref{thm:main}) bounds the generalisation gap of a multi-task offline MARL policy in terms of three quantities: a \textbf{training error}, a \textbf{coverage radius} on the task space, and a \textbf{Lipschitz constant} of the value function with respect to a task metric. The bound captures both qualitative trends observed in our experiments: more diverse training tasks shrink the coverage radius, while a larger, better-fit policy class shrinks the training error.

\paragraph{Setup.} 
Each task $z\in\mathcal Z$ indexes a Dec-POMDP $\mathcal M_z$ with shared discount $\gamma\in[0,1)$ and rewards bounded in $[0,R_{\max}]$. We assume tasks are embedded into a common state-action space $(\mathcal S, \mathcal A)$ via the zero-padding and masking procedure presented in \autoref{ssec:mt-sec-models}, so that any (history-dependent) policy defined on $(\mathcal S, \mathcal A)$ can be evaluated on every $\mathcal M_z$. Let $\Pi$ be the class of such history-dependent policies. For $\pi\in\Pi$, let $J_z(\pi) = \mathbb E_{\pi, P_z, \rho_z}\big[\sum_{t=0}^\infty \gamma^t R_z(s_t,a_t)\big]$, and let $\pi^*_z \in \arg\max_{\pi\in\Pi} J_z(\pi)$ denote a $\Pi$-optimal policy on task $z$ (assumed to exist). We train a single multi-task policy $\pi_\theta\in\Pi$ on a finite training set $\mathcal T_{\text{train}} = \{z_1,\dots,z_N\} \subset \mathcal Z$ using a fixed offline dataset $\mathcal D_{\text{train}} = \{\mathcal D_{z_i}\}_{i=1}^N$, and measure generalisation by the zero-shot regret on a held-out task:
\[
\mathcal R(z_{\text{test}}) \;:=\; J_{z_{\text{test}}}(\pi^*_{z_{\text{test}}}) - J_{z_{\text{test}}}(\pi_\theta).
\]

\paragraph{Task metric.}
We assume $\mathcal Z$ is equipped with a pseudometric $d$. A natural choice is a transition-reward distance,
\[
d(z, z') := \sup_{(s,a)}\Big\{|R_z(s,a) - R_{z'}(s,a)| + \tfrac{\gamma R_{\max}}{1-\gamma}\, D_{TV}(P_z(\cdot|s,a), P_{z'}(\cdot|s,a))\Big\},
\]
defined on the common embedded space; we also assume $\rho_z = \rho_{z'}$ across tasks (or absorb $D_{TV}(\rho_z, \rho_{z'})$ into $d$).

\subsection{Main generalisation bound}

\begin{assumption}[Common embedded space]
\label{ass:common}
All tasks $z\in\mathcal Z$ share a common state-action space $(\mathcal S, \mathcal A)$ and a common initial distribution $\rho$, so that every history-dependent policy $\pi\in\Pi$ can be evaluated on every $\mathcal M_z$.
\end{assumption}

\begin{assumption}[Smoothness on $\Pi$]
\label{ass:lip}
There exists $L \ge 0$ such that for every policy $\pi \in \Pi$ and all $z, z' \in \mathcal Z$, $|J_z(\pi) - J_{z'}(\pi)| \le L\, d(z,z')$.
\end{assumption}

\begin{assumption}[Bounded training error]
\label{ass:train}
There exists $\varepsilon_{\text{train}} \ge 0$ such that $J_{z_i}(\pi^*_{z_i}) - J_{z_i}(\pi_\theta) \le \varepsilon_{\text{train}}$ for every $z_i \in \mathcal T_{\text{train}}$.
\end{assumption}

Assumption~\ref{ass:common} is by construction in our setting. Assumption~\ref{ass:lip} is a substantive structural assumption on the task family: it can be derived from Lipschitz continuity of the reward and transition kernels in the metric $d$ via a simulation lemma (Lemma~\ref{lem:sim} in Appendix~\ref{app:theory}), but it is not automatic for arbitrary task families and may fail for
combinatorial task families with abrupt structural changes between
tasks. 

\begin{theorem}[Coverage--smoothness regret bound]
\label{thm:main}
Under Assumptions~\ref{ass:common}--\ref{ass:train}, for any $z_{\text{test}} \in \mathcal Z$,
\[
\;\mathcal R(z_{\text{test}}) \;\le\; \varepsilon_{\text{train}} \;+\; 2L \cdot \min_{z_i \in \mathcal T_{\text{train}}} d(z_i, z_{\text{test}}).\;
\]
\end{theorem}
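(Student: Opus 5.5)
The plan is a three-term telescoping decomposition around the nearest training task. Fix $z_{\text{test}}\in\mathcal Z$. Since $\mathcal T_{\text{train}}$ is finite, the minimum $\min_{z_i\in\mathcal T_{\text{train}}} d(z_i,z_{\text{test}})$ is attained; call the minimiser $z_j$ and write $\delta := d(z_j,z_{\text{test}})$. Then split the regret as
\[
\mathcal R(z_{\text{test}}) = \big[J_{z_{\text{test}}}(\pi^*_{z_{\text{test}}}) - J_{z_j}(\pi^*_{z_{\text{test}}})\big] + \big[J_{z_j}(\pi^*_{z_{\text{test}}}) - J_{z_j}(\pi_\theta)\big] + \big[J_{z_j}(\pi_\theta) - J_{z_{\text{test}}}(\pi_\theta)\big].
\]
Assumption~\ref{ass:common} ensures every term here is well defined, since all policies in $\Pi$ can be evaluated on both $\mathcal M_{z_j}$ and $\mathcal M_{z_{\text{test}}}$.

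The first and third brackets are handled directly by Assumption~\ref{ass:lip}. Both $\pi^*_{z_{\text{test}}}$ and $\pi_\theta$ lie in $\Pi$, so each bracket is at most $L\delta$. This gives the $2L\delta$ term.

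For the middle bracket, use the $\Pi$-optimality of $\pi^*_{z_j}$ on task $z_j$. Because $\pi^*_{z_{\text{test}}}\in\Pi$, we have $J_{z_j}(\pi^*_{z_{\text{test}}})\le J_{z_j}(\pi^*_{z_j})$. Assumption~\ref{ass:train} then bounds the middle bracket by $J_{z_j}(\pi^*_{z_j})-J_{z_j}(\pi_\theta)\le\varepsilon_{\text{train}}$. Summing the three bounds gives the claimed inequality.

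The only step that needs care is this middle comparison. The test-optimal policy must be transferred to the training task and then dominated by that task's own $\Pi$-optimum. This works only because $\pi^*_z$ is defined as optimal over the same class $\Pi$ that contains every policy being compared. If $\pi^*_z$ were instead optimal over a larger class, Assumption~\ref{ass:lip} would have to be extended to that class. The remaining steps are routine.
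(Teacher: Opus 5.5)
Your proposal is correct and matches the paper's proof essentially line for line. Both use the same three-bracket decomposition around the nearest training task, apply Assumption~\ref{ass:lip} separately to $\pi^*_{z_{\text{test}}}$ and $\pi_\theta$ for the outer brackets, and bound the middle bracket via the $\Pi$-optimality of $\pi^*_{z_j}$ followed by Assumption~\ref{ass:train}.
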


The bound formally captures the two empirical findings of \autoref{sec:empirical-analysis}:

\begin{itemize}
    \item \emph{Task diversity reduces the coverage radius.} The bound decreases as $\min_{z_i} d(z_i, z_{\text{test}})$ shrinks, which is achieved by adding training tasks that are close in $d$ to plausible test tasks. This is consistent with the monotonic test-performance gains observed in \autoref{fig:side-by-side}.
    \item \emph{Model capacity reduces $\varepsilon_{\mathrm{train}}$.} Within the bound, scaling model size acts exclusively through $\varepsilon_{\mathrm{train}}$, leaving the coverage term fixed for a given training task set. This is consistent with the monotonic improvement in test performance observed in \autoref{fig:network_scaling}.
\end{itemize}

\subsection{Finite-task coverage corollary}

In our experimental setting $\mathcal Z$ is the finite set of tasks in a benchmark suite. We record the simple deterministic consequence.

\begin{corollary}[Finite-set coverage]
\label{cor:finite}
Under Assumptions~\ref{ass:common}--\ref{ass:train}, if $z_{\text{test}} \in \mathcal T_{\text{train}}$ then $\mathcal R(z_{\text{test}}) \le \varepsilon_{\text{train}}$. More generally, if every test task lies within $d$-distance $r$ of some training task, then $\sup_{z_{\text{test}}\in\mathcal Z_{\text{test}}} \mathcal R(z_{\text{test}}) \le \varepsilon_{\text{train}} + 2Lr$.
\end{corollary}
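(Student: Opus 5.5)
The corollary follows directly from \autoref{thm:main}, so the plan is to specialise its bound in two ways rather than redo any argument.

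For the first claim, suppose $z_{\text{test}} \in \mathcal T_{\text{train}}$. Then we may take $z_i = z_{\text{test}}$ in the minimum. Since $d$ is a pseudometric, $d(z_{\text{test}}, z_{\text{test}}) = 0$, so the minimum is zero and \autoref{thm:main} gives $\mathcal R(z_{\text{test}}) \le \varepsilon_{\text{train}}$. Alternatively, this is exactly Assumption~\ref{ass:train} applied at $z_i = z_{\text{test}}$, and the two routes agree.

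For the second claim, fix any $z_{\text{test}} \in \mathcal Z_{\text{test}}$. By the covering hypothesis there is some $z_j \in \mathcal T_{\text{train}}$ with $d(z_j, z_{\text{test}}) \le r$. Hence $\min_{z_i} d(z_i, z_{\text{test}}) \le r$, and because $L \ge 0$, \autoref{thm:main} yields $\mathcal R(z_{\text{test}}) \le \varepsilon_{\text{train}} + 2Lr$.

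This bound is uniform in $z_{\text{test}}$, so taking the supremum over $\mathcal Z_{\text{test}}$ preserves it. The minimum over the finite set $\mathcal T_{\text{train}}$ is attained, so no limiting argument is needed.

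There is no real obstacle. The only points to keep explicit are that the minimum is over a finite set and that $L \ge 0$, so the inequality $\min d \le r$ passes through the factor $2L$ in the right direction. For completeness, the theorem itself follows from the chain
\[
J_{z_{\text{test}}}(\pi^*_{z_{\text{test}}}) \le J_{z_i}(\pi^*_{z_{\text{test}}}) + Ld \le J_{z_i}(\pi^*_{z_i}) + Ld \le J_{z_i}(\pi_\theta) + \varepsilon_{\text{train}} + Ld \le J_{z_{\text{test}}}(\pi_\theta) + \varepsilon_{\text{train}} + 2Ld,
\]
where $d = d(z_i, z_{\text{test}})$. The first and last steps use Assumption~\ref{ass:lip}, the second uses the $\Pi$-optimality of $\pi^*_{z_i}$, and the third uses Assumption~\ref{ass:train}. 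The chain is then minimised over $z_i$.
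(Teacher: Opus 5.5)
Your proposal is correct and matches the paper's proof. Both specialise Theorem~\ref{thm:main}, noting that the coverage term vanishes when $z_{\text{test}}\in\mathcal T_{\text{train}}$ and is at most $r$ under the covering hypothesis (using $L \ge 0$, which the paper leaves implicit), and then take the supremum over test tasks.
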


Corollary~\ref{cor:finite} makes precise the qualitative claim that, for finite task suites, generalisation is controlled by how thoroughly the training set covers the test set in the chosen metric.

\subsection{Proxy coverage: an empirical surrogate}

\begin{figure}[t]
    \centering
    \includegraphics[width=0.6\linewidth]{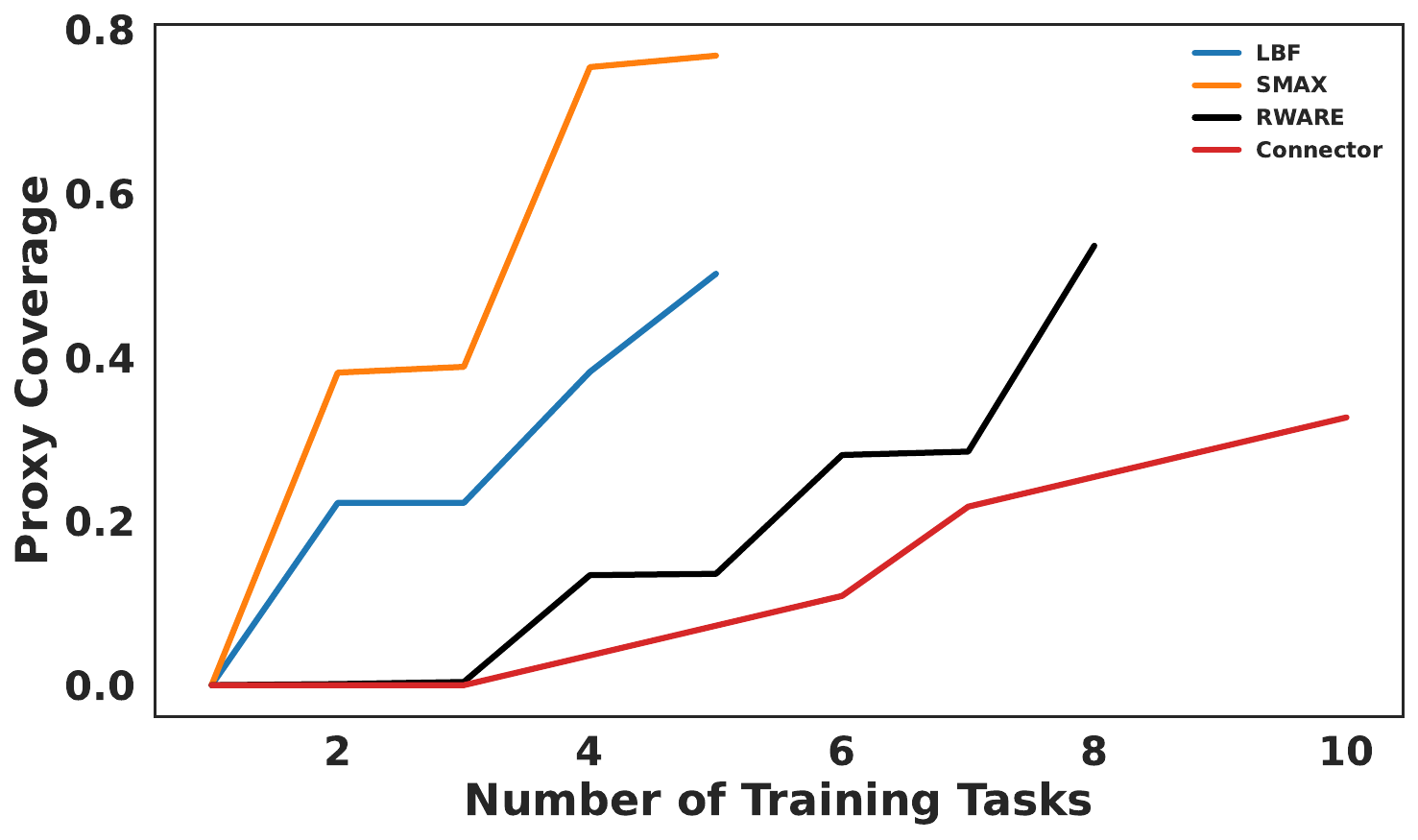} 
    \caption{\textit{Proxy coverage metric for each environment's test set.} This proxy metric, based on the similarity between test and training tasks, provides insight into how performance is likely to improve as additional training tasks are included.}
    \label{fig:coverage_error}
\end{figure}

Since the theoretical coverage term $\min_{z_i} d(z_i, z_{\text{test}})$  is not directly computable, we construct an empirical surrogate. For each environment, we represent each task $z$ by a hand-crafted descriptor $\hat{z}$ capturing key task properties (detailed in \ref{app:proxy}), and define the \textbf{proxy coverage} as

\begin{equation*}
\mathrm{PC} \;=\; 1 - \min_{\hat{z}_i \in \mathcal{T}_{\mathrm{train}}} \lVert \hat{z}_i - \hat{z}_{\mathrm{test}} \rVert,
\end{equation*}

so that higher values indicate better coverage of the test task by the training set. \autoref{fig:coverage_error} plots PC as the number of training tasks grows. Consistent with the bound in \autoref{thm:main}, PC increases monotonically in all environments, though at rates that reflect each environment's task diversity: \texttt{SMAX} and \texttt{LBF} show the steepest gains, while \texttt{Connector}'s training and test tasks are already similar with few tasks, leading to early saturation.

\section{Related Work}
\label{sec:related_work}

\textbf{Offline MARL.} Most prior work in offline MARL uses single-task training and evaluation, while focusing on finding solutions to key challenges particular to offline multi-agent learning. Seminal early papers include \citet{Jiang2021OfflineDM} and \citet{Yang2021BelieveWY}, who introduced multi-agent methods for constrained Q-value estimation. Since then, numerous additional works have aimed to tackle challenges such as extrapolation error~\citep{shao2023counterfactual,Eldeeb2024ConservativeAR}, coordination~\citep{barde2024modelbasedsolutionofflinemultiagent, tilbury2024coordination,Zhou_2025}, offline training stability~\citep{Pan2021PlanBA, Wang2023OfflineMR, Matsunaga2023AlberDICEAO, wu2023conservative, Bui2024ComaDICEOC, Liu2024OfflineMR, li2025dof}, opponent modeling~\citep{jing2024towards}, offline-to-online transfer~\citep{zhong2024offlinetoonlinemultiagentreinforcementlearning,formanek2023reduce} and theoretical understanding~\citep{cui2022provablyefficientofflinemultiagent, cui2022offlinetwoplayerzerosummarkov, zhong2022pessimisticminimaxvalueiteration, zhang2023offlinelearningmarkovgames, xiong2023nearlyminimaxoptimaloffline, wu2023conservative}.

\textbf{Sequence Models for RL.}
Formulating RL as a sequence modelling problem has gained significant attention. \citet{chen2021decision} introduced the Decision Transformer (DT), later extended in various ways \citep{zheng2022online, yamagata2023q, wu2023elastic}. \citet{lee2022multi} trained a multi-task DT that learned across tasks and could be quickly fine-tuned. \citet{meng2023offline} introduced MADT, an extension of the DT to the multi-agent setting. The Multi-Agent Transformer (MAT)~\citep{wen2022multi} addressed the online setting with auto-regressive action selection, and \citet{mahjoub2025sable} improved on MAT with Sable, which replaces the Transformer with a Retentive Network~\citep{sun2023retentive} and adds temporal memory, achieving state-of-the-art results. Building on this line, \citet{formanek2025oryx} proposed Oryx, an offline MARL sequence model derived from an autoregressive version of Implicit Constraint Q-Learning (ICQ)~\citep{yang2021believe} and offline-specific modifications to Sable, also achieving state-of-the-art performance.

\textbf{Multi-Task RL.} 
Multi-task training has most prominently been investigated in single-agent continuous-control and robotics problems with a focus on representation and transfer learning~\citep{xu2020knowledge, kalashnikov2021scaling, kumar2022pre, cheng2022provable}. 
Although shown to be useful in most cases, \citet{yu2021conservative} find that naively adding more multi-task data to an offline RL training dataset can sometimes lead to a decrease in performance on downstream tasks, particularly when the distributional shift between tasks is large. In terms of generalisation,  
\citet{kumar2022offline} and \citet{he2023diffusion} highlight the potential for high-capacity models trained on large and diverse multi-task datasets to produce agents that can generalise more broadly when fine-tuned on previously unseen tasks. Most closely related to our work is that of \citet{medirattageneralization}, who evaluate the zero-shot generalisation capabilities of several offline single-agent RL methods by training them on a set of training tasks and testing them on a set of holdout tasks. They find that current offline RL methods do not generalise well and are
typically outperformed by simple behaviour cloning.

\textbf{Multi-Task MARL.}
Multi-task MARL faces both architectural and evaluation challenges when agents must generalise beyond single-task training, motivating formal definitions and benchmarks for task generalisation\citep{lukas2022rwaremt}. 
\citet{rosen2024optimal} give a formal, goal-oriented theory that proves how a learned world value function can enable provably optimal zero-shot task generalisation in goal-based multi-agent settings.
MaskMA~\citep{liumaskma2024} introduces a mask-based framework that adapts to varying agent- and action-spaces and shows strong zero-shot transfer on unseen SMAC~\citep{samvelyan19smac} maps. Unlike our approach, their work builds on MADT~\citep{meng2023offline}, while we focus on sequence model architectures related to Oryx~\citep{formanek2025oryx}, which have been shown to outperform MADT.
The offline coordination-skill discovery method ODIS~\citep{zhang2023discovering} extracts task-invariant coordination primitives from multi-task trajectories and shows that this can be used to deploy coordination policies to unseen SMAC tasks without additional online interaction. 
Related work, HiSSD~\citep{liu2025learning} proposes a hierarchical separation between common cooperative (temporal) skills and task-specific controllers.
None of the above studies investigates the effect of task diversity on test performance, instead keeping the number of training tasks fixed.

\section{Conclusion}
In this work, we studied the generalisation of offline MARL sequence models. Both our empirical results from large sets of experiments and insights from theoretical analysis showed that task diversity is a key driver for reducing the train--test generalisation gap. Additionally, we showed that significant improvements in zero-shot out-of-distribution performance can be achieved via model scaling, and that this is consistent across various sequence model objective functions. Our findings suggest that future progress in offline MARL should prioritise (i) constructing large and diverse multi-task datasets, and (ii) carefully tuning model capacity for a given data budget to maximise zero-shot generalisation. We release code, datasets, task splits, and training scripts to encourage reproducibility and to establish stronger benchmarks for evaluating generalisation in offline MARL.

\textbf{Limitations and future work.} Our work is limited to centralised sequence model architectures, and although these represent a powerful and performant model class, promising future work could include extending our analysis to decentralised and CTDE algorithms. Additional areas of interest include studying the limits of transfer across environments (not only tasks), and investigating accelerating fine-tuning in safety-critical and data-scarce real-world domains.


\subsubsection*{Acknowledgments}

\bibliography{example_paper}

@article{formanek2025oryx,
  title={Oryx: a Performant and Scalable Algorithm for Many-Agent Coordination in Offline MARL},
  author={Formanek, Claude and Mahjoub, Omayma and Nessir, Louay Ben and Abramowitz, Sasha and de Kock, Ruan and Khlifi, Wiem and Toit, Simon Du and Chalumeau, Felix and Rajaonarivonivelomanantsoa, Daniel and Fokam, Arnol and others},
  journal={Advances in neural information processing systems},
  year={2025}
}

@article{pomerleau1988alvinn,
  title={Alvinn: An autonomous land vehicle in a neural network},
  author={Pomerleau, Dean A},
  journal={Advances in neural information processing systems},
  volume={1},
  year={1988}
}

@article{yang2021believe,
  title={Believe what you see: Implicit constraint approach for offline multi-agent reinforcement learning},
  author={Yang, Yiqin and Ma, Xiaoteng and Li, Chenghao and Zheng, Zewu and Zhang, Qiyuan and Huang, Gao and Yang, Jun and Zhao, Qianchuan},
  journal={Advances in Neural Information Processing Systems},
  volume={34},
  pages={10299--10312},
  year={2021}
}

@article{shao2023counterfactual,
  title={Counterfactual conservative q learning for offline multi-agent reinforcement learning},
  author={Shao, Jianzhun and Qu, Yun and Chen, Chen and Zhang, Hongchang and Ji, Xiangyang},
  journal={Advances in Neural Information Processing Systems},
  volume={36},
  pages={77290--77312},
  year={2023}
}

@inproceedings{bain1995framework,
  title={A Framework for Behavioural Cloning.},
  author={Bain, Michael and Sammut, Claude},
  booktitle={Machine intelligence 15},
  pages={103--129},
  year={1995}
}

@article{wen2022multi,
  title={Multi-agent reinforcement learning is a sequence modeling problem},
  author={Wen, Muning and Kuba, Jakub and Lin, Runji and Zhang, Weinan and Wen, Ying and Wang, Jun and Yang, Yaodong},
  journal={Advances in Neural Information Processing Systems},
  volume={35},
  pages={16509--16521},
  year={2022}
}

@inproceedings{li2025dof,
  title={Dof: A diffusion factorization framework for offline multi-agent reinforcement learning},
  author={Li, Chao and Deng, Ziwei and Lin, Chenxing and Chen, Wenqi and Fu, Yongquan and Liu, Weiquan and Wen, Chenglu and Wang, Cheng and Shen, Siqi},
  booktitle={The Thirteenth International Conference on Learning Representations},
  year={2025}
}

@article{meng2023offline,
  title={Offline pre-trained multi-agent decision transformer},
  author={Meng, Linghui and Wen, Muning and Le, Chenyang and Li, Xiyun and Xing, Dengpeng and Zhang, Weinan and Wen, Ying and Zhang, Haifeng and Wang, Jun and Yang, Yaodong and others},
  journal={Machine Intelligence Research},
  volume={20},
  number={2},
  pages={233--248},
  year={2023},
  publisher={Springer}
}

@inproceedings{medirattageneralization,
  title={The Generalization Gap in Offline Reinforcement Learning},
  author={Mediratta, Ishita and You, Qingfei and Jiang, Minqi and Raileanu, Roberta},
  booktitle={The Twelfth International Conference on Learning Representations},
  year={2024}
}

@inproceedings{bonnetjumanji,
  title={Jumanji: a Diverse Suite of Scalable Reinforcement Learning Environments in JAX},
  author={Bonnet, Cl{\'e}ment and Luo, Daniel and Byrne, Donal John and Surana, Shikha and Abramowitz, Sasha and Duckworth, Paul and Coyette, Vincent and Midgley, Laurence Illing and Tegegn, Elshadai and Kalloniatis, Tristan and others},
  booktitle={The Twelfth International Conference on Learning Representations},
year={2024}
}

@inproceedings{cui2019class,
  title={Class-balanced loss based on effective number of samples},
  author={Cui, Yin and Jia, Menglin and Lin, Tsung-Yi and Song, Yang and Belongie, Serge},
  booktitle={Proceedings of the IEEE/CVF conference on computer vision and pattern recognition},
  pages={9268--9277},
  year={2019}
}

@inproceedings{imani2018improving,
  title={Improving regression performance with distributional losses},
  author={Imani, Ehsan and White, Martha},
  booktitle={International conference on machine learning},
  pages={2157--2166},
  year={2018},
  organization={PMLR}
}

@inproceedings{farebrother2024stop,
  title={Stop regressing: training value functions via classification for scalable deep RL},
  author={Farebrother, Jesse and Orbay, Jordi and Vuong, Quan and Ta{\"\i}ga, Adrien Ali and Chebotar, Yevgen and Xiao, Ted and Irpan, Alex and Levine, Sergey and Castro, Pablo Samuel and Faust, Aleksandra and others},
  booktitle={Proceedings of the 41st International Conference on Machine Learning},
  pages={13049--13071},
  year={2024}
}

@inproceedings{mahjoub2025sable,
  title={Sable: a Performant, Efficient and Scalable Sequence Model for MARL},
  author={Mahjoub, Omayma and Abramowitz, Sasha and de Kock, Ruan John and Khlifi, Wiem and Du Toit, Simon Verster and Daniel, Jemma and Nessir, Louay Ben and Beyers, Louise and Formanek, Juan Claude and Clark, Liam and others},
  booktitle={Forty-second International Conference on Machine Learning},
  year={2025}
}

@article{kumar2020conservative,
  title={Conservative q-learning for offline reinforcement learning},
  author={Kumar, Aviral and Zhou, Aurick and Tucker, George and Levine, Sergey},
  journal={Advances in neural information processing systems},
  volume={33},
  pages={1179--1191},
  year={2020}
}

@inproceedings{kumar2022offline,
  title={Offline Q-learning on Diverse Multi-Task Data Both Scales And Generalizes},
  author={Kumar, Aviral and Agarwal, Rishabh and Geng, Xinyang and Tucker, George and Levine, Sergey},
  booktitle={Deep Reinforcement Learning Workshop NeurIPS 2022},
    year={2022}
}

@article{yu2021conservative,
  title={Conservative data sharing for multi-task offline reinforcement learning},
  author={Yu, Tianhe and Kumar, Aviral and Chebotar, Yevgen and Hausman, Karol and Levine, Sergey and Finn, Chelsea},
  journal={Advances in Neural Information Processing Systems},
  volume={34},
  pages={11501--11516},
  year={2021}
}

@article{cheng2022provable,
  title={Provable benefit of multitask representation learning in reinforcement learning},
  author={Cheng, Yuan and Feng, Songtao and Yang, Jing and Zhang, Hong and Liang, Yingbin},
  journal={Advances in Neural Information Processing Systems},
  volume={35},
  pages={31741--31754},
  year={2022}
}

@article{xu2020knowledge,
  title={Knowledge transfer in multi-task deep reinforcement learning for continuous control},
  author={Xu, Zhiyuan and Wu, Kun and Che, Zhengping and Tang, Jian and Ye, Jieping},
  journal={Advances in Neural Information Processing Systems},
  volume={33},
  pages={15146--15155},
  year={2020}
}

@article{kumar2022pre,
  title={Pre-training for robots: Offline rl enables learning new tasks from a handful of trials},
  author={Kumar, Aviral and Singh, Anikait and Ebert, Frederik and Nakamoto, Mitsuhiko and Yang, Yanlai and Finn, Chelsea and Levine, Sergey},
  journal={arXiv preprint arXiv:2210.05178},
  year={2022}
}

@inproceedings{kalashnikov2021scaling,
  title={Scaling up multi-task robotic reinforcement learning},
  author={Kalashnikov, Dmitry and Varley, Jake and Chebotar, Yevgen and Swanson, Benjamin and Jonschkowski, Rico and Finn, Chelsea and Levine, Sergey and Hausman, Karol},
  booktitle={5th Annual Conference on Robot Learning},
  year={2021}
}

@article{he2023diffusion,
  title={Diffusion model is an effective planner and data synthesizer for multi-task reinforcement learning},
  author={He, Haoran and Bai, Chenjia and Xu, Kang and Yang, Zhuoran and Zhang, Weinan and Wang, Dong and Zhao, Bin and Li, Xuelong},
  journal={Advances in neural information processing systems},
  volume={36},
  pages={64896--64917},
  year={2023}
}

@article{chen2021decision,
  title={Decision transformer: Reinforcement learning via sequence modeling},
  author={Chen, Lili and Lu, Kevin and Rajeswaran, Aravind and Lee, Kimin and Grover, Aditya and Laskin, Misha and Abbeel, Pieter and Srinivas, Aravind and Mordatch, Igor},
  journal={Advances in neural information processing systems},
  volume={34},
  pages={15084--15097},
  year={2021}
}

@inproceedings{zheng2022online,
  title={Online decision transformer},
  author={Zheng, Qinqing and Zhang, Amy and Grover, Aditya},
  booktitle={international conference on machine learning},
  pages={27042--27059},
  year={2022},
  organization={PMLR}
}

@article{wu2023elastic,
  title={Elastic decision transformer},
  author={Wu, Yueh-Hua and Wang, Xiaolong and Hamaya, Masashi},
  journal={Advances in neural information processing systems},
  volume={36},
  pages={18532--18550},
  year={2023}
}

@inproceedings{yamagata2023q,
  title={Q-learning decision transformer: Leveraging dynamic programming for conditional sequence modelling in offline rl},
  author={Yamagata, Taku and Khalil, Ahmed and Santos-Rodriguez, Raul},
  booktitle={International Conference on Machine Learning},
  pages={38989--39007},
  year={2023},
  organization={PMLR}
}

@article{lee2022multi,
  title={Multi-game decision transformers},
  author={Lee, Kuang-Huei and Nachum, Ofir and Yang, Mengjiao Sherry and Lee, Lisa and Freeman, Daniel and Guadarrama, Sergio and Fischer, Ian and Xu, Winnie and Jang, Eric and Michalewski, Henryk and others},
  journal={Advances in neural information processing systems},
  volume={35},
  pages={27921--27936},
  year={2022}
}

@article{sun2023retentive,
  title={Retentive network: A successor to transformer for large language models},
  author={Sun, Yutao and Dong, Li and Huang, Shaohan and Ma, Shuming and Xia, Yuqing and Xue, Jilong and Wang, Jianyong and Wei, Furu},
  journal={arXiv preprint arXiv:2307.08621},
  year={2023}
}

@inproceedings{de2025exponentially,
  title={Is an Exponentially Growing Action Space Really that Bad? Validating a Core Assumption for using Multi-Agent RL},
  author={de Kock, Ruan and Pretorius, Arnu and Shock, Jonathan},
  booktitle={Proceedings of the 24th International Conference on Autonomous Agents and Multiagent Systems},
  pages={2490--2492},
  year={2025}
}

@article{daniel2024multi,
  title={Multi-Agent Reinforcement Learning with Selective State-Space Models},
  author={Daniel, Jemma and de Kock, Ruan and Nessir, Louay Ben and Abramowitz, Sasha and Mahjoub, Omayma and Khlifi, Wiem and Formanek, Claude and Pretorius, Arnu},
  journal={arXiv preprint arXiv:2410.19382},
  year={2024}
}

@article{kirk2023survey,
  title={A survey of zero-shot generalisation in deep reinforcement learning},
  author={Kirk, Robert and Zhang, Amy and Grefenstette, Edward and Rockt{\"a}schel, Tim},
  journal={Journal of Artificial Intelligence Research},
  volume={76},
  pages={201--264},
  year={2023}
}

@article{levine2020offline,
  title={Offline reinforcement learning: Tutorial, review, and perspectives on open problems},
  author={Levine, Sergey and Kumar, Aviral and Tucker, George and Fu, Justin},
  journal={arXiv preprint arXiv:2005.01643},
  year={2020}
}

@inproceedings{chebotar2023q,
  title={Q-transformer: Scalable offline reinforcement learning via autoregressive q-functions},
  author={Chebotar, Yevgen and Vuong, Quan and Hausman, Karol and Xia, Fei and Lu, Yao and Irpan, Alex and Kumar, Aviral and Yu, Tianhe and Herzog, Alexander and Pertsch, Karl and others},
  booktitle={Conference on Robot Learning},
  pages={3909--3928},
  year={2023},
  organization={PMLR}
}

@article{gorsane2022towards,
  title={Towards a standardised performance evaluation protocol for cooperative marl},
  author={Gorsane, Rihab and Mahjoub, Omayma and de Kock, Ruan John and Dubb, Roland and Singh, Siddarth and Pretorius, Arnu},
  journal={Advances in Neural Information Processing Systems},
  volume={35},
  pages={5510--5521},
  year={2022}
}

@article{zhong2024heterogeneous,
  title={Heterogeneous-agent reinforcement learning},
  author={Zhong, Yifan and Kuba, Jakub Grudzien and Feng, Xidong and Hu, Siyi and Ji, Jiaming and Yang, Yaodong},
  journal={Journal of Machine Learning Research},
  volume={25},
  number={32},
  pages={1--67},
  year={2024}
}

@article{kaelbling1998planning,
  title={Planning and acting in partially observable stochastic domains},
  author={Kaelbling, Leslie Pack and Littman, Michael L and Cassandra, Anthony R},
  journal={Artificial intelligence},
  year={1998},
  publisher={Elsevier}
}

@inproceedings{
papoudakis2021benchmarking,
title={Benchmarking Multi-Agent Deep Reinforcement Learning Algorithms in Cooperative Tasks},
author={Georgios Papoudakis and Filippos Christianos and Lukas Sch{\"a}fer and Stefano V Albrecht},
booktitle={Thirty-fifth Conference on Neural Information Processing Systems Datasets and Benchmarks Track},
year={2021},
}

@misc{Yang2021BelieveWY,
      title={Believe What You See: Implicit Constraint Approach for Offline Multi-Agent Reinforcement Learning}, 
      author={Yiqin Yang and Xiaoteng Ma and Chenghao Li and Zewu Zheng and Qiyuan Zhang and Gao Huang and Jun Yang and Qianchuan Zhao},
      year={2021},
      eprint={2106.03400},
      archivePrefix={arXiv},
      primaryClass={cs.AI},
      url={https://arxiv.org/abs/2106.03400}, 
}

@misc{Pan2021PlanBA,
      title={Plan Better Amid Conservatism: Offline Multi-Agent Reinforcement Learning with Actor Rectification}, 
      author={Ling Pan and Longbo Huang and Tengyu Ma and Huazhe Xu},
      year={2022},
      eprint={2111.11188},
      archivePrefix={arXiv},
      primaryClass={cs.LG},
      url={https://arxiv.org/abs/2111.11188}, 
}

@inproceedings{Wang2023OfflineMR,
 author = {Wang, Xiangsen and Xu, Haoran and Zheng, Yinan and Zhan, Xianyuan},
 booktitle = {Advances in Neural Information Processing Systems},
 editor = {A. Oh and T. Naumann and A. Globerson and K. Saenko and M. Hardt and S. Levine},
 pages = {52413--52429},
 publisher = {Curran Associates, Inc.},
 title = {Offline Multi-Agent Reinforcement Learning with Implicit Global-to-Local Value Regularization},
 volume = {36},
 year = {2023}
}

@misc{Matsunaga2023AlberDICEAO,
      title={AlberDICE: Addressing Out-Of-Distribution Joint Actions in Offline Multi-Agent RL via Alternating Stationary Distribution Correction Estimation}, 
      author={Daiki E. Matsunaga and Jongmin Lee and Jaeseok Yoon and Stefanos Leonardos and Pieter Abbeel and Kee-Eung Kim},
      year={2023},
      eprint={2311.02194},
      archivePrefix={arXiv},
      primaryClass={cs.LG},
      url={https://arxiv.org/abs/2311.02194}, 
}

@inproceedings{Bui2024ComaDICEOC,
      title={ComaDICE: Offline Cooperative Multi-Agent Reinforcement Learning with Stationary Distribution Shift Regularization}, 
      author={The Viet Bui and Thanh Hong Nguyen and Tien Mai},
      year={2025},
      booktitle={International Conference on Learning Representations},
}

@inproceedings{
formanek2023reduce,
title={Reduce, Reuse, Recycle: Selective Reincarnation in Multi-Agent Reinforcement Learning},
author={Juan Claude Formanek and Callum Rhys Tilbury and Jonathan Phillip Shock and Kale-ab Tessera and Arnu Pretorius},
booktitle={Workshop on Reincarnating Reinforcement Learning at ICLR 2023},
year={2023},
url={https://openreview.net/forum?id=_Nz9lt2qQfV}
}

@misc{Jiang2021OfflineDM,
      title={Offline Decentralized Multi-Agent Reinforcement Learning}, 
      author={Jiechuan Jiang and Zongqing Lu},
      year={2021},
      eprint={2108.01832},
      archivePrefix={arXiv},
      primaryClass={cs.LG},
      url={https://arxiv.org/abs/2108.01832}, 
}

@article{Eldeeb2024ConservativeAR,
   title={Conservative and Risk-Aware Offline Multi-Agent Reinforcement Learning},
   ISSN={2372-2045},
   url={http://dx.doi.org/10.1109/TCCN.2024.3499357},
   DOI={10.1109/tccn.2024.3499357},
   journal={IEEE Transactions on Cognitive Communications and Networking},
   publisher={Institute of Electrical and Electronics Engineers (IEEE)},
   author={Eldeeb, Eslam and Sifaou, Houssem and Simeone, Osvaldo and Shehab, Mohammad and Alves, Hirley},
   year={2024},
   pages={1–1} }

@misc{Liu2024OfflineMR,
      title={Offline Multi-Agent Reinforcement Learning via In-Sample Sequential Policy Optimization}, 
      author={Zongkai Liu and Qian Lin and Chao Yu and Xiawei Wu and Yile Liang and Donghui Li and Xuetao Ding},
      year={2024},
      eprint={2412.07639},
      archivePrefix={arXiv},
      primaryClass={cs.AI},
      url={https://arxiv.org/abs/2412.07639}, 
}

@article{Zhou_2025, title={Cooperative Policy Agreement: Learning Diverse Policy for Offline MARL}, volume={39}, url={https://ojs.aaai.org/index.php/AAAI/article/view/34465}, DOI={10.1609/aaai.v39i21.34465}, number={21}, journal={Proceedings of the AAAI Conference on Artificial Intelligence}, author={Zhou, Yihe and Zheng, Yuxuan and Hu, Yue and Chen, Kaixuan and Zheng, Tongya and Song, Jie and Song, Mingli and Liu, Shunyu}, year={2025}, month={Apr.}, pages={23018-23026} }

@inproceedings{
    liu2025learning,
    title={Learning Generalizable Skills from Offline Multi-Task Data for Multi-Agent Cooperation},
    author={Sicong Liu and Yang Shu and Chenjuan Guo and Bin Yang},
    booktitle={The Thirteenth International Conference on Learning Representations},
    year={2025},
}

@inproceedings{
    zhang2023discovering,
    title={Discovering Generalizable Multi-agent Coordination Skills from Multi-task Offline Data},
    author={Fuxiang Zhang and Chengxing Jia and Yi-Chen Li and Lei Yuan and Yang Yu and Zongzhang Zhang},
    booktitle={The Eleventh International Conference on Learning Representations },
    year={2023},
    url={https://openreview.net/forum?id=53FyUAdP7d}
}

@inproceedings{
wu2023conservative,
title={Conservative Offline Policy Adaptation in Multi-Agent Games},
author={Chengjie Wu and Pingzhong Tang and Jun Yang and Yujing Hu and Tangjie Lv and Changjie Fan and Chongjie Zhang},
booktitle={Thirty-seventh Conference on Neural Information Processing Systems},
year={2023},
url={https://openreview.net/forum?id=C8pvL8Qbfa}
}

@inproceedings{jaxmarl,
  author = {Rutherford, Alexander and
          Ellis, Benjamin and
          Gallici, Matteo and
          Cook, Jonathan and
          Lupu, Andrei and
          Ingvarsson, Gar{\dh}ar and
          Willi, Timon and
          Hammond, Ravi and
          Khan, Akbir and
          de Witt, Christian Schroeder and
          Souly, Alexandra and
          Bandyopadhyay, Saptarashmi and
          Samvelyan, Mikayel and
          Jiang, Minqi and
          Lange, Robert and
          Whiteson, Shimon and
          Lacerda, Bruno and
          Hawes, Nick and
          Rockt{\"a}schel, Tim and
          Lu, Chris and
          Foerster, Jakob},
  booktitle = {Advances in Neural Information Processing Systems},
  editor = {
    A. Globerson and
    L. Mackey and
    D. Belgrave and
    A. Fan and
    U. Paquet and
    J. Tomczak and
    C. Zhang
  },
  pages = {50925--50951},
  publisher = {Curran Associates, Inc.},
  title = {JaxMARL: Multi-Agent RL Environments and Algorithms in JAX},
  volume = {37},
  year = {2024}
}

@article{samvelyan19smac,
  title = {{The} {StarCraft} {Multi}-{Agent} {Challenge}},
  author = {Mikayel Samvelyan and Tabish Rashid and Christian Schroeder de Witt and Gregory Farquhar and Nantas Nardelli and Tim G. J. Rudner and Chia-Man Hung and Philiph H. S. Torr and Jakob Foerster and Shimon Whiteson},
  journal = {CoRR},
  volume = {abs/1902.04043},
  year = {2019},
}

@inproceedings{
jing2024towards,
title={Towards Offline Opponent Modeling with In-context Learning},
author={Yuheng Jing and Kai Li and Bingyun Liu and Yifan Zang and Haobo Fu and QIANG FU and Junliang Xing and Jian Cheng},
booktitle={The Twelfth International Conference on Learning Representations},
year={2024},
}

@inproceedings{barde2024modelbasedsolutionofflinemultiagent,
      title={A Model-Based Solution to the Offline Multi-Agent Reinforcement Learning Coordination Problem}, 
      author={Paul Barde and Jakob Foerster and Derek Nowrouzezahrai and Amy Zhang},
      year={2024},
      booktitle={International Conference on Autonomous Agents and Multiagent Systems},
}

@inproceedings{cui2022provablyefficientofflinemultiagent,
 author = {Cui, Qiwen and Du, Simon S},
 booktitle = {Advances in Neural Information Processing Systems},
 editor = {S. Koyejo and S. Mohamed and A. Agarwal and D. Belgrave and K. Cho and A. Oh},
 publisher = {Curran Associates, Inc.},
 title = {Provably Efficient Offline Multi-agent Reinforcement Learning via Strategy-wise Bonus},
 volume = {35},
 year = {2022}
}

@inproceedings{cui2022offlinetwoplayerzerosummarkov,
 author = {Cui, Qiwen and Du, Simon S},
 booktitle = {Advances in Neural Information Processing Systems},
 editor = {S. Koyejo and S. Mohamed and A. Agarwal and D. Belgrave and K. Cho and A. Oh},
 publisher = {Curran Associates, Inc.},
 title = {When are Offline Two-Player Zero-Sum Markov Games Solvable?},
 volume = {35},
 year = {2022}
}

@misc{zhong2022pessimisticminimaxvalueiteration,
      title={Pessimistic Minimax Value Iteration: Provably Efficient Equilibrium Learning from Offline Datasets}, 
      author={Han Zhong and Wei Xiong and Jiyuan Tan and Liwei Wang and Tong Zhang and Zhaoran Wang and Zhuoran Yang},
      year={2022},
      eprint={2202.07511},
      archivePrefix={arXiv},
      primaryClass={cs.LG},
      url={https://arxiv.org/abs/2202.07511}, 
}

@misc{zhang2023offlinelearningmarkovgames,
      title={Offline Learning in Markov Games with General Function Approximation}, 
      author={Yuheng Zhang and Yu Bai and Nan Jiang},
      year={2023},
      eprint={2302.02571},
      archivePrefix={arXiv},
      primaryClass={cs.LG},
      url={https://arxiv.org/abs/2302.02571}, 
}

@misc{xiong2023nearlyminimaxoptimaloffline,
      title={Nearly Minimax Optimal Offline Reinforcement Learning with Linear Function Approximation: Single-Agent MDP and Markov Game}, 
      author={Wei Xiong and Han Zhong and Chengshuai Shi and Cong Shen and Liwei Wang and Tong Zhang},
      year={2023},
      eprint={2205.15512},
      archivePrefix={arXiv},
      primaryClass={cs.LG},
      url={https://arxiv.org/abs/2205.15512}, 
}

@misc{zhong2024offlinetoonlinemultiagentreinforcementlearning,
      title={Offline-to-Online Multi-Agent Reinforcement Learning with Offline Value Function Memory and Sequential Exploration}, 
      author={Hai Zhong and Xun Wang and Zhuoran Li and Longbo Huang},
      year={2024},
      eprint={2410.19450},
      archivePrefix={arXiv},
      primaryClass={cs.AI},
      url={https://arxiv.org/abs/2410.19450}, 
}

@misc{samvelyan2019starcraftmultiagentchallenge,
      title={The StarCraft Multi-Agent Challenge}, 
      author={Mikayel Samvelyan and Tabish Rashid and Christian Schroeder de Witt and Gregory Farquhar and Nantas Nardelli and Tim G. J. Rudner and Chia-Man Hung and Philip H. S. Torr and Jakob Foerster and Shimon Whiteson},
      year={2019},
      eprint={1902.04043},
      archivePrefix={arXiv},
      primaryClass={cs.LG},
}

@inproceedings{
    tilbury2024coordination,
    title={Coordination Failure in Cooperative Offline {MARL}},
    author={Callum Rhys Tilbury and Juan Claude Formanek and Louise Beyers and Jonathan Phillip Shock and Arnu Pretorius},
    booktitle={ICML 2024 Workshop: Aligning Reinforcement Learning Experimentalists and Theorists},
    year={2024},
}

@inproceedings{
    rosen2024optimal,
    title={Optimal Task Generalisation in Multi-Agent Reinforcement Learning},
    author={Simon Rosen and Abdel Mfougouon Njupoun and Geraud Nangue Tasse and Steven James and Benjamin Rosman},
    booktitle={Coordination and Cooperation for Multi-Agent Reinforcement Learning Methods Workshop},
    year={2024},
}

@inproceedings{lukas2022rwaremt,
    author = {Sch\"{a}fer, Lukas},
    title = {Task Generalisation in Multi-Agent Reinforcement Learning},
    year = {2022},
    publisher = {International Foundation for Autonomous Agents and Multiagent Systems},
    booktitle = {Proceedings of the 21st International Conference on Autonomous Agents and Multiagent Systems},
    pages = {1863–1865},
    numpages = {3},
    series = {AAMAS '22}
    }

@misc{hilton2023scalinglawssingleagentreinforcement,
      title={Scaling laws for single-agent reinforcement learning}, 
      author={Jacob Hilton and Jie Tang and John Schulman},
      year={2023},
      eprint={2301.13442},
      archivePrefix={arXiv},
      primaryClass={cs.LG},
      url={https://arxiv.org/abs/2301.13442}, 
}

@inproceedings{schweighofer2022dataset,
  title={A dataset perspective on offline reinforcement learning},
  author={Schweighofer, Kajetan and Dinu, Marius-constantin and Radler, Andreas and Hofmarcher, Markus and Patil, Vihang Prakash and Bitto-Nemling, Angela and Eghbal-Zadeh, Hamid and Hochreiter, Sepp},
  booktitle={Conference on Lifelong Learning Agents},
  pages={470--517},
  year={2022},
  organization={PMLR}
}

@article{liumaskma2024,
  title={MaskMA: Towards Zero-Shot Multi-Agent Decision Making with Mask-Based Collaborative Learning},
  author={Liu, Jie and Zhang, Yinmin and Li, Chuming and You, Zhiyuan and Zhou, Zhanhui and Yang, Chao and Yang, Yaodong and Liu, Yu and Ouyang, Wanli},
  journal={Transactions on Machine Learning Research},
year={2024}
}
\bibliographystyle{iclr2026_conference}

\newpage
\appendix
\newpage
\section{Proofs and Auxiliary Results}
\label{app:theory}

\subsection{Proof of Theorem~\ref{thm:main}}

\begin{proof}
Let $z_{NN} = \arg\min_{z\in\mathcal T_{\text{train}}} d(z, z_{\text{test}})$ and write $r := d(z_{NN}, z_{\text{test}})$. By Assumption~\ref{ass:common}, $\pi^*_{z_{\text{test}}}, \pi^*_{z_{NN}}, \pi_\theta \in \Pi$ are all evaluable on both $\mathcal M_{z_{NN}}$ and $\mathcal M_{z_{\text{test}}}$. Applying Assumption~\ref{ass:lip} to $\pi^*_{z_{\text{test}}}$ and to $\pi_\theta$ separately, and using the symmetry of $d$,
\begin{align*}
J_{z_{\text{test}}}(\pi^*_{z_{\text{test}}}) - J_{z_{\text{test}}}(\pi_\theta)
&= \big[J_{z_{\text{test}}}(\pi^*_{z_{\text{test}}}) - J_{z_{NN}}(\pi^*_{z_{\text{test}}})\big] \\
&\quad + \big[J_{z_{NN}}(\pi^*_{z_{\text{test}}}) - J_{z_{NN}}(\pi_\theta)\big] \\
&\quad + \big[J_{z_{NN}}(\pi_\theta) - J_{z_{\text{test}}}(\pi_\theta)\big] \\
&\le Lr + \big[J_{z_{NN}}(\pi^*_{z_{\text{test}}}) - J_{z_{NN}}(\pi_\theta)\big] + Lr.
\end{align*}
Since $\pi^*_{z_{\text{test}}}\in\Pi$ and $\pi^*_{z_{NN}}$ is $\Pi$-optimal on $z_{NN}$, we have $J_{z_{NN}}(\pi^*_{z_{\text{test}}}) \le J_{z_{NN}}(\pi^*_{z_{NN}})$, so
\[
J_{z_{NN}}(\pi^*_{z_{\text{test}}}) - J_{z_{NN}}(\pi_\theta) \le J_{z_{NN}}(\pi^*_{z_{NN}}) - J_{z_{NN}}(\pi_\theta) \le \varepsilon_{\text{train}},
\]
by Assumption~\ref{ass:train}. Combining gives $\mathcal R(z_{\text{test}}) \le \varepsilon_{\text{train}} + 2Lr$.
\end{proof}

\subsection{Proof of Corollary~\ref{cor:finite}}

\begin{proof}
If $z_{\text{test}}\in\mathcal T_{\text{train}}$ then $\min_i d(z_i, z_{\text{test}}) = 0$ and Theorem~\ref{thm:main} gives $\mathcal R(z_{\text{test}}) \le \varepsilon_{\text{train}}$. The general case follows by taking a supremum over $z_{\text{test}}\in\mathcal Z_{\text{test}}$ in Theorem~\ref{thm:main}.
\end{proof}

\subsection{Sufficient conditions for Assumption~\ref{ass:lip}}
\label{app:lip-sufficient}

\begin{lemma}[Simulation lemma for the task metric]
\label{lem:sim}
Suppose Assumption~\ref{ass:common} holds, rewards lie in $[0,R_{\max}]$, and there exist $L_R, L_P \ge 0$ such that for all $(s,a)$ and all $z, z' \in \mathcal Z$,
\begin{enumerate}
  \item $|R_z(s,a) - R_{z'}(s,a)| \le L_R\, d(z,z')$,
  \item $D_{TV}(P_z(\cdot|s,a), P_{z'}(\cdot|s,a)) \le L_P\, d(z,z')$.
\end{enumerate}
Then for every history-dependent policy $\pi \in \Pi$ and all $z, z' \in \mathcal Z$,
\[
|J_z(\pi) - J_{z'}(\pi)| \;\le\; L\, d(z,z'),
\qquad L := \frac{L_R}{1-\gamma} + \frac{\gamma\, L_P\, R_{\max}}{(1-\gamma)^2}.
\]
\end{lemma}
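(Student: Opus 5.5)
The plan is to prove Lemma~\ref{lem:sim} with a standard simulation (telescoping) argument. The only subtlety is that $\pi$ is history-dependent, so I work with the law of the full trajectory rather than with stationary Bellman operators. Fix $\pi\in\Pi$ and $z,z'$, and write $\delta := d(z,z')$. By Assumption~\ref{ass:common} both tasks share $(\mathcal S,\mathcal A)$ and $\rho$. Hence the trajectory laws $\mathbb P^z_\pi$ and $\mathbb P^{z'}_\pi$ on histories $h_t=(s_0,a_0,\dots,s_t)$ are built from the same initial law and the same policy kernels. They differ only through $P_z$ versus $P_{z'}$. (In the Dec-POMDP the observation kernels can be folded into the history-dependent policy acting on the common embedded state space, as in the paper's setup.)

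First, split the value difference into a reward part and a dynamics part:
\[
J_z(\pi)-J_{z'}(\pi) = \mathbb E^{z}_\pi\Big[\sum_t \gamma^t (R_z-R_{z'})(s_t,a_t)\Big] + \Big(\mathbb E^{z}_\pi\Big[\sum_t\gamma^t R_{z'}(s_t,a_t)\Big]-\mathbb E^{z'}_\pi\Big[\sum_t\gamma^t R_{z'}(s_t,a_t)\Big]\Big).
\]
The first term is at most $\sum_t\gamma^t L_R\delta = L_R\delta/(1-\gamma)$ by condition 1.

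For the second term, define the history value $V^{z'}_\pi(h_t) := \mathbb E^{z'}_\pi[\sum_{k\ge t}\gamma^{k-t}R_{z'}(s_k,a_k)\mid h_t]$. It takes values in $[0,R_{\max}/(1-\gamma)]$. The telescoping step uses hybrid processes that follow $P_z$ for the first $t$ transitions and $P_{z'}$ afterwards. Consecutive hybrids differ only in the kernel used at step $t$. This yields
\[
\mathbb E^{z}_\pi\Big[\sum_t\gamma^t R_{z'}\Big]-J_{z'}(\pi) = \sum_{t\ge0}\gamma^{t+1}\,\mathbb E^{z}_\pi\Big[\textstyle\int \big(P_z-P_{z'}\big)(ds'\mid s_t,a_t)\,\bar V(h_t,a_t,s')\Big],
\]
where $\bar V(h_t,a_t,s')$ denotes $V^{z'}_\pi$ at the extended history. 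The sum converges since everything is bounded and $\gamma<1$.

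The main care point is bounding the inner integral with only $R_{\max}/(1-\gamma)$, not $2R_{\max}/(1-\gamma)$. Since $P_z-P_{z'}$ integrates to zero, I can subtract the constant $R_{\max}/(2(1-\gamma))$ from $\bar V$. The recentred function has sup-norm at most $R_{\max}/(2(1-\gamma))$. Then $|\int(P-Q)f|\le 2D_{TV}(P,Q)\|f\|_\infty$ gives at most $D_{TV}\cdot R_{\max}/(1-\gamma)\le L_P\delta R_{\max}/(1-\gamma)$ by condition 2, uniformly in the history. Summing, $\sum_t\gamma^{t+1}=\gamma/(1-\gamma)$, so the dynamics term is at most $\gamma L_P R_{\max}\delta/(1-\gamma)^2$.

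Adding the two parts and using symmetry in $z,z'$ gives $|J_z(\pi)-J_{z'}(\pi)|\le L\,d(z,z')$ with the stated $L$. The main obstacle is making the hybrid-process telescoping rigorous for history-dependent policies. I would handle it by defining the hybrid measures explicitly via Ionescu-Tulcea and checking that consecutive hybrids agree on everything except the step-$t$ transition. If the constant-shift trick is unavailable under some convention for $D_{TV}$ (sup over sets versus $L^1$), I would instead absorb the factor of two into $L_P$.
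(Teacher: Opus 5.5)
Your argument is correct and gives exactly the paper's constant, but it takes a different route.

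\textbf{The paper's route.} The paper argues recursively. It writes the one-step Bellman identity for $V^\pi_z - V^\pi_{z'}$ at an arbitrary history and adds and subtracts $\gamma\,\mathbb E_{P_z}[V^\pi_{z'}]$. It then bounds the kernel mismatch by $\mathrm{osc}(V^\pi_{z'})\,D_{TV}\le \tfrac{R_{\max}}{1-\gamma}L_P\,d(z,z')$ and closes with the sup-norm self-bound $u\le c+\gamma u$.

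\textbf{Your route.} You split $J_z-J_{z'}$ at the trajectory level into a reward-mismatch term under the $z$-trajectory law and a dynamics-mismatch term. You then expand the dynamics term exactly by a hybrid telescoping over time steps. Your constant-shift trick is just a proof of the same oscillation inequality the paper invokes, so the key estimate is shared.

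\textbf{Trade-offs.} The paper's recursion is shorter and needs no hybrid measures or limiting argument, but it requires uniform control over all histories. Your version produces an exact simulation identity before any bounding. That makes the history-dependent case transparent, since consecutive hybrids differ in a single kernel. It also immediately gives a sharper variant in which conditions~1 and~2 need only hold on average under the discounted state--action occupancy of $\pi$ in $\mathcal M_z$.

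\textbf{Tightening the details.}
\begin{itemize}
  \item Define $\bar V$ as the value of continuing from an arbitrary history with $\pi$ and $P_{z'}$ (via Ionescu-Tulcea), not as a conditional expectation under $\mathbb P^{z'}_\pi$. You evaluate it at histories generated under $P_z$, which may be $\mathbb P^{z'}_\pi$-null.
  \item Justify the infinite telescoping explicitly. The $T$-th hybrid agrees with $\mathbb P^{z}_\pi$ on $(h_T,a_T)$, so their expected returns differ by at most $\gamma^{T+1}R_{\max}/(1-\gamma)$. Saying that the series converges is not by itself enough.
  \item Folding observation kernels into the policy is only harmless if those kernels are task-independent. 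You do not need that remark, because the paper defines $\Pi$ directly on state--action histories of the common space.
\end{itemize}
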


\begin{proof}
Fix $\pi\in\Pi$ and $z,z'\in\mathcal Z$. Write $V^\pi_z(s) := \mathbb E_{\pi,P_z}\!\big[\sum_{t\ge 0}\gamma^t R_z(s_t,a_t)\,|\,s_0=s\big]$ and similarly $V^\pi_{z'}$. Since rewards lie in $[0,R_{\max}]$, $V^\pi_{z'}$ takes values in $[0,R_{\max}/(1-\gamma)]$, and in particular
\[
\mathrm{osc}(V^\pi_{z'}) := \sup V^\pi_{z'} - \inf V^\pi_{z'} \;\le\; \frac{R_{\max}}{1-\gamma}. \tag{$\ast$}
\]
For history-dependent $\pi$, we work with the random history $h_t=(s_0,a_0,\dots,s_t)$ and define $V^\pi_z(h_t) := \mathbb E_{\pi,P_z}\!\big[\sum_{k\ge 0}\gamma^k R_z(s_{t+k},a_{t+k})\,|\,h_t\big]$; the argument below carries through verbatim with $s$ replaced by $h_t$, since the transition kernels $P_z, P_{z'}$ depend only on $(s_t,a_t)$ and the policy $\pi(\cdot|h_t)$ is the same on both tasks. We write $s$ for readability.

By the Bellman equation,
\begin{align*}
V^\pi_z(s) - V^\pi_{z'}(s)
&= \mathbb E_{a\sim\pi(\cdot|s)}\Big[(R_z(s,a)-R_{z'}(s,a)) \\
&\qquad\qquad + \gamma\big(\mathbb E_{P_z(\cdot|s,a)}[V^\pi_z(s')] - \mathbb E_{P_{z'}(\cdot|s,a)}[V^\pi_{z'}(s')]\big)\Big].
\end{align*}
Adding and subtracting $\gamma\,\mathbb E_{P_z(\cdot|s,a)}[V^\pi_{z'}(s')]$,
\begin{align*}
\mathbb E_{P_z}[V^\pi_z] - \mathbb E_{P_{z'}}[V^\pi_{z'}]
&= \mathbb E_{P_z}[V^\pi_z - V^\pi_{z'}] + \big(\mathbb E_{P_z}[V^\pi_{z'}] - \mathbb E_{P_{z'}}[V^\pi_{z'}]\big).
\end{align*}
Using $|\mathbb E_p[f] - \mathbb E_q[f]| \le \mathrm{osc}(f)\,D_{TV}(p,q)$ together with $(\ast)$ and hypothesis (2),
\[
\big|\mathbb E_{P_z}[V^\pi_{z'}] - \mathbb E_{P_{z'}}[V^\pi_{z'}]\big| \;\le\; \frac{R_{\max}}{1-\gamma}\cdot L_P\, d(z,z').
\]
Combining with hypothesis (1),
\[
\big|V^\pi_z(s) - V^\pi_{z'}(s)\big| \;\le\; L_R\,d(z,z') + \gamma\,\mathbb E_{P_z}\!\big|V^\pi_z(s') - V^\pi_{z'}(s')\big| + \frac{\gamma\,L_P R_{\max}}{1-\gamma}\,d(z,z').
\]
Taking the sup over $s$ and writing $u := \|V^\pi_z - V^\pi_{z'}\|_\infty$, $c := \big(L_R + \tfrac{\gamma L_P R_{\max}}{1-\gamma}\big)d(z,z')$, this gives $u \le c + \gamma u$, hence
\[
u \;\le\; \frac{c}{1-\gamma} \;=\; \left(\frac{L_R}{1-\gamma} + \frac{\gamma L_P R_{\max}}{(1-\gamma)^2}\right)d(z,z') \;=\; L\,d(z,z').
\]
Finally, $|J_z(\pi)-J_{z'}(\pi)| = |\mathbb E_{\rho}[V^\pi_z - V^\pi_{z'}]| \le \|V^\pi_z - V^\pi_{z'}\|_\infty \le L\,d(z,z')$, using the common initial distribution from Assumption~\ref{ass:common}.
\end{proof}

Lemma~\ref{lem:sim} provides one set of sufficient conditions under which Assumption~\ref{ass:lip} holds; Theorem~\ref{thm:main} only requires Assumption~\ref{ass:lip} itself, irrespective of how it is justified.

\subsection{Proxy Coverage: Computation Details}
\label{app:proxy}

For each environment, we represent each task $z$ by a hand-crafted descriptor $\hat{z} \in \mathbb{R}^k$ built from key task properties, listed in \autoref{tab:descriptors}.

\begin{table}[h]
\centering
\begin{tabular}{ll}
\toprule
\textbf{Environment} & \textbf{Descriptor components} \\
\midrule
\texttt{LBF}       & Maximum agent level, number of agents \\
\texttt{SMAX}      & Number of ally agents, number of enemy agents \\
\texttt{RWARE}     & Number of shelves, number of agents \\
\texttt{Connector} & Agent density, grid size \\
\bottomrule
\end{tabular}
\caption{Task descriptors $\hat{z}$ used to compute the proxy coverage.}
\label{tab:descriptors}
\end{table}

Given a test task $z_{\text{test}}$ and a training set $\mathcal{T}_{\text{train}} = \{z_1, \dots, z_N\}$, the \textbf{proxy coverage error} is the distance from the test descriptor to its nearest training neighbour,
\[
\text{PCE}(z_{\text{test}}) \;=\; \min_{z_i \in \mathcal{T}_{\text{train}}} \lVert \hat{z}_i - \hat{z}_{\text{test}} \rVert_2,
\]
and the proxy coverage is $\mathrm{PC} = 1 - \mathrm{PCE}$, so that higher values indicate better coverage. To allow comparison across environments with descriptors of different scales, all descriptor components are normalised to $[0,1]$ using the range observed across the union of training and test tasks. The PC values reported in \autoref{fig:coverage_error} are further normalised per environment by their value at $N=1$, so that curves start at the same reference point and differences in slope reflect task diversity rather than descriptor scale.

\newpage
\section{Environment Details}

\subsection{LBF}

\begin{figure}[h]
    \centering
    \begin{subfigure}[t]{0.3\linewidth}
        \includegraphics[width=\linewidth]{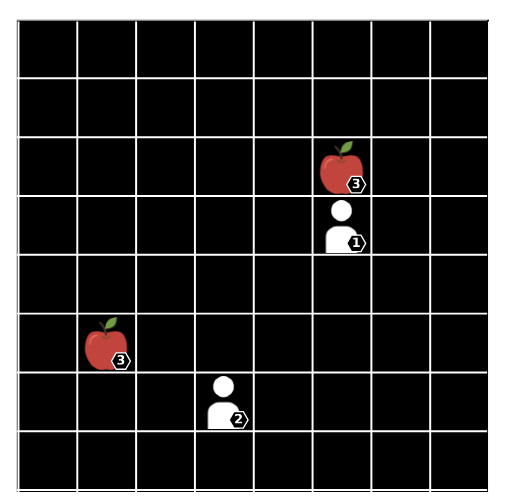}
        \caption{\texttt{8x8-2p-2f}}
        \label{fig:lbf1}
    \end{subfigure}
    \hspace{1cm}
    \begin{subfigure}[t]{0.29\linewidth}
        \includegraphics[width=\linewidth]{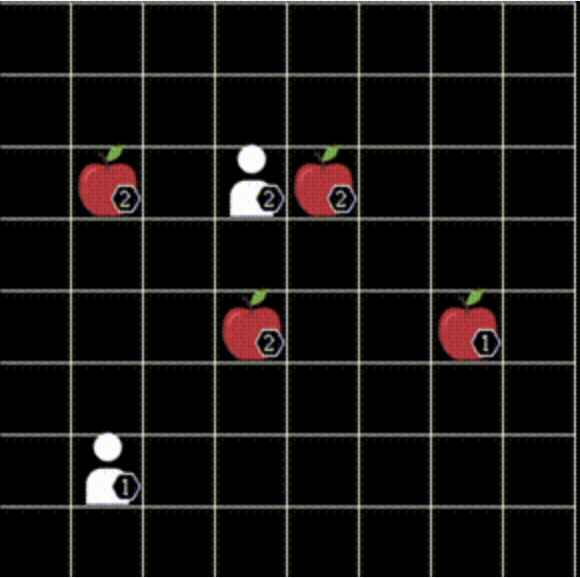}
         \caption{\texttt{8x8-2p-4f}}
        \label{fig:lbf2}
    \end{subfigure}
    \caption{LBF}
    \label{fig:lbf}
\end{figure}

In the Level-Based Foraging (LBF) environment, which is a JAX-based implementation from the Jumanji suite~\citep{bonnetjumanji} of the original framework by \citet{papoudakis2021benchmarking}, agents with assigned levels navigate a grid world to collect food items that can only be consumed if the sum of adjacent agent levels exceeds the food's level. These tasks are defined by the naming convention \texttt{<x size>x<y size>-<n agents>p-<food>f}, specifying the grid dimensions, agent and food counts. Agents observe a limited  $5\times 5$ square grid centered on their location which reveals the positions and levels of nearby items. Operating via a discrete action space of six options that includes no-operation, loading food, and movement in the four cardinal directions, agents receive rewards calculated as the sum of collected food levels divided by the level of the contributing agents.

\subsection{Connector}

\begin{figure}[h]
    \centering
    \begin{subfigure}[t]{0.3\linewidth}
        \includegraphics[width=\linewidth]{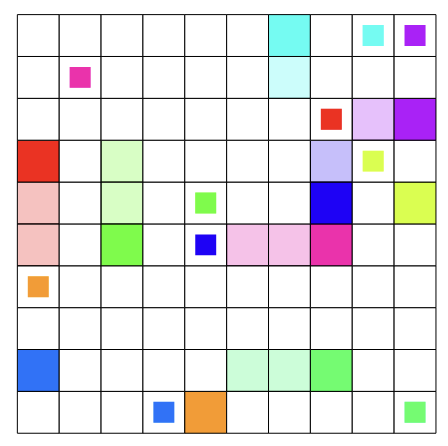}
        \caption{\texttt{con-10x10-10a}}
        \label{fig:con1}
    \end{subfigure}
    \hspace{1cm}
    \begin{subfigure}[t]{0.3\linewidth}
        \includegraphics[width=\linewidth]{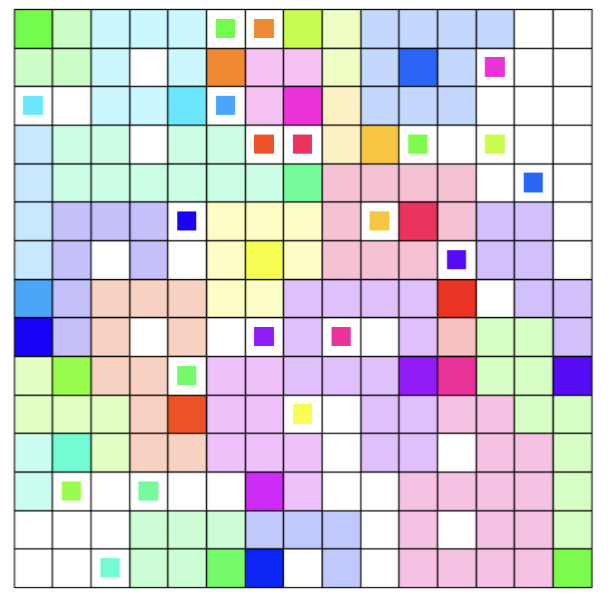}
         \caption{\texttt{con-15x15-23a}}
        \label{fig:con2}
    \end{subfigure}
    \caption{Connector}
    \label{fig:connector}
\end{figure}

In the Connector~\citep{bonnetjumanji} environment, multiple agents are randomly initialized within a grid world to connect assigned start and end points in the minimum number of steps, a task complicated by the fact that movement creates permanent, impassable trails which necessitate cooperation to avoid blocking teammates. These tasks follow the naming convention \texttt{con-<x-size>x<y-size>-<num\_agents>a} to specify grid dimensions and agent count. Agents operate within this system by observing an $n\times n$ local view centered on their location that reveals trails and all target locations, while also accessing the global $(x,y)$ coordinates of their current position and specific destination. Acting through a discrete space of five options including up, down, left, right, and stop, agents are guided by a reward function that yields $+1$ at the moment of connection and a penalty of $-0.03$ for every other step until completion.

\subsection{RWARE}

\begin{figure}[h]
    \centering
    \begin{subfigure}[t]{0.3\linewidth}
        \includegraphics[width=\linewidth]{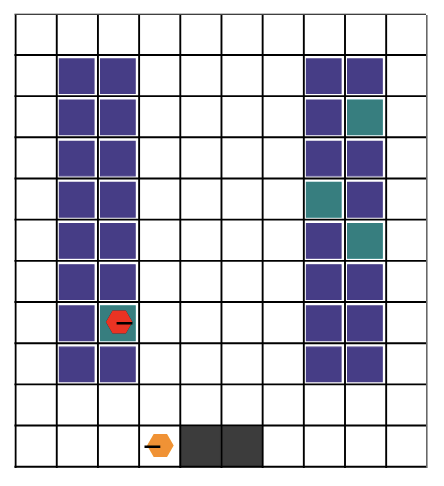}
        \caption{\texttt{tiny-2ag}}
        \label{fig:tiny2ag}
    \end{subfigure}
    \hspace{1cm}
    \begin{subfigure}[t]{0.3\linewidth}
        \includegraphics[width=\linewidth]{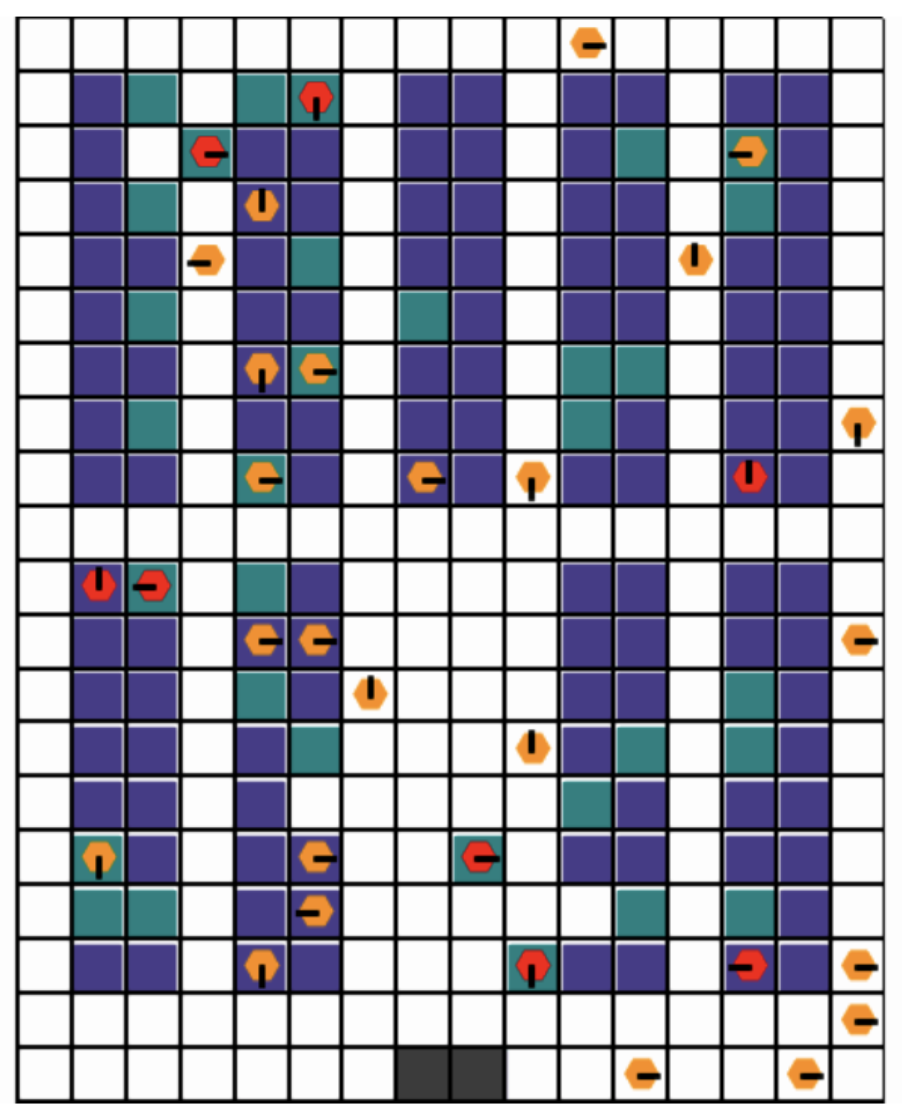}
         \caption{\texttt{medium-32ag}}
        \label{fig:medium32ag}
    \end{subfigure}
    \caption{RWARE}
    \label{fig:rware}
\end{figure}

The Robot Warehouse (RWARE) environment simulates a logistics scenario where a team of autonomous robots must fetch requested goods from shelves and deliver them to workstations to maximize throughput. We utilize the JAX-based implementation from the Jumanji suite~\citep{bonnetjumanji} based on the original work by~\citet{papoudakis2021benchmarking}, which notably terminates episodes immediately upon agent collision rather than attempting to resolve the conflict. Tasks follow the convention \texttt{<size>-<num agents>ag}, where the size determines the shelf layout. Agents operate under partial observability within a $3\times 3$ view centered on their position that reveals self and peer states alongside shelf status, using a discrete action space of five commands for navigation and loading to achieve a sparse reward of $+1$ granted solely for successful deliveries.

\subsection{SMAX}

\begin{figure}[h]
    \centering
    \begin{subfigure}[t]{0.3\linewidth}
        \includegraphics[width=\linewidth]{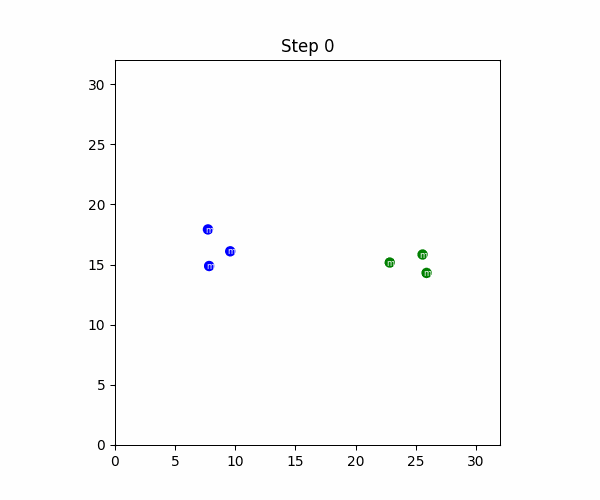}
        \caption{\texttt{3m}}
        \label{fig:3m}
    \end{subfigure}
    \hspace{1cm}
    \begin{subfigure}[t]{0.3\linewidth}
        \includegraphics[width=\linewidth]{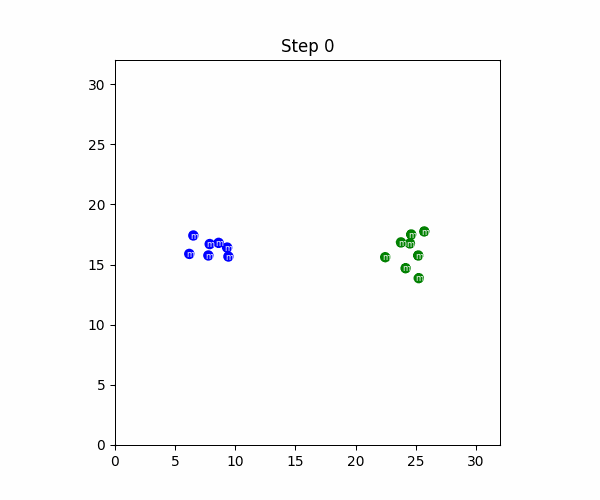}
         \caption{\texttt{7m vs 8m}}
        \label{fig:7mvs8m}
    \end{subfigure}
    \caption{SMAX}
    \label{fig:rware-env}
\end{figure}

SMAX is a simplified StarCraft micromanagement environment from the JaxMARL suite~\citep{jaxmarl}, inspired by the popular SMAC benchmark~\citep{samvelyan2019starcraftmultiagentchallenge}. In SMAX, teams of allied units must coordinate to defeat enemy units controlled by a heuristic AI. While SMAX supports a variety of unit types and team compositions, we focus on marine-only scenarios for this work. Tasks follow the convention \texttt{<N>m} for symmetric scenarios with $N$ marines on each side, or \texttt{<N>m\_vs\_<M>m} for asymmetric scenarios where $N$ allied marines face $M$ enemies. Agents operate under partial observability with a limited sight range, observing ally and enemy positions, health, and unit types within their field of view. The discrete action space includes movement in four cardinal directions, stopping, and attacking visible enemies. Agents receive a team reward of $+1$ for winning the battle, with optional shaped rewards for dealing damage and eliminating enemy units. Episodes terminate when all units on one side are eliminated or after a maximum of 100 timesteps.

\newpage
\section{Multi-Task Offline MARL can Generalise better than Behaviour Cloning}\label{ap:marl-vs-bc}
The findings from~\citet{medirattageneralization} paint a bleak outlook for the generalisation capabilities of Offline RL algorithms compared to simple behaviour cloning. 
To establish if we observe a similar trend, we aggregate the normalised episode returns across all test tasks from \texttt{LBF}, \texttt{RWARE}, \texttt{Connector} and \texttt{SMAX} when trained using the full training set, to compare the three algorithms. In \autoref{tab: aggregate_performance}, we show the mean and standard error for each algorithm. 

We investigate which offline training objective: behaviour cloning (BC), conservative Q-learning (CQL), or the autoregressive ICQ (Oryx) achieves the strongest generalisation to held-out test tasks. 
On \texttt{LBF} and \texttt{Connector}, Oryx performs best, followed by BC, with CQL performing worst. In contrast, on \texttt{RWARE}, CQL achieves the strongest generalisation, followed by ICQ and then BC. Finally, on \texttt{SMAX}, CQL again performs best, closely followed by BC, with Oryx performing the worse.

We hypothesize that these findings differ from those of~\citet{medirattageneralization} due to differences in dataset composition. While their study relies on purely \texttt{Expert} data, we train on mixed replay datasets. Expert data is well suited to behaviour cloning, whereas many offline RL methods such as CQL~\citep{schweighofer2022dataset} benefit from datasets with diverse behaviour quality. Indeed, the \texttt{LBF} and \texttt{Connector} datasets are heavily skewed towards \texttt{Expert} trajectories due to the relative simplicity of these tasks, which likely explains the weaker performance of CQL in these environments. In contrast, the \texttt{RWARE} datasets exhibit the greatest diversity in data quality, aligning with CQL’s superior performance.

Overall, our results suggest that in settings with mixed data quality, offline MARL methods exhibit stronger zero-shot generalisation than behaviour cloning.


\definecolor{oryxcolor}{RGB}{0, 0, 255}    
\definecolor{cqlcolor}{RGB}{255, 165, 0}     
\definecolor{bccolor}{RGB}{0, 128, 0}      

\begin{table}[h]
    \small
    \centering
    \caption{\textit{Comparison of test task performance of all three models.}The mean and standard error of the performance across all test tasks on \texttt{RWARE}, \texttt{LBF}, \texttt{Connector} and \texttt{SMAX} for each of the multi-task algorithms (largest mean highlighted with bold). In the final column the combined mean across all tasks from the four environments is computed. \textbf{In contrast to the findings by \citet{medirattageneralization}, we find that on each environment the best performing algorithm is an Offline RL method (MT CQL-Sable or MT Oryx), rather than the BC model. When aggregated across all the test tasks combined, MT Oryx performs the best.}}
    
    \begin{tabular}{cl||cccc|c}
    \toprule
    & Algorithm & RWARE & LBF & Connector & SMAX & Combined \\
    \midrule
    \textcolor{oryxcolor}{\Large$\bullet$} & MT Oryx      &  $0.587 \pm 0.054$ &  $\mathbf{0.803 \pm 0.026}$ &  $\mathbf{0.852 \pm 0.002}$ &  $0.73 \pm 0.012$ & $\mathbf{0.752 \pm 0.023}$\\
    \textcolor{cqlcolor}{\Large$\bullet$} & MT CQL-Sable &  $\mathbf{0.620 \pm 0.066}$ &  $0.562 \pm 0.029$ &  $0.668 \pm 0.018$ & $\mathbf{0.89 \pm 0.027}$  &  $0.697 \pm 0.024$ \\
    \textcolor{bccolor}{\Large$\bullet$}  & MT BC-Sable  &  $0.415 \pm 0.050$ &  $0.797 \pm 0.030$ &  $0.775 \pm 0.004$ & $0.88 \pm 0.01$ & $0.718 \pm 0.027$ \\
    \bottomrule
    \end{tabular}
    \label{tab: aggregate_performance}
\end{table}

\newpage
\section{Dataset Quality Ablation} \label{sec:dataset-quality-ablation}

\begin{figure}[h!]
\centering
\includegraphics[width=0.6\linewidth]{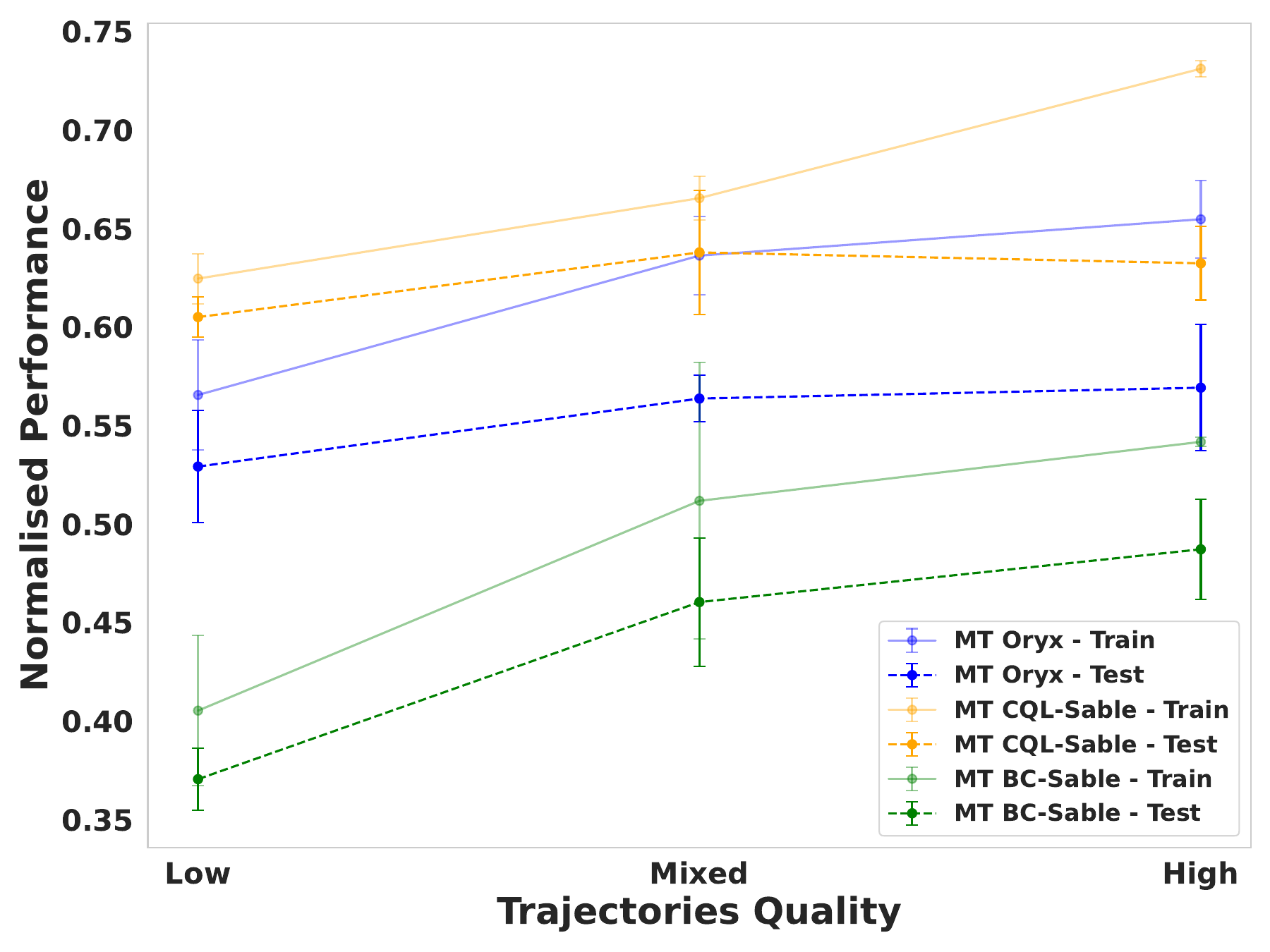}
\caption{\textit{Performance of MT-Oryx, MT-CQL-Sable, and MT-BC-Sable on RWARE with different trajectory subsets}. \textbf{High-quality trajectories improve training performance, particularly for MT-CQL-Sable, but these gains do not transfer to the test tasks. Low-quality trajectories consistently yield the worst results.}}
\label{fig:trajectory_quality}
\end{figure}

\textbf{Do higher quality trajectories improve generalisation?}
As observed in \autoref{sec:datasize_scaling}, increasing dataset size does not lead to significant improvements in generalization to unseen tasks. A natural follow-up question is: \emph{how does the quality of trajectories in the dataset affect training and test performance?} To investigate this, we conduct an experiment where training is performed with trajectories sampled from specific subsets of our dataset. Low-quality trajectories are those collected during the first two-thirds of the online training phase, while High-quality trajectories are those from the final third. Results on RWARE are shown in \autoref{fig:trajectory_quality}. For all algorithms, training performance improves with High-quality trajectories, though the gains on test tasks remain marginal. Across all three algorithms, training with Low-quality trajectories consistently yields the worst results on both training and test tasks. These results suggest that the most effective strategy is to prioritize High-quality trajectories while retaining a small fraction of Low-quality ones as negative examples.

\newpage
\section{Scaling Analysis on \texttt{LBF} and \texttt{Connector}} \label{sec:connnector_model_scaling}

In this section, we complement the experiments presented in \autoref{sec:datasize_scaling}. We verify whether the model-size scaling trends observed in \texttt{RWARE} also extends to \texttt{LBF}, \texttt{SMAX}, and \texttt{Connector}. As shown in \autoref{fig:connector_scaling}, we observe similar behavior: performance improves with model size up to a critical point. However, All \texttt{LBF}, \texttt{Connector}, and \texttt{SMAX} are considerably easier than \texttt{RWARE}, and therefore their performance saturates at much smaller model sizes. Furthermore, although there is a large performance gap between BC-Sable and the other algorithms on \texttt{LBF}, the overall scaling trend remains visible, albeit more marginal.
\begin{figure}[h]
    \centering
    \begin{subfigure}[t]{0.47\linewidth}
        \includegraphics[width=\linewidth]{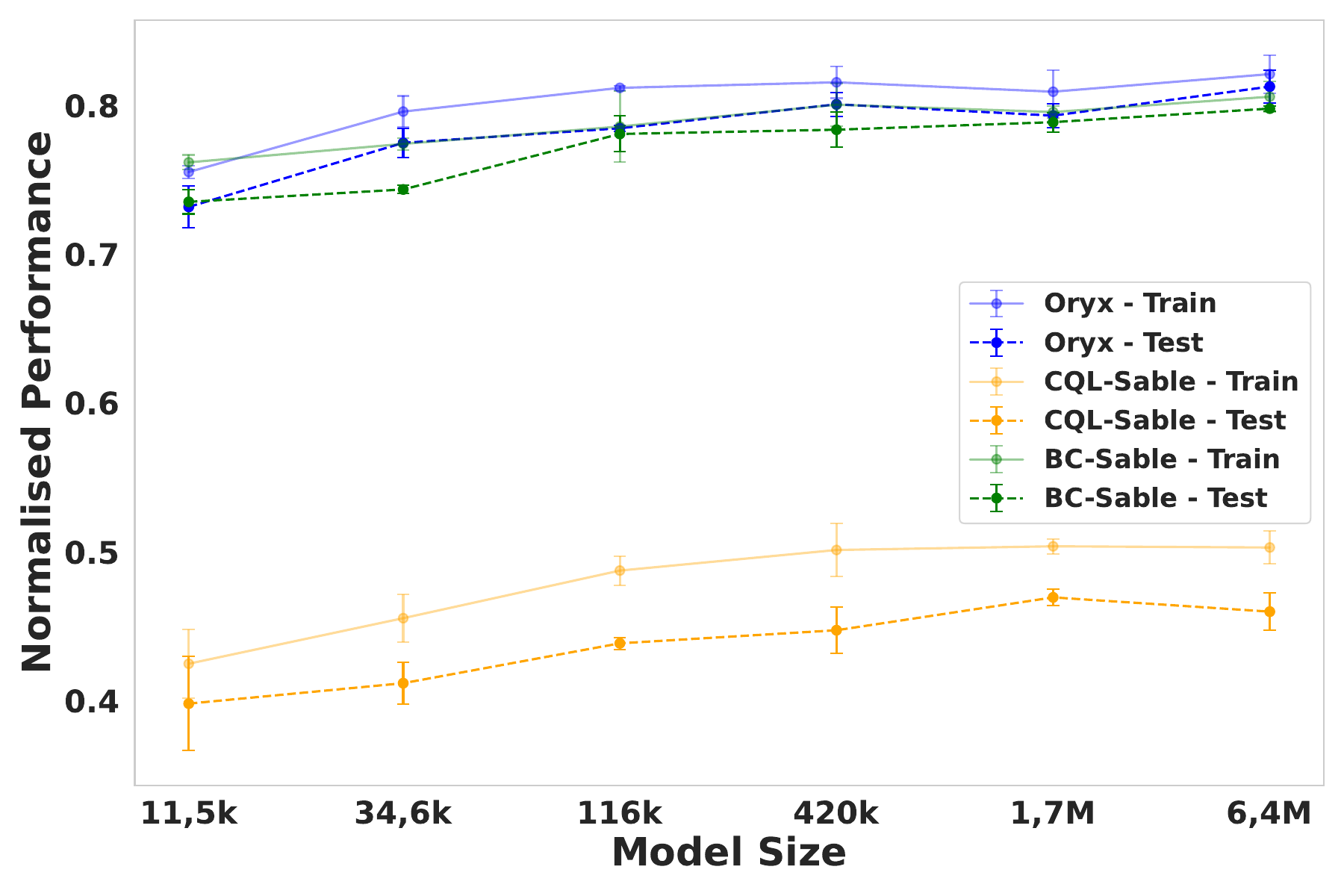}
        \caption{\texttt{LBF}}
        \label{fig:network_lbf}
    \end{subfigure}
    \hfill
    \begin{subfigure}[t]{0.47\linewidth}
        \includegraphics[width=\linewidth]{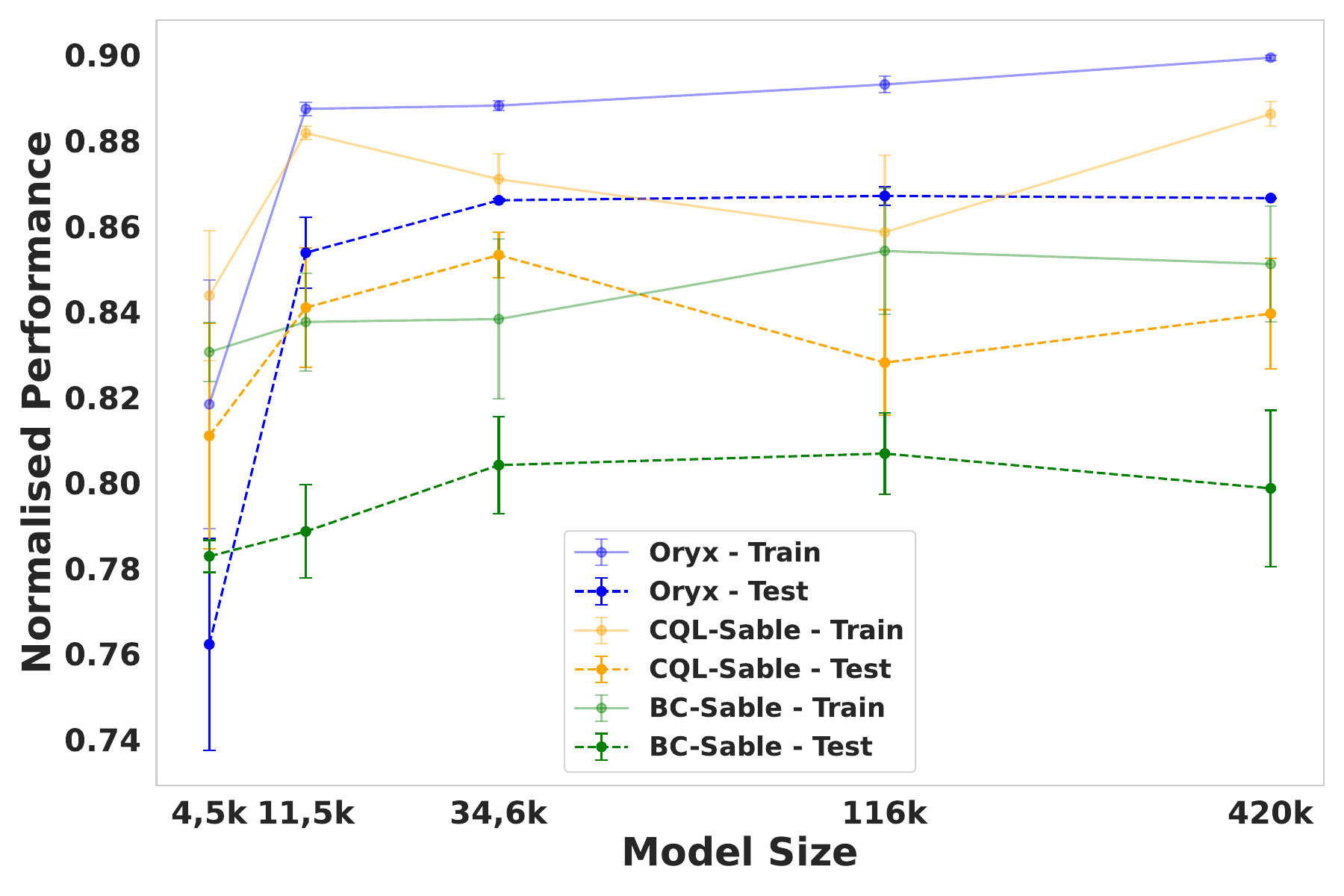}
         \caption{\texttt{Connector}}
        \label{fig:network_connector}
    \end{subfigure}
    \begin{subfigure}[t]{0.47\linewidth}
        \includegraphics[width=\linewidth]{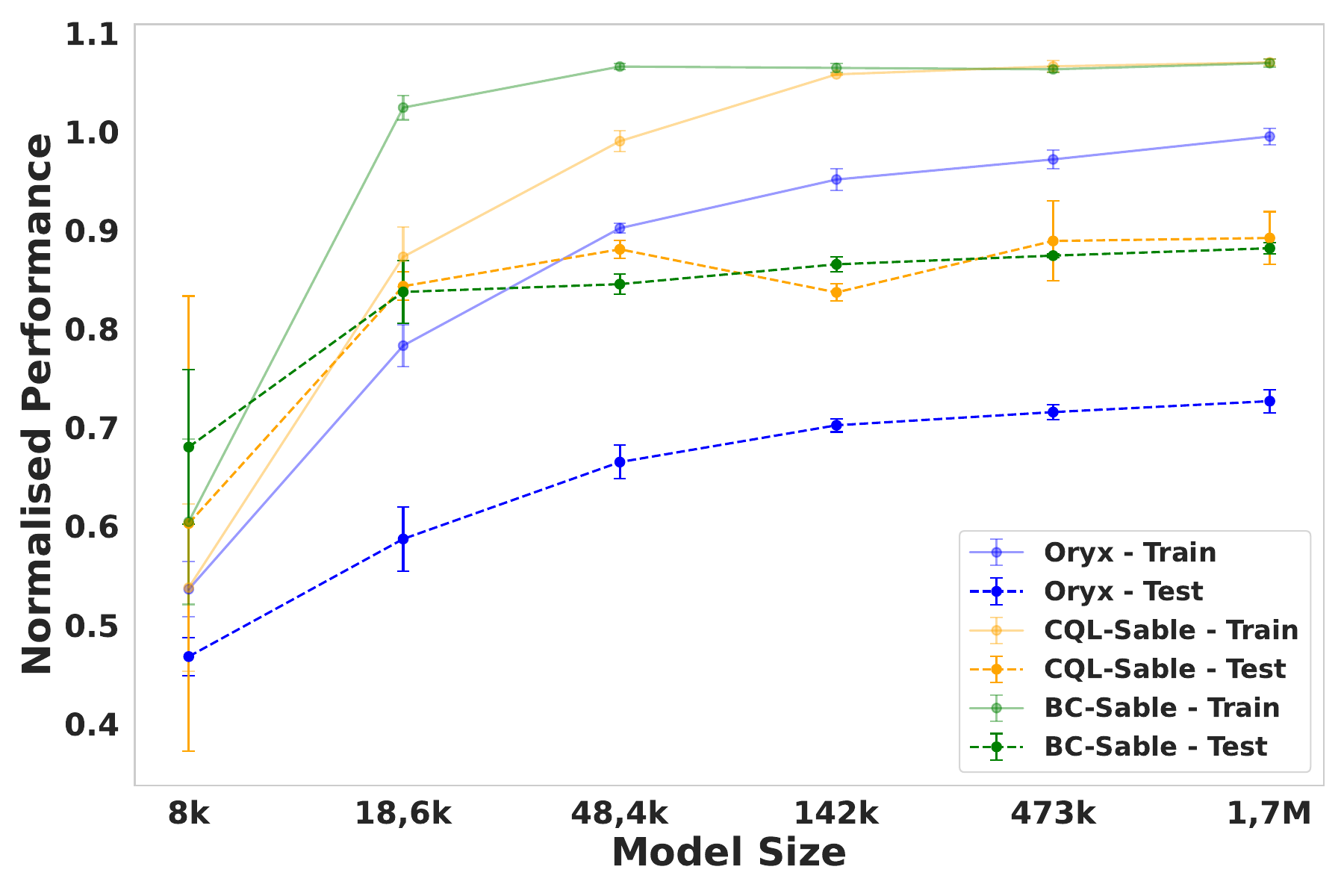}
        \caption{\texttt{SMAX}}
        \label{fig:network_smac}
    \end{subfigure}
    \caption{\textit{Performance of MT-Oryx, MT-CQL-Sable, and MT-BC-Sable on \texttt{LBF},  \texttt{Connector}, and \texttt{SMAX} with different model sizes}. \textbf{Both train and test performance of all algorithms improve with increasing model size up to a critical threshold, beyond which performance plateaus.}}
     \label{fig:connector_scaling}
\end{figure}

\clearpage
\section{Visualisation of Multi-Task Policy}
\label{sec:viz_mt_policy}

In order to qualitativly validate that the MT models have learn multiple team strategies which are quite distinct across tasks we visually inspected roll-outs across tasks. Here we visualise the learn strategy on two very distinct tasks \texttt{medium-2ag} and \texttt{medium-32ag}\footnote{GIFs available on website: \url{https://sites.google.com/view/multi-task-marl}}. The main challenge in the first task is the sparsity of the warehouse. Accordingly the model learnt a strategy whereby the two agents rapidly traverse the warehouse to explore efficiently and find the shelf to be collected. In contrast, the central challenge on the second task is that the warehouse is very congested. If the agents collide the episode ends. Accordingly the model learnt a smart strategy of moving completed agents out of the way by sending them to the bottom right-hand corner. Importantly, a single MT model learn both of these different multi-agent strategies simultaneously.

\begin{figure}[h]
    \centering
    \includegraphics[width=0.9\linewidth]{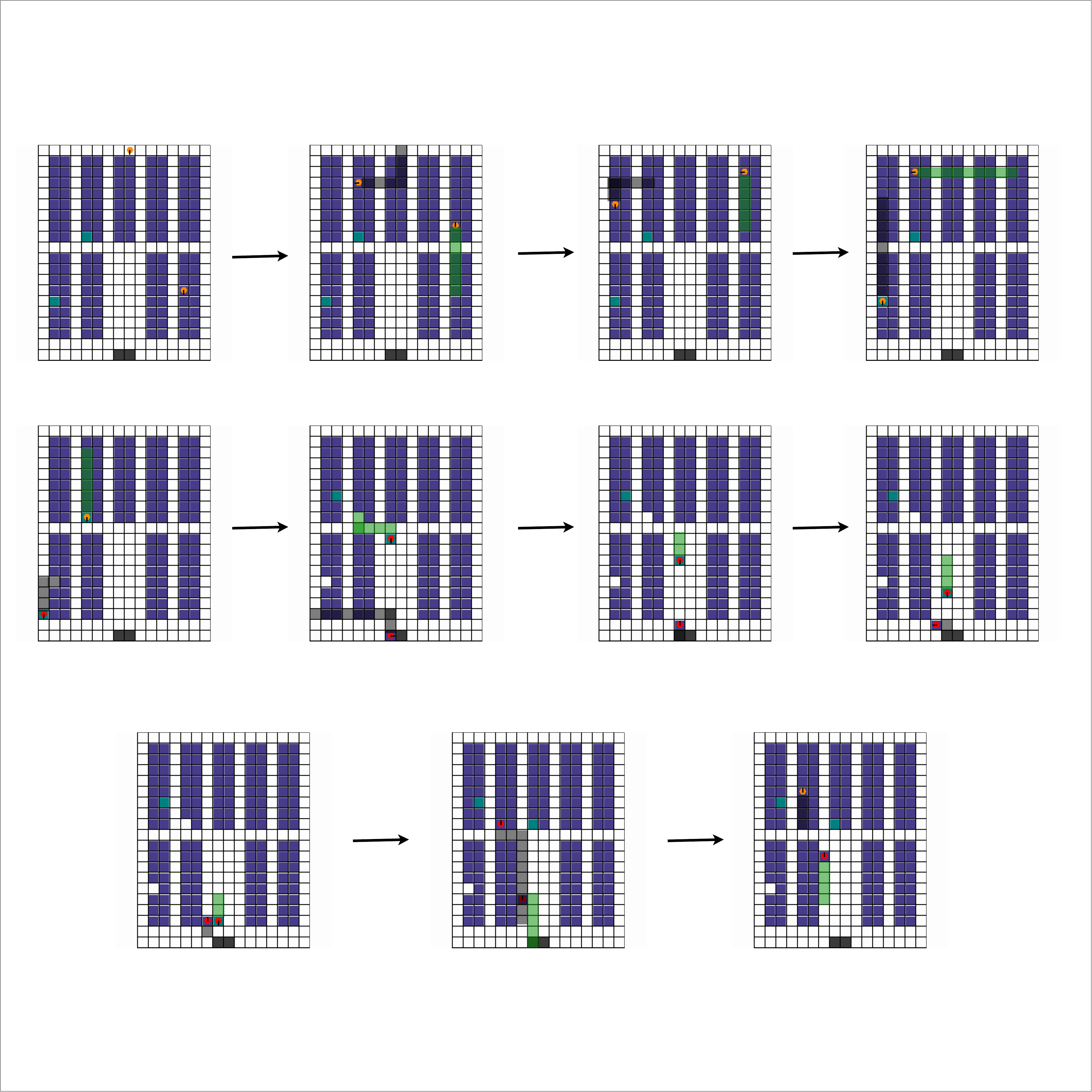}
    \caption{Visualisation of team strategy on \texttt{medium-2ag}. Frames should be read left to right, top to bottom.}
    \label{fig:medium-2ag}
\end{figure}

\clearpage
\begin{figure}
    \centering
    \includegraphics[width=0.9\linewidth]{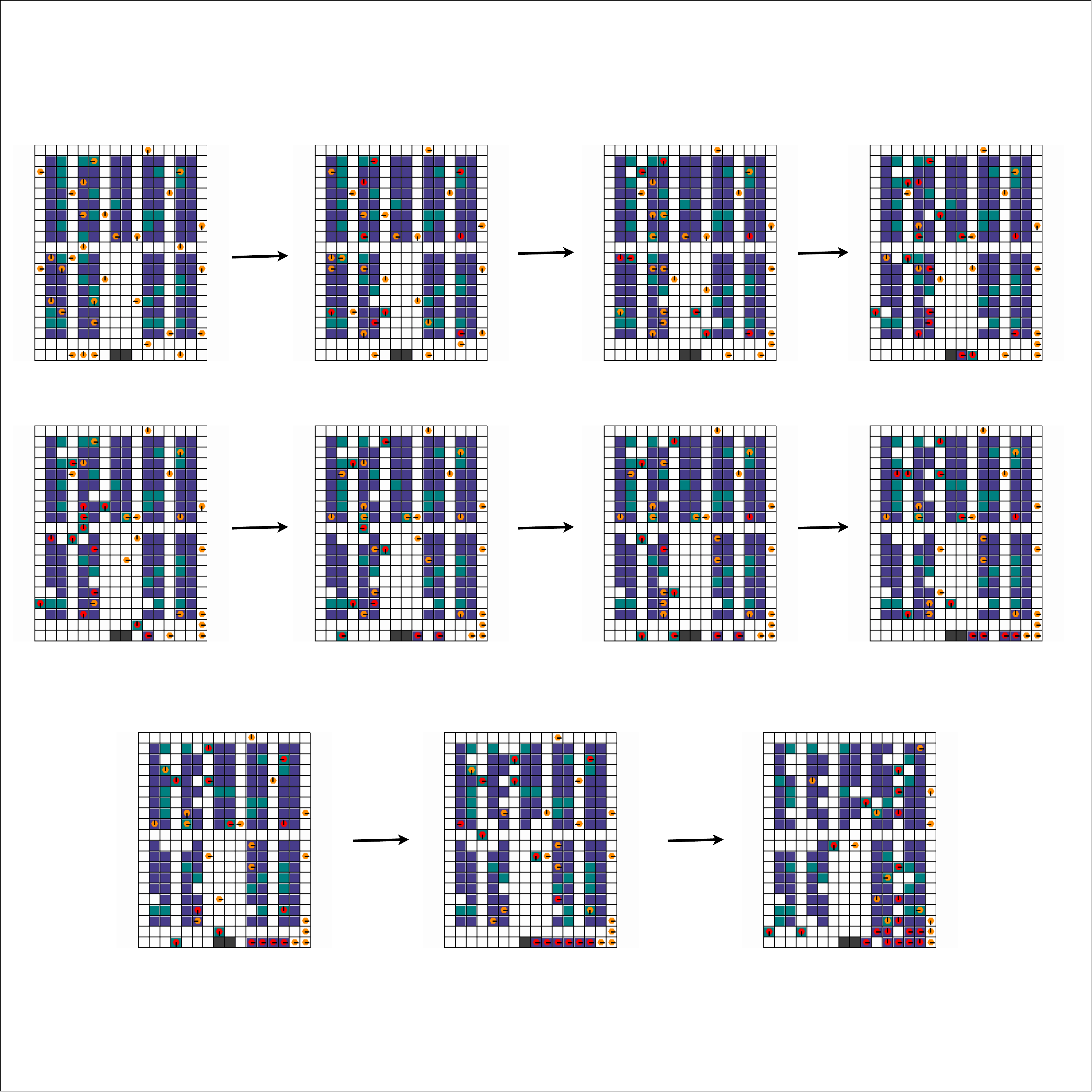}
    \caption{Visualisation of team strategy on \texttt{medium-32ag}. Frames should be read left to right, top to bottom.}
    \label{fig:medium-32ag}
\end{figure}

\clearpage
\section{Computational Requirements}
\label{sec:compute_requirements}

All experiments were conducted on a high-performance computing cluster utilizing the Jobset operator for orchestration. Each experimental run was allocated a single worker node equipped with one \textbf{NVIDIA A100-SXM4 GPU} (80\,GB VRAM) and 24 logical cores of an \textbf{AMD EPYC 7742} processor.

The maximum wall-clock time for individual experiments was approximately 18 hours. We observed that computational resource usage remained consistent across all baselines, primarily because our setup avoids the use of task-specific heads. Furthermore, the retentive architecture inherent to the SABLE backbone—and by extension, Oryx—enables efficient scaling with respect to the number of agents. Consequently, our multi-task variants retain this computational efficiency even as environment complexity increases.

\newpage
\section{Primary Task Splits}
\label{sec:task-splits}

To evaluate the generalization capabilities of our approach, we curated distinct sets of training and testing scenarios for each environment. The specific scenarios comprising each train/test split are detailed in  \autoref{tab:task-splits}.

\begin{table}[htbp]
\centering
\caption{\textbf{Train/Test Task Splits for All Environments.} We list the specific scenarios used for training and out-of-distribution generalization testing.}
\label{tab:task-splits}
\begin{tabular}{@{}llcp{\dimexpr 0.6\textwidth-2\tabcolsep}@{}}
\toprule
\textbf{Environment} & \textbf{Split} & \textbf{\# of Tasks} & \textbf{Scenarios} \\
\midrule
\multirow{2}{*}{LBF} & Train & 5 & \texttt{\{8x8-2p-2f, 10x10-3p-3f, 15x15-3p-5f, 15x15-4p-5f, 16x16-5p-6f\}} \\
\cmidrule(l){2-4}
 & Test & 4 & \texttt{\{12x12-4p-5f, 14x14-3p-3f, 17x17-6p-8f, 17x17-8p-10f\}} \\
\midrule
\multirow{2}{*}{RWARE} & Train & 8 & \texttt{\{tiny-2ag, tiny-4ag, tiny-8ag, small-2ag, small-4ag, medium-2ag, medium-8ag, xlarge-8ag\}} \\
\cmidrule(l){2-4}
 & Test & 12 & \texttt{\{tiny-16ag, small-16ag, small-32ag, medium-16ag, medium-32ag, large-16ag, xlarge-16ag, xlarge-32ag,  giant-32ag, colossal-32ag, titanic-16ag, titanic-32ag\}} \\
\midrule
\multirow{2}{*}{SMAX} & Train & 5 & \texttt{\{3m, 6m, 5m vs 6m, 10m, 7m vs 8m\}} \\
\cmidrule(l){2-4}
 & Test & 7 & \texttt{\{4m, 5m , 9m vs 10m, 8m vs 9m, 10m vs 11m, 10m vs 12m, 13m vs 15m\}} \\
\midrule
\multirow{2}{*}{Connector} & Train & 10 & \texttt{\{12x12x4a, 15x15x3a, 18x18x4a, 21x21x5a, 24x24x6a, 27x27x7a, 30x30x10a, 33x33x11a, 36x36x12a, 39x39x13a\}} \\
\cmidrule(l){2-4}
 & Test & 11 & \texttt{\{42x42x18a, 45x45x23a, 48x48x20a, 51x51x28a, 54x54x30a, 57x57x32a, 60x60x33a, 63x63x35a, 66x66x40a, 69x69x43a, 72x72x45a\}} \\
\bottomrule
\end{tabular}
\end{table}

\newpage
\section{Hyperparameters}
\label{sec:hyperparameters}

This section details the hyperparameters used for our experiments. 

\begin{table}[h!]
\centering
\small
\caption{Default network settings for each environment.}
\label{tab:network-hyperparams}
\begin{tabular}{@{}lcccc@{}}
\toprule
\textbf{Parameter} & \textbf{LBF} & \textbf{Connector} & \textbf{RWARE}  & \textbf{SMAX}\\
\midrule
Model embedding dimension       & 512       & 512       & 512   & 512\\
Number of transformer heads     & 4         & 4         & 4         & 4 \\
Number of transformer blocks    & 1         & 1         & 1         & 1 \\
HL-Gauss value support & [-1, 1]   & [-1, 1]   & [-20, 20]  & [-0.5, 2.5] \\
HL-Gauss number of bins         & 51        & 51        & 51        & 51 \\
Sable's decay scaling factor    & 0.8       & 0.8       & 0.8       & 0.8 \\
\bottomrule
\end{tabular}
\end{table}

\begin{table}[h!]
\centering
\small
\caption{Default training settings.}
\label{tab:training-hyperparams}
\begin{tabular}{@{}lc@{}}
\toprule
\textbf{Hyperparameter} & \textbf{Value} \\
\midrule
Number of training updates & \num{60000} \\
Number of evaluations & 600 \\
Number of evaluation episodes & 32 \\
Number of absolute evaluation episodes & 320 \\
Learning rate & \num{1e-3} \\
Discount ($\gamma$) & 0.99 \\
Polyak averaging coefficient ($\tau$) & 0.005 \\
Maximum gradient norm & 10 \\
Sample sequence length & 20 \\
Sample batch size & 480 \\
Value temperature & 1000 \\
Policy temperature & 0.1 \\
Critic loss coefficient & 1 \\
\bottomrule
\end{tabular}
\end{table}

\begin{table}[h!]
\centering
\small
\caption{MT-Oryx specific settings.}
\label{tab:oryx-hyperparams}
\begin{tabular}{@{}lc@{}}
\toprule
\textbf{Hyperparameter} & \textbf{Value} \\
\midrule
Value temperature & 1000 \\
Policy temperature & 0.1 \\
Critic loss coefficient & 1 \\
HL-Gauss smoothing ratio & 0.75 \\
\bottomrule
\end{tabular}
\end{table}

\begin{table}[h!]
\centering
\small
\caption{MT-CQL-Sable specific settings.}
\label{tab:cqlsable-hyperparams}
\begin{tabular}{@{}lc@{}}
\toprule
\textbf{Hyperparameter} & \textbf{Value} \\
\midrule
CQL loss coefficient & 10 \\
HL-Gauss smoothing ratio & 0.75 \\
\bottomrule
\end{tabular}
\end{table}

\newpage
\section{Architecture design ablations}
\label{sec:design-choices-ablations}

\textbf{HL-Gauss.}
To test the effect of using HL-Gauss~\citep{farebrother2024stop} for multi-task learning, we conduct an ablation on the full set of \texttt{RWARE} training tasks where we run MT Oryx and MT CQL-Sable with and without HL-Gauss for value function learning (e.g. standard TD mean-squared-error). We compare the algorithms on multi-task \texttt{RWARE} since the task-to-task variance in episode returns is significant and therefore more challenging to accurately learn a multi-task value function. As shown in \autoref{fig:hl_gauss}, using HL-Gauss leads to slightly better performance ($\approx$ \num{8}\% improvement) on test tasks for MT Oryx, while the effect on MT CQL-Sable is marginal.

\begin{figure}[h]
    \centering
    \begin{subfigure}[t]{0.3\linewidth}
        \includegraphics[width=\linewidth]{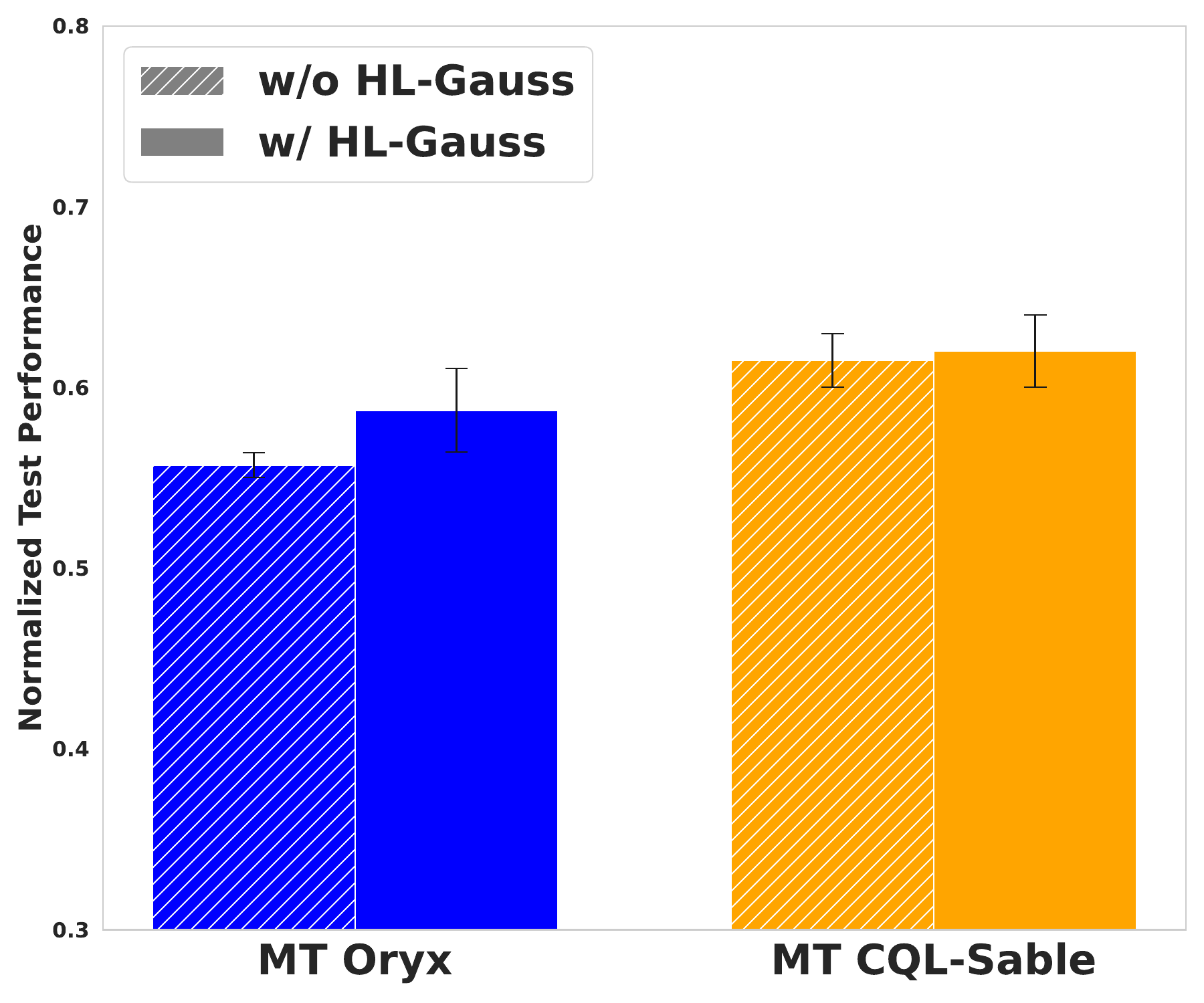}
        \caption{HL-Gauss}
        \label{fig:hl_gauss}
    \end{subfigure}
    \hfill
    \begin{subfigure}[t]{0.3\linewidth}
        \includegraphics[width=\linewidth]{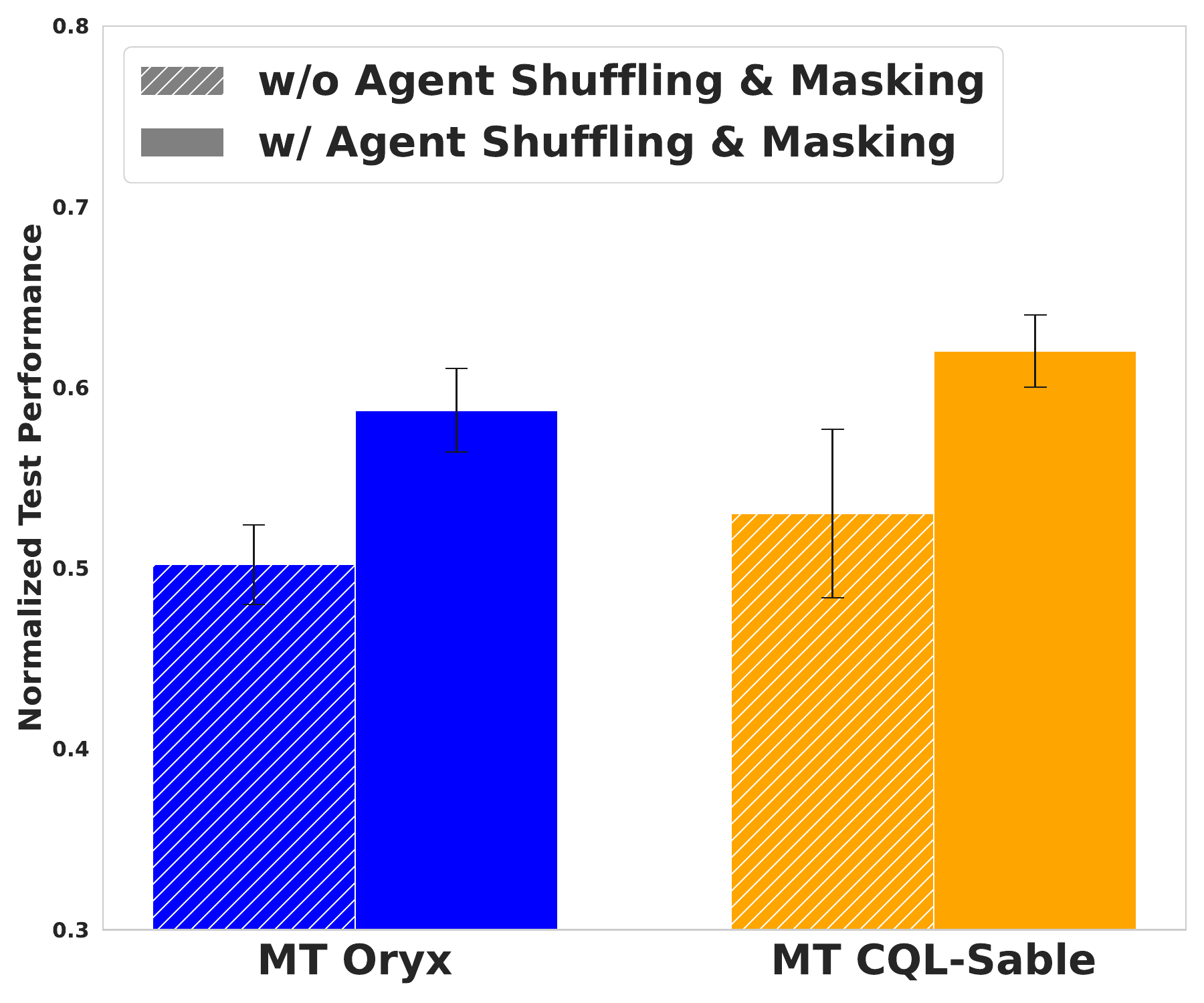}
        \caption{Agent Masking \& Shuffling}
        \label{fig:ablation_suffling}
    \end{subfigure}
    \hfill
    \begin{subfigure}[t]{0.3\linewidth}
        \includegraphics[width=\linewidth]{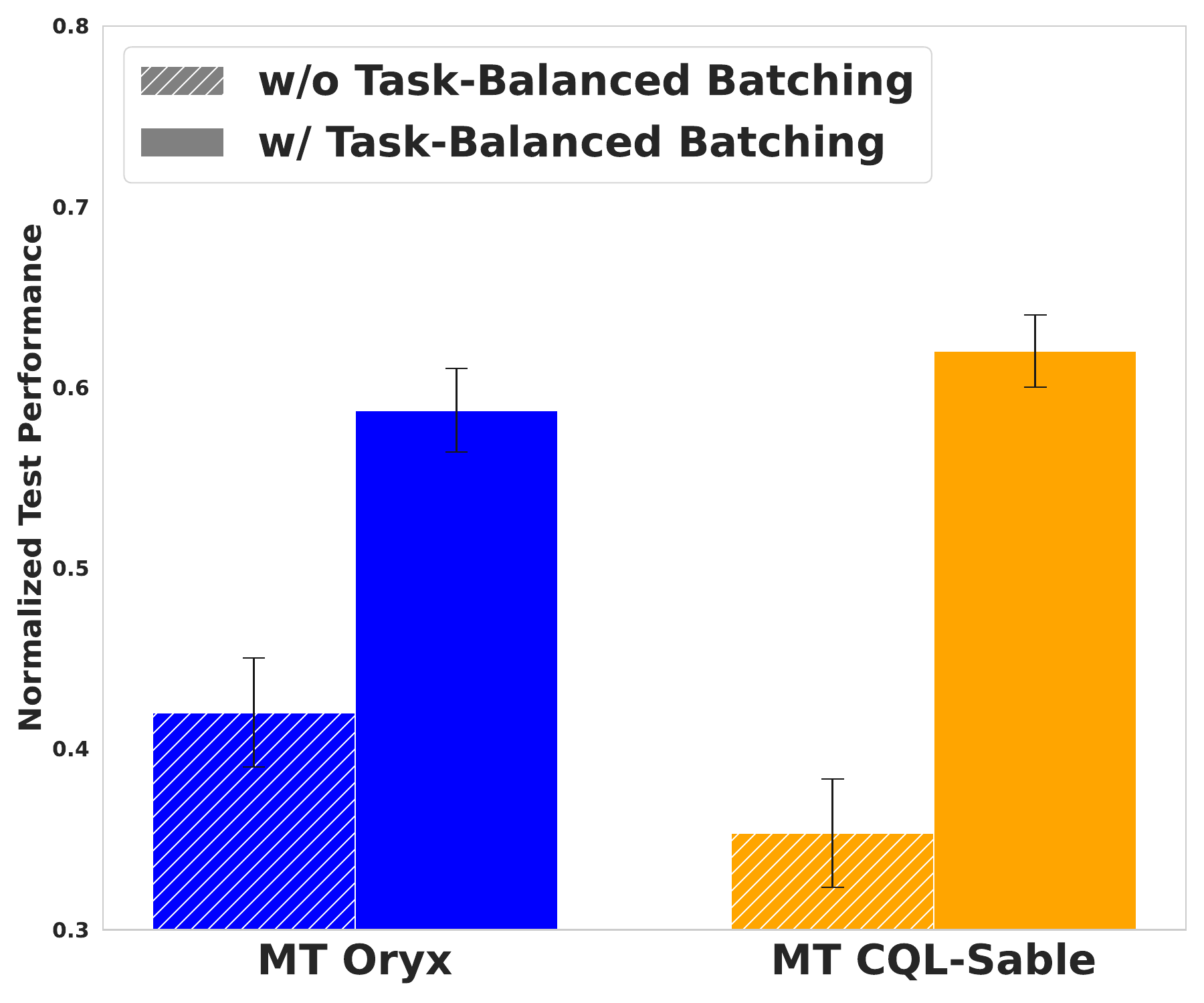}
        \caption{Task-Balanced Batching}
        \label{fig:ablation_balance}
    \end{subfigure}
    \caption{\textit{Ablation studies}. \textbf{Left:} Using HL-Gauss improves test performance for MT Oryx by $\approx$8\%, while the effect on MT CQL-Sable is marginal. \textbf{Middle:} Disabling agent masking and shuffling reduces test performance by $\approx$16\% on average for both algorithms. \textbf{Right:} Removing task-balanced batching has the highest impact with $\approx$ 37\% drop in test performance on average for both MT Oryx and MT CQL-Sable.}
\end{figure}

\textbf{Agent shuffling and masking.} To test the impact of \emph{not} masking and shuffling agents we conduct a similar ablation to above on \texttt{RWARE}. We observe decrease in performance of $\approx$ 16\% on average for both algorithms on the test tasks, when we do not mask and shuffle agents (see \autoref{fig:ablation_suffling}).

\textbf{Task-balanced batching.} Finally, we conducted an ablation on how we sample data from the multi-task dataset. In the first case we use our proposed task-balanced batching method, which includes a fair mix of samples from each task in every batch. In the alternative approach we choose a random task at each update step and sample a full batch from the chose single task. The results in \autoref{fig:ablation_balance} shows a 37\% decrease in test performance on average for both MT Oryx and MT CQL-Sable without task-balanced batching.

\clearpage
\section{Datasets}
\label{sec:appendix_datasets}

\subsection{Dataset Release Plan}
To guarantee the long-term reproducibility of this project, we will upload all of our datasets to a public HuggingFace repository\footnote{\url{https://sites.google.com/view/multi-task-marl}}. This will be done upon publication of this work. 

\subsection{Dataset Statistics}
The following sections detail the statistics of the offline datasets for the RWARE, Connector, SMAX, and LBF environments used in our experiments. Datasets were generated by recording rollouts from an online Sable \citep{mahjoub2025sable} agent at different intervals during its training. All data is collected from fixed intervals over training using an evaluation policy to vary the amount of data collected while maintaining a standard set of policies to sample from. For RWARE, we also create multiple datasets of different sizes by varying the number of evaluations sampled in order to perform our data-scaling experiments.

\subsubsection{RWARE}
For our data-scaling experiments in the RWARE environment, we generated three offline datasets of varying sizes. The datasets were constructed by collecting 122, 244, and 610 evaluation rollouts from a pre-trained online Sable agent \citep{mahjoub2025sable}. \autoref{tab:rware_dataset_stats} provides detailed statistics for each dataset size across all RWARE scenarios, illustrating how the number of episodes and transitions scales with the number of collected rollouts.

\subsubsection{Connector}

For the Connector environment, we generated 10 distinct offline datasets, one for each training scenario. Each dataset contains approximately 10 million transitions. The data collection process involved recording evaluation rollouts at 50 different checkpoints during the training of an online Sable agent. At each checkpoint, we generated 160 rollouts of 1280 timesteps each, resulting in a total of $50 \times 160 \times 1280 \approx 10.24$ million transitions per scenario. The ten scenarios used to create these datasets are listed in \autoref{tab:connector_dataset_stats}.

\subsubsection{SMAX}
For the SMAX environment, we generated five distinct offline datasets corresponding to each training scenario, with each dataset comprising approximately 50 million transitions. We collected this data by recording evaluation rollouts from a Sable agent during online training.

\subsubsection{LBF}
For LBF we collected all the the training data from an online Sable run for each LBF scenario.

\begin{table}[htbp]
\centering
\caption{
\textbf{RWARE dataset statistics across different data collection checkpoints.}
We report the total number of episodes and timesteps (transitions) for each scenario, corresponding to datasets created from 122, 244, and 610 evaluation rollouts.
}
\label{tab:rware_dataset_stats}
\resizebox{\textwidth}{!}{
\sisetup{group-separator={,}, group-minimum-digits=4}
\begin{tabular}{@{}l S[table-format=5.0] S[table-format=7.0] S[table-format=5.0] S[table-format=8.0] S[table-format=5.0] S[table-format=8.0]@{}}
\toprule
& \multicolumn{2}{c}{\textbf{122 Rollouts}} & \multicolumn{2}{c}{\textbf{244 Rollouts}} & \multicolumn{2}{c}{\textbf{610 Rollouts}} \\
\cmidrule(lr){2-3} \cmidrule(lr){4-5} \cmidrule(lr){6-7}
\textbf{Scenario Name} & {\textbf{Episodes}} & {\textbf{Timesteps}} & {\textbf{Episodes}} & {\textbf{Timesteps}} & {\textbf{Episodes}} & {\textbf{Timesteps}} \\
\midrule
\texttt{tiny-2ag}      & 15616 & 7493913  & 31232 & 14934862 & 78080 & 37382071 \\
\texttt{tiny-4ag}      & 15616 & 6492381  & 31232 & 13208433 & 78080 & 33110502 \\
\texttt{tiny-8ag}      & 15616 & 4704862  & 31232 & 9748756  & 78080 & 24647669 \\
\texttt{small-2ag}     & 15616 & 7511771  & 31232 & 15091627 & 78080 & 37504501 \\
\texttt{small-4ag}     & 15616 & 6611283  & 31232 & 13496720 & 78080 & 33733571 \\
\texttt{medium-2ag}    & 15616 & 7320030  & 31232 & 15062621 & 78080 & 37700791 \\
\texttt{medium-8ag}    & 15616 & 2502476  & 31232 & 5148947  & 78080 & 12747091 \\
\texttt{xlarge-8ag}    & 15616 & 5816385  & 31232 & 11008538 & 78080 & 29167762 \\
\bottomrule
\end{tabular}
}
\end{table}

\begin{table}[htbp]
    \centering
    \caption{
        \textbf{Connector dataset statistics.} We generated a separate dataset of approximately 10.24 million transitions for each of the ten training scenarios.
    }
    \label{tab:connector_dataset_stats}
    \begin{tabular}{@{}ll@{}}
        \toprule
        \textbf{Scenario Name} & \textbf{Total Timesteps} \\
        \midrule
        \texttt{12x12x4a}     & $\approx 10.24 \times 10^6$ \\
        \texttt{15x15x3a}     & $\approx 10.24 \times 10^6$ \\
        \texttt{18x18x4a}     & $\approx 10.24 \times 10^6$ \\
        \texttt{21x21x5a}     & $\approx 10.24 \times 10^6$ \\
        \texttt{24x24x6a}     & $\approx 10.24 \times 10^6$ \\
        \texttt{27x27x7a}     & $\approx 10.24 \times 10^6$ \\
        \texttt{30x30x10a}    & $\approx 10.24 \times 10^6$ \\
        \texttt{33x33x11a}    & $\approx 10.24 \times 10^6$ \\
        \texttt{36x36x12a}    & $\approx 10.24 \times 10^6$ \\
        \texttt{39x39x13a}    & $\approx 10.24 \times 10^6$ \\
        \bottomrule
    \end{tabular}
\end{table}

\begin{table}[htbp]
    \centering
    \caption{
        \textbf{SMAX dataset statistics.} We generated a separate dataset of approximately 49.5 million timesteps for each of the five training scenarios.
    }
    \label{tab:smax_dataset_stats}
    \begin{tabular}{@{}ll@{}}
        \toprule
        \textbf{Scenario Name} & \textbf{Total Timesteps} \\
        \midrule
        \texttt{3m}           & $\approx 49.5 \times 10^6$ \\
        \texttt{6m}           & $\approx 49.5 \times 10^6$ \\
        \texttt{5m vs 6m}   & $\approx 49.5 \times 10^6$ \\
        \texttt{10m}          & $\approx 49.5 \times 10^6$ \\
        \texttt{7m vs 8m}   & $\approx 49.5 \times 10^6$ \\
        \bottomrule
    \end{tabular}
\end{table}

\begin{table}[htbp]
    \centering
    We observe that performance on the training tasks remains high across all environments, even as the number of tasks increases
    \caption{
        \textbf{LBF dataset statistics.} We generated a separate dataset of approximately 4 million transitions for each of the 5 training scenarios.
    }
    \label{tab:connector_dataset_stats2}
    \begin{tabular}{@{}ll@{}}
        \toprule
        \textbf{Scenario Name} & \textbf{Total Timesteps} \\
        \midrule
        \texttt{8x8-2p-2f}     & $\approx 3.99 \times 10^6$ \\
        \texttt{10x10-3p-3f}     & $\approx 3.99 \times 10^6$ \\
        \texttt{15x15-3p-3f}     & $\approx 3.99 \times 10^6$ \\
        \texttt{15x15-4p-5f}     & $\approx 3.99 \times 10^6$\\
        \texttt{16x16-5p-6f}     & $\approx 3.99 \times 10^6$ \\
        \bottomrule
    \end{tabular}
\end{table}

\end{document}